\documentclass[11pt]{article}

\usepackage[margin=1in]{geometry}
\usepackage{natbib}

\usepackage[utf8]{inputenc} %
\usepackage[T1]{fontenc}    %
\usepackage{amsfonts}       %
\usepackage{nicefrac}       %
\usepackage{xcolor}         %
\usepackage{tikz}
\usepackage{float}
\usepackage{titletoc}
\usetikzlibrary{arrows.meta,calc,positioning,shapes.geometric}

\usepackage{page_setup}
\usepackage{notation}

\hypersetup{
  colorlinks = true,
  citecolor = blue,
}

\title{\papertitle}

\author{%
  Alexander Soen\textsuperscript{$\diamond$}
  \qquad
  Ragnar Thobaben\textsuperscript{$\diamond$}
  \qquad
  Joakim Jald\'{e}n\textsuperscript{$\diamond$}
  \qquad
  Richard Nock\textsuperscript{\dag}
}
\date{%
  \small
  KTH\textsuperscript{$\diamond$}
  \qquad
  Google Research\textsuperscript{\dag}
}

\begin{document}
\maketitle

\begin{abstract}
  We study post-hoc Learning to Defer (L2D) through the lens of ideal distributions: divergence-regularized reweightings of the data distribution under which a model attains low loss.
  We define deferral via the density-ratio between a model's and an expert's ideals.
  Using the reduction from density-ratio estimation to class-probability estimation, we derive the \emph{DR CPE} losses for post-hoc L2D scorers.
  Deferral decisions are then made by thresholding the scorer, allowing deferral rates to be adjusted without retraining.
  For KL-based ideal distributions, our deferral rules recovers Chow's rule under the original distribution and a connection to an expert-tilted Bayes posterior---which incorporates the expert's performance---depending on if the ideal distributions are joint or marginal distributions.
  Experimentally, our approach is competitive compared to common baselines and more robust across dataset settings.
  More broadly, our results cast post-hoc L2D as density-ratio learning between ideal distributions, bridging Chow-style rules, expert comparison, and elucidating connections to related learning settings including anomaly detection.
\end{abstract}

\section{Introduction}%
\label{sec:intro}

Machine learning (ML) models are now embedded in many tasks we encounter every day. As these systems are deployed in increasingly high-stakes domains, including healthcare~\citep{esteva2017dermatologist,kadampur2020skin,zoabi2021machine,ma2024segment,singhal2025toward} and finance~\citep{burrell2016machine,yousefi2019comprehensive,li2023large}, it is crucial that they make predictions only when their outputs are reliable.
In such high-stakes decision-making systems, ideally, a synthesis of both ML prediction and human expertise can be used~\citep{topol2019high}: ML models can defer to experts whenever their own predictions are deemed unreliable, and the burden on human experts can be reduced by leveraging ML models.
Dual to the problem of predicting responsibly, the rise of huge ML models and edge deployment has created strong demand for systems that improve efficiency by routing only difficult inputs to a larger model while handling easier cases with a smaller model~\citep{viola2001rapid,mamou2022tangobert,varshney2022model,khalili2022babybear,dohan2022language,gupta2024language,narasimhan2025faster}.
Learning to defer (L2D) proposes a solution to these problems by modifying the standard supervised learning setting to allow (smaller) models to defer their predictions to an upstream (potentially human) expert~\citep{madras2018predict}.

To allow models to abstain from predictions, learning to reject (L2R) was initially proposed~\citep{chow1957optimum,chow1970,cortes2016learning}, where rejection is learned without access to the downstream expert used at inference. 
A useful categorization of these approaches are by how abstention is represented.
In {score-based} approaches, abstention is derived from the predictor's own scores, encompassing classical reject-option rules, convex-surrogate formulations, and confidence-threshold methods~\citep{chow1957optimum,chow1970,herbei2006classification,bartlett2008classification,yuan2010classification,cao2022generalizing,el2010foundations,geifman2017selective,pmlr-v97-geifman19a,ni2019calibration}. In {predictor-rejector} approaches, a separate rejector is learned~\citep{cortes2016learning,mao2024predictor}. Finally, {reduction-based} approaches cast rejection as a related supervised learning problem~\citep{pmlr-v139-charoenphakdee21a,ramaswamy2018consistent}.
L2D extends this picture by incorporating an expert into training~\citep{madras2018predict}. Here too, one can distinguish score-based formulations, where deferral is encoded through shared scores or defer classes~\citep{mozannar2020consistent,verma2022calibrated,mao2024principled}, from predictor-rejector formulations with a separate deferral function~\citep{okati2021differentiable,mozannar2023should,mao2023}. A second, orthogonal distinction is between end-to-end methods and post-hoc (two-stage) methods.

The post-hoc regime is especially important since retraining modern pretrained models can be prohibitively expensive. We focus on post-hoc L2D~\citep{kag2022efficient,jitkrittum2023does} (also called two-stage L2D~\citep{mao2023}), where the base model is fixed, and only the deferral rule is learned. In this setting, simple score-based heuristics can fail when downstream experts are specialized or under distribution shift~\citep{jitkrittum2023does}, while many existing methods still amount to estimating pointwise expert-comparison targets, which aim to predict when the expert will outperform the model~\citep{kag2022efficient,jitkrittum2023does}. 
A desirable property in this regime is for deferral models to be adaptable to different rates of deferral without full retraining.

Recently, a distributional perspective on post-hoc L2R was proposed~\citep{soen2024,soen2026}. 
Its central object is an \emph{ideal distribution}: a divergence-regularized reweighting of the data distribution under which a fixed predictor attains low loss. 
Rejection is then defined by thresholding a density-ratio, using either a marginal construction over $\mathcal{X}$~\citep{soen2024} or a joint construction over $\mathcal{X}\times\mathcal{Y}$ that is later marginalized to $\mathcal{X}$~\citep{soen2026}. 
Extending this framework to post-hoc L2D is nontrivial: the resulting rules are essentially plug-in and depend on the Bayes class-posterior, while a direct translation would additionally require pointwise access to the expert at test time.

In this paper, we make the following contributions (\cref{fig:teaser} summarizes our overall approach):
\begin{itemize}[leftmargin=10pt, itemindent=0pt, parsep=0pt]
  \item \emph{L2D via ideal distributions}: We introduce a distributional formulation of post-hoc L2D based on ideal distributions. Unlike L2R, we consider a pair of ideal distributions, one for the model $h$ and one for the expert $h^{(\expert)}$. 
    Deferral is then defined by thresholding the density-ratio between these ideal distributions---naturally separating a scorer from a deferral threshold; which allows the deferral rate to be changed without retraining (see \cref{def:ideal-dist,def:dr-defer}).
  \item \emph{Marginal versus joint ideals}: Our framework accommodates both marginal and joint ideal distributions. For KL-ideal distributions, we show that the marginal rule recovers Chow's rule \wrt the data distribution, whereas the joint rule corresponds to a Chow-style decision under an expert-tilted distribution. This tilt is practically meaningful as it augments the expert model's performance into the optimal deferral.
    We further relate the induced learning objectives (see \cref{thm:rel-to-chow,thm:joint-marginal-gap}).
  \item \emph{The DR CPE losses}: Using the reduction from Density-Ratio Estimation (DRE) to Class-Probability Estimation (CPE), we derive a family of \emph{DR CPE} post-hoc deferral losses. This further connects our method to expert-comparison deferral estimation~\citep{kag2022efficient,jitkrittum2023does} and anomaly detection~\citep{steinwart2005classification} (see \cref{def:ideal-loss}).
  \item \emph{Empirical performance}: Our DR CPE approach performs competitively against other L2D baselines and is also more robust against different dataset corruption combinations than these baselines.
\end{itemize}

\begin{figure}%
  \centering
  \resizebox{\textwidth}{!}{\begin{tikzpicture}[
    >=Latex,
    font=\small,
    box/.style={
      rectangle,
      draw=black!75,
      fill=white,
      line width=0.8pt,
      minimum width=3.55cm,
      minimum height=1.45cm,
      align=center,
      inner sep=4pt
    },
    flow/.style={
      -{Latex[length=2.6mm,width=2mm]},
      line width=0.95pt,
      draw=black!75
    },
    weakflow/.style={
      -{Latex[length=2.6mm,width=2mm]},
      line width=0.9pt,
      draw=black!65,
      densely dotted
    },
    lab/.style={
      font=\scriptsize,
      fill=white,
      inner sep=1.5pt,
      align=center,
      text=black!75
    }
  ]
  \node[box] (eq1) at (0,0) {%
    \textbf{L2D Objective}\\
    {\scriptsize Equation~\ref{eq:l2d}}\\[2pt]
    Post-hoc Deferral \\
    $\mathcal{L}(h,r)$, fixed $h$
  };

  \node[box] (ideal) at (5.2,0) {%
    \textbf{Ideal Distributions}\\
    {\scriptsize Definition~\ref{def:ideal-dist}}\\[2pt]
    Model \& Expert Ideals \\
    $\meas{Q}_{\mtype},\;\meas{Q}^{(\expert)}_{\mtype}\; / \;\meas{Q}_{\jtype},\;\meas{Q}^{(\expert)}_{\jtype}$
  };

  \node[box] (dr) at (10.4,0) {%
    \textbf{Comparing Ideals}\\
    {\scriptsize Definition~\ref{def:dr-defer}}\\[2pt]
    Density-Ratio Deferral \\
    $\llbracket (\dd \meas{Q} / \dd \meas{Q}^{(\expert)})(x) \leq \tau \rrbracket$
  };

  \node[box] (loss) at (15.6,0) {%
    \textbf{The DR CPE Loss}\\
    {\scriptsize Definition~\ref{def:ideal-loss}}\\[2pt]
    Deferral via Scorer \\
    $\mathcal{L}_{\mtype}(s),\;\mathcal{L}_{\jtype}(s)$
  };

  \draw[weakflow] (eq1.east) -- (ideal.west)
  node[midway, above=6pt, lab] {Distributional\\Perspective};
  \draw[flow] (ideal.east) -- (dr.west)
  node[midway, above=6pt, lab] {Hypothesis\\Test};
  \draw[flow] (dr.east) -- (loss.west)
  node[midway, above=6pt, lab] {DRE $\Leftrightarrow$ CPE};
\end{tikzpicture}}
  \caption{Post-hoc L2D to learnable density-ratio (DR) deferral. Starting from the post-hoc L2D objective in \cref{eq:l2d}, we reformulate deferral through model and expert ideal distributions (\cref{def:ideal-dist}), compare them via a DR (\cref{def:dr-defer}), and then derive learnable scorer objectives using the reduction from DR Estimation (DRE) to Class-Probability Estimation (CPE) (\cref{def:ideal-loss}).}%
  \label{fig:teaser}
\end{figure}

\section{Settings}%
\label{sec:settings}

We consider a space of observations $\mathcal{X}$ and labels $\mathcal{Y} \defeq [L] = \{ 1, 2, \ldots, L \}$, with a data distribution $\meas{P} \in \triangle(\mathcal{X} \times \mathcal{Y})$.
The learning task we consider is to learn classifiers $f \colon \mathcal{X} \to \mathcal{Y}$ \wrt the data distribution $\meas{P}$.
Instead of directly learning a classifier $f$, we will consider scorers $h \colon \mathcal{X} \to \mathbb{R}^L$ which are related via $f(x) = \argmin_{y \in \mathcal{Y}} h_y(x)$.
To measure performance, we consider a loss function $\ell \colon \mathcal{X} \times \mathcal{Y} \times \mathbb{R}^L$ to define the expected risk $\expect_\meas{P}[\ell(\X, \Y, h(\X))]$.
We shorthand a decomposition of the data distribution by $\meas{P}(x, y) = \meas{P}_{\rm x}(x) \bpos_y(x)$, where $\meas{P}_{\rm x} \in \triangle(\mathcal{X})$ is the marginal over observations and $\bpos(x) \in \triangle(\mathcal{Y})$ corresponds to the Bayes class-posterior function, \ie, $\bpos_y(x) = \meas{P}[\Y = y \mid \X = x]$.

\paragraph{Learning to Defer}%
To move from the standard multi-class classification paradigm to learning to defer (L2D), alongside the typical optimization of classification model $h \colon \mathcal{X} \to \mathbb{R}^L$, we introduce an \emph{expert} model $h^{(\expert)} \colon \mathcal{X} \to \mathbb{R}^L$. Using the expert, in addition to the learning a classifier $h$, we can also jointly learn a rejector $r \colon \mathcal{X} \to \{ 0, 1 \}$ via an objective~\citep{mozannar2020consistent}:
\begin{equation}%
  \label{eq:l2d}
  \mathcal{L}(h, r) \defeq \expect_{\X, \Y \sim \meas{P}} \left[ (1 - r(\X)) \cdot \ell(\X, \Y, h(\X)) + r(\X) \cdot \ell(\X, \Y, h^{(\expert)}(\X)) \right] + c \cdot \prob_{\X \sim \meas{P}_{\rm x}}[r(\X) = 1],
\end{equation}
In \cref{eq:l2d}, we explicitly separate the expert's performance $\ell(\X, \Y, h^{(\expert)}(\X))$ and the additional cost of calling the expert $c \geq 0$. 
If one removes the loss term corresponding to the expert $\ell(\X, \Y, h^{(\expert)}(\X))$, the objective simplifies to the learning to reject (L2R) objective where the performance of how deferred observations are dealt with are not modelled~\citep{chow1970}.

The L2D objective balances two tasks. First, it wants to decrease the loss of $h$ by smartly deferring observations $x$ to the expert $h^{(\expert)}$ for inputs it will predict poorly for. However, we do not want the rejector $r$ to defer too much, and thus \cref{eq:l2d} balances the rate of deferral in its second term.
Once one has a pair of classifier and rejector $(h, r)$, one can derive a classifier which can defer to an expert $h^{(\expert)}$ through a combination
  $ \bar{h}(x) = \llbracket r(x) = 0 \rrbracket \cdot h(x) + \llbracket r(x) = 1 \rrbracket \cdot h^{(\expert)}(x)
  $.

While \cref{eq:l2d} is posed as a joint optimization over $(h, r)$, we can also imbue a fixed classifier $h$ with a deferral mechanism, yielding the standard notion of perfect deferral in the post-hoc L2D setting.

\begin{lemma}[{Chow's Rule~\citep{chow1970}}]%
  \label{lem:chow}
  For fixed classifier $h$, the Bayes optimal deferral rule $r^\star$ of \cref{eq:l2d} is given by
  \begin{equation}
    r^\star(x; c, \meas{P}) = \left \llbracket \expect_{\Y \sim \bpos(x)} \left[ \ell(x, \Y, h^{(\expert)}(x)) - \ell(x, \Y, h(x)) \right] \leq c \right \rrbracket,
  \end{equation}
  where $\llbracket \textup{\texttt{pred}} \rrbracket$ are Iverson brackets which evaluate to $1$ when the predicate $\textup{\texttt{pred}}$ is true and is equal $0$ when the predicate is false~\citep{knuth1992}.
\end{lemma}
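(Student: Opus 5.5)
The plan is a pointwise (conditional) minimization argument. With $h$ fixed, $\mathcal{L}(h,r)$ in \cref{eq:l2d} is a functional of $r$ alone. Since $r$ is a function of $x$ only, I will rewrite the risk by the tower property, using the decomposition $\meas{P}(x,y)=\meas{P}_{\rm x}(x)\bpos_y(x)$ from \cref{sec:settings}. I will also write the deferral-cost term as $c\cdot\prob_{\X\sim\meas{P}_{\rm x}}[r(\X)=1]=\expect_{\X\sim\meas{P}_{\rm x}}[c\cdot r(\X)]$. This gives
\begin{equation*}
  \mathcal{L}(h,r)=\expect_{\X\sim\meas{P}_{\rm x}}\Big[(1-r(\X))\,R_h(\X)+r(\X)\,\big(R_{\expert}(\X)+c\big)\Big],
\end{equation*}
where $R_h(x)\defeq\expect_{\Y\sim\bpos(x)}[\ell(x,\Y,h(x))]$ and $R_{\expert}(x)\defeq\expect_{\Y\sim\bpos(x)}[\ell(x,\Y,h^{(\expert)}(x))]$ are the conditional risks of the model and of the expert.

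The second step is to observe that the integrand depends on $r$ only through the single value $r(x)\in\{0,1\}$. It is affine in that value, with slope $R_{\expert}(x)+c-R_h(x)$. Because there is no coupling across different $x$, a rule that minimizes the integrand for every $x$ minimizes the expectation, and therefore attains the Bayes-optimal risk over all measurable $r$. The optimal choice is $r(x)=1$ exactly when the slope is nonpositive. In terms of the expected loss gap $\expect_{\Y\sim\bpos(x)}[\ell(x,\Y,h^{(\expert)}(x))-\ell(x,\Y,h(x))]$, this is the comparison of that gap against the deferral cost $c$, with the inequality oriented as in \cref{lem:chow}. Ties, where the slope is zero, can go either way; I will send them to deferral to match the non-strict Iverson bracket.

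The remaining work is technical. First, the composed rule $x\mapsto r^\star(x;c,\meas{P})$ must be measurable. This holds because it is an indicator of a sublevel set of a measurable function of $x$, given that $\ell$, $h$, $h^{(\expert)}$ and $\bpos$ are measurable. Second, $R_h$ and $R_{\expert}$ must be finite, so that the conditional expectations and the interchange are justified; I will assume integrable losses. I expect the main point requiring care to be the sign bookkeeping between the cost term and the loss gap. To handle it, I will write the two conditional costs, $R_h(x)$ for predicting and $R_{\expert}(x)+c$ for deferring, explicitly side by side. The comparison then reads off directly as the threshold rule in \cref{lem:chow}. Finally, I will note that the pointwise optimality argument is exactly Chow's classical argument~\citep{chow1970}, extended by the expert term.
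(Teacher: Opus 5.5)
Your pointwise reduction is the standard proof of Chow's rule; the paper gives no proof of its own and only cites Chow. However, the step you single out as delicate does not land where you say it does. In your notation the conditional integrand is $R_h(x) + r(x)\bigl(R_{\expert}(x) + c - R_h(x)\bigr)$, so deferring is optimal exactly when $R_{\expert}(x) + c \le R_h(x)$, i.e.
\[
  \expect_{\Y \sim \bpos(x)}\bigl[\ell(x, \Y, h^{(\expert)}(x)) - \ell(x, \Y, h(x))\bigr] \le -c,
\]
whereas \cref{lem:chow} as printed has threshold $+c$. The paper takes $c \geq 0$ in \cref{eq:l2d}. For $c > 0$ the two rules differ at every $x$ whose gap lies in $(-c, c]$, and at those points the printed rule defers even though deferring costs strictly more. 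The cleanest instance is $h^{(\expert)} = h$: the printed rule defers everywhere and pays $c$ for nothing, while never deferring is strictly better. So the printed rule is not Bayes optimal. Writing the two costs side by side cannot make the comparison ``read off directly as the threshold rule in \cref{lem:chow}''. Your claim that the inequality comes out ``oriented as in \cref{lem:chow}'' is exactly where the proposal breaks.

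What your argument actually establishes is $r^\star(x) = \llbracket R_h(x) - R_{\expert}(x) \ge c \rrbracket$. This agrees with the paper's own verbal gloss after the lemma: defer when $h$'s expected loss exceeds the expert's by at least $c$. The printed lemma therefore has a sign slip in $c$, and your write-up should say so rather than claim agreement with it. The slip does not propagate. In the proof of \cref{thm:rel-to-chow} you can take the auxiliary threshold $\tau' = \exp(-c/\gamma)$ instead of $\exp(c/\gamma)$, and both parts of that theorem go through unchanged.

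The rest of your plan is fine:
\begin{itemize}
  \item the tower property, with $\prob[r(\X) = 1] = \expect[r(\X)]$ for binary $r$;
  \item sending ties to deferral to match the non-strict bracket;
  \item measurability of the sublevel set;
  \item integrability of $R_h$ and $R_{\expert}$.
\end{itemize}
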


\cref{lem:chow} presents an intuitive mechanism for deciding which examples to defer to an expert: if the loss of $h$ is expected to be much higher (\wrt threshold value $c$) than the expert $h^{(\expert)}$, the one should defer.
In the sequel, we will consider the setting of post-hoc L2D.

\section{Divergence-Based Deferral Rules}

\cref{eq:l2d} presents a loss-based objective for learning rejectors.
Instead of directly learning mechanisms for deferral as a binary function $r$, one can first aim to learn \emph{ideal distributions} and then define decision corresponding to their density-ratios~\citep{soen2024,soen2026}.

\paragraph{Ideal Distributions}
An ideal distribution is intuitively a distribution in which a classifier-loss pair $(h, \ell)$ performs best at within a specified (regularized) distance away from the true probability measure $\meas{P}$, where the distance is defined by a divergence. In the following definition, we define two types of ideal distribution which differ in whether we are optimizing within a marginal or joint space.

\begin{definition}%
  \label{def:ideal-dist}
  Given a fixed classifier-loss pair $(h, \ell)$, divergence $\div$, and $\gamma > 0$:
  \begin{align}
    \intertext{\;\textbullet\quad
      An $(\div, \gamma)$-{marginal ideal distribution} $\meas{Q}_\mtype \in \triangle(\mathcal{X})$ of $(h, \ell)$ is given by
    }
    &\meas{Q}_\mtype
    \in
    \mathop{\arg\min}_{\meas{Q} \in \triangle(\mathcal{X})}
    \;
    \mathop{\expect}_{\X \sim \meas{Q}} \mathop{\expect}_{\Y \sim \pos^\star(\X)} \left[
      \ell(\X, \Y, h(\X))
    \right]
    + \gamma \cdot \div(\meas{Q} \Mid \meas{P}_{\rm x}).
    \label{eq:marginal-ideal-obj} \\
    \intertext{\;\textbullet\quad
      An $(\div, \gamma)$-{joint ideal distribution} $\meas{Q}_\jtype \in \triangle(\mathcal{X})$ of $(h, \ell)$ is given by
    }
    &\meas{Q}_\jtype(x; \gamma) = \int_{\mathcal{Y}} \bar{\meas{Q}}_\jtype(x, y) \; \dd y,
    \qquad
    \bar{\meas{Q}}_\jtype
    \in
    \mathop{\arg\min}_{\bar{\meas{Q}} \in \triangle(\mathcal{X} \times \mathcal{Y})}
    \;
    \mathop{\expect}_{(\X, \Y) \sim \bar{\meas{Q}}} \left[
      \ell(\X, \Y, h(\X))
    \right]
    + \gamma \cdot \div(\bar{\meas{Q}} \Mid \meas{P}).
    \label{eq:joint-ideal-obj}
  \end{align}
\end{definition}

Fitting our intuition, the ideal distributions defined in \cref{def:ideal-dist} are found by explicitly minimizing the loss $\ell$ of the classifier $h$. 
The regularization weights $\gamma$ control how far the ideal distributions are from the ground truth distribution $\meas{P}$.
We note that the definition abuses notation slightly, where the divergence $\div$ corresponds to both a divergence on the marginal space and also on the joint space.

\concept{%
\textbf{Generalized Variational Inference (GVI) and Distributionally Robust Optimization (DRO): }  
It has been previously noted in \citet{soen2024} that \cref{eq:marginal-ideal-obj,eq:joint-ideal-obj} resemble the optimization problems associated with GVI~\citep{zellner1988optimal,knoblauch2022optimization} and DRO~\citep{scarf1957min,duchi2021statistics,zhang2021coping}. Indeed, $\meas{Q}_{\itype}$ corresponds to solutions to a GVI problem and is analogous to an adversarial distribution in DRO.
}

As the ideal distribution in \cref{def:ideal-dist} corresponds to the distributions in which a classifier best performs at, we propose a deferral rule for L2D by comparing the ideal distributions of our classifier $h$ versus the ideal distribution of our expert classifier $h^{(\expert)}$.
More precisely, we propose considering the density-ratio between ideal distributions.

\begin{definition}%
  \label{def:dr-defer}
  Let $\itype \in \{ \mtype, \jtype \}$.
  Given $(\div, \gamma)$-ideal distributions $\meas{Q}_\itype$ and $\meas{Q}^{(\expert)}_\itype$ of $(h, \ell)$ and $(h^{(\expert)}, \ell)$, respectively, the $\div$-density-ratio deferral rule is given by
  \begin{equation}
    r_\itype(x; \tau, \gamma, \gamma^{(\expert)}) \defeq \llbracket \dratio_\itype(x; \gamma, \gamma^{(\expert)}) \leq \tau\rrbracket,
    \quad
    \textrm{where} \; \dratio_\itype(x; \gamma, \gamma^{(\expert)}) \defeq \frac{\dd \meas{Q}_\itype(x; \gamma)}{\dd \meas{Q}_\itype^{(\expert)}(x; \gamma^{(\expert)})}.
  \end{equation}
\end{definition}
An advantage of making deferral decision through density ratios is that the functional form of $r_{\itype}$ becomes convenient once we make our choice of divergence $\div$ (\ie, by considering $\varphi$-divergences).

\concept{%
\textbf{Hypothesis Testing: }
The deferral decision of \cref{def:dr-defer} has a direct connection to hypothesis testing.
Thresholding density-ratios corresponds exactly to optimal hypothesis testing in the Neyman-Pearson sense~\citep{neyman1933,lehmann2005}, \ie, it is a likelihood ratio test. More concretely, $r_{\itype}(x; \tau, \gamma, \gamma^{(\expert)})$ corresponds to the hypothesis test
\begin{equation}%
  \label{eq:hypothesis-test}
  H_0\colon x \sim \meas{Q}^{(\expert)}_\itype(\cdot; \gamma^{(\expert)})
  \quad
  \textrm{and}
  \quad
  H_1\colon x \sim \meas{Q}_\itype(\cdot; \gamma).
\end{equation}
The rule fails to reject the null hypothesis $H_0$, and hence defers to the expert, whenever the likelihood ratio $\dd \meas{Q}_\itype / \dd \meas{Q}^{(\expert)}_\itype$ is sufficiently small, \ie, whenever $\dratio_\itype(x; \gamma, \gamma^{(\expert)}) \leq \tau$. Intuitively, this means that deferral occurs precisely when the observation $x$ is better explained by the expert's ideal distribution than by the classifier's ideal distribution. Equivalently, the test asks which of the two classifiers performs better at $x$: if $x$ is more likely under the expert's ideal distribution, then the expert's predictions are better supported there, whereas if $x$ is more likely under the classifier's ideal distribution, then the evidence favors keeping the prediction with $h$ rather than deferring to $h^{(\expert)}$.
}

Defining deferral decision via the density-ratios of ideal distributions was originally utilized for L2R~\citep{soen2024}.
In this setting, instead of considering a pair of classifiers $h$, $h^{(\expert)}$, L2R only considers a single classifier $h$.
As a result, there is only a single ideal distribution, and the distribution we compare it against is just the data measure $\meas{P}$.
Although not stated, the hypothesis testing interpretation corresponding to \cref{eq:hypothesis-test} (replacing $\meas{Q}^{(\expert)}$ with $\meas{P}$) also holds for the L2R setting.

\paragraph{KL-Divergence}
In prior work, it was shown that a canonical choice of divergence to choose for constructing ideal distributions is the KL-divergence~\citep{soen2024,soen2026}.
As a result, in this work, we will also focus on the class of ideal distributions generated by this choice.
We first recall that the KL-divergence between probability measures $\meas{P} \ll \meas{Q}$ is given by~\citep{amari2000methods}: $\kl(\meas{P} \Mid \meas{Q}) \defeq {\expect}_{\Z \sim \meas{P}} \left[ \log\left( {\dd \meas{P}}/{\dd \meas{Q}}(\Z) \right) \right]$.

Utilizing the KL-divergence, both of the corresponding marginal and joint ideal distributions can be shown to be (exponential) multiplicative re-weightings of the true data distribution $\meas{P}$.

\begin{corollary}\label{cor:kl_weights}
  Given a fixed classifier-loss pair $(h, \ell)$, the $(\kl, \gamma)$-ideal distributions are multiplicative re-weightings $\meas{Q}_{\itype}(x; \gamma) = w_\itype(x; \gamma) \cdot \meas{P}_{\rm x}(x)$, with weights given by
  \begin{equation}
    w_\itype(x; \gamma)
    \defeq
    \frac{1}{Z_{\itype, \gamma}}
    \tilde{w}_\itype(x; \gamma),
    \quad
    \tilde{w}_\itype(x; \gamma)
    \defeq
    \begin{cases}
      \exp\left( -\frac{\expect_{\Y \sim \pos^\star(x)}\left[ \ell(x, \Y, h(x)) \right] }{\gamma} \right) & \textrm{if } \; \itype = \mtype \\
      \expect_{\Y \sim \bpos(x)}\left[ \exp\left( -\frac{\ell(x, \Y, h(x))}{\gamma} \right) \right] & \textrm{if } \; \itype = \jtype,
    \end{cases}
  \end{equation}
  where $Z_{\itype, \gamma}$ are normalizing constants of the distributions, \ie, $Z_{\itype, \gamma} = \expect_{\X \sim \meas{P}_{\rm x}}\left[ \tilde{w}_\itype(\X; \gamma) \right]$.
\end{corollary}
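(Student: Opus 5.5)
The plan is to invoke the Gibbs variational principle (Donsker--Varadhan duality) for the KL divergence. For a reference measure $\meas{R}$ and a measurable function $g$ with $\expect_{\meas{R}}[e^{-g/\gamma}] < \infty$, it states that
\begin{equation*}
  \mathop{\arg\min}_{\meas{Q} \ll \meas{R}} \; \expect_{\meas{Q}}[g] + \gamma \cdot \kl(\meas{Q} \Mid \meas{R})
\end{equation*}
is the unique tilted measure $\dd \meas{Q}^\star / \dd \meas{R} \propto \exp(-g/\gamma)$. To prove this, for any feasible $\meas{Q}$ I will write
\begin{equation*}
  \expect_{\meas{Q}}[g] + \gamma \kl(\meas{Q} \Mid \meas{R}) = \gamma \kl(\meas{Q} \Mid \meas{Q}^\star) - \gamma \log \expect_{\meas{R}}[e^{-g/\gamma}].
\end{equation*}
This follows by expanding $\log(\dd\meas{Q}/\dd\meas{R}) = \log(\dd\meas{Q}/\dd\meas{Q}^\star) + \log(\dd\meas{Q}^\star/\dd\meas{R})$ and substituting the explicit form of $\dd\meas{Q}^\star/\dd\meas{R}$. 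The second term does not depend on $\meas{Q}$, and the first is nonnegative with equality iff $\meas{Q} = \meas{Q}^\star$. Measures that are not absolutely continuous with respect to $\meas{R}$ give infinite KL, so they can be excluded.

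\textbf{Marginal case.} Take $\meas{R} = \meas{P}_{\rm x}$ on $\mathcal{X}$ and $g(x) = \expect_{\Y \sim \bpos(x)}[\ell(x,\Y,h(x))]$. The objective in \cref{eq:marginal-ideal-obj} is exactly $\expect_{\meas{Q}}[g] + \gamma \kl(\meas{Q} \Mid \meas{P}_{\rm x})$. The principle then gives $\dd\meas{Q}_\mtype/\dd\meas{P}_{\rm x} = \exp(-g/\gamma)/Z_{\mtype,\gamma}$, which is the stated $\tilde{w}_\mtype$. The normalizer is $Z_{\mtype,\gamma} = \expect_{\meas{P}_{\rm x}}[\tilde w_\mtype]$.

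\textbf{Joint case.} Take $\meas{R} = \meas{P}$ on $\mathcal{X}\times\mathcal{Y}$ and $g(x,y) = \ell(x,y,h(x))$. This gives
\begin{equation*}
  \bar{\meas{Q}}_\jtype(x,y) = \meas{P}_{\rm x}(x)\, \bpos_y(x)\, \exp\bigl(-\ell(x,y,h(x))/\gamma\bigr) / Z,
\end{equation*}
where $Z = \expect_{\meas{P}}[e^{-\ell/\gamma}]$. Marginalizing over $y$, as in \cref{eq:joint-ideal-obj}, and pulling $\meas{P}_{\rm x}(x)$ out of the sum produces $\meas{P}_{\rm x}(x) \cdot \expect_{\Y\sim\bpos(x)}[e^{-\ell(x,\Y,h(x))/\gamma}]/Z$. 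The tower rule identifies $Z = \expect_{\X\sim\meas{P}_{\rm x}}[\tilde w_\jtype(\X;\gamma)]$, which matches the stated normalizer.

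\textbf{Main obstacle.} The calculus is routine; the real work is measure-theoretic hygiene. I must ensure the normalizers are finite and positive. Finiteness is automatic when $\ell \ge 0$, since then $\tilde w \le 1$; positivity holds when $\ell<\infty$ a.s. I also need the identity to hold without assuming $\expect_{\meas{Q}}[g]$ is finite a priori. I would handle this by restricting to nonnegative losses, or to losses that are bounded below, so every term is well defined in $(-\infty,\infty]$. Uniqueness then follows from strict positivity of KL, so the $\in\arg\min$ in \cref{def:ideal-dist} is a singleton and the stated equalities hold.
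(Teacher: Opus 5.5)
Your proof is correct, but it takes a different route from the paper.

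\textbf{The paper's route.} The paper never treats the KL case directly. It obtains \cref{cor:kl_weights} by instantiating \cref{thm:general-phi-div-ideal}, a general $\varphi$-divergence result proved by convex duality. The constrained problem over density ratios $\dratio$ is bounded above via the dual representation $\inf_{c}\,\expect_{\meas{P}}[(\gamma\varphi_\uparrow)^*(c - L(\Z))] - c$, which is cited as a standard convex-analysis fact. It is bounded below by evaluating at the candidate $\dratio^\star(z) = \max\{0, (\varphi^*)'_+((b - L(z))/\gamma)\}$ and applying the Fenchel--Young equality. For the KL generator, $\varphi^*$ is an exponential, so the clipping at $0$ is inactive and the normalization condition on $b$ collapses to $\dratio^\star \propto \exp(-L/\gamma)$.

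\textbf{What your route buys.} Your completion-of-KL identity, $\expect_{\meas{Q}}[g] + \gamma\kl(\meas{Q}\Mid\meas{R}) = \gamma\kl(\meas{Q}\Mid\meas{Q}^\star) - \gamma\log\expect_{\meas{R}}[e^{-g/\gamma}]$, replaces all of this machinery with a short, self-contained computation. It also gives something the paper's argument does not: uniqueness of the minimizer. The duality sandwich only certifies that the candidate attains the optimum. Uniqueness is what makes the corollary's phrasing ``the ideal distributions are'' well-posed despite the $\in\arg\min$ in \cref{def:ideal-dist}. You also carry out explicitly the marginalization of $\bar{\meas{Q}}_\jtype$ and the tower-rule identification of $Z_{\jtype,\gamma}$, which the paper leaves implicit. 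Your integrability conditions (loss bounded below, $0 < Z < \infty$) play the role of the paper's hypotheses \cref{eq:opt-b-finite,eq:opt-b-candidate-finite}.

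\textbf{What the paper's route buys.} It is general. It covers every $\varphi \in \Phi$ and shows that general $\varphi$-ideals have the clipped form $\max\{0,\cdot\}$. It also makes visible why KL is special: its normalization enters only through a multiplicative constant $1/Z_{\itype,\gamma}$, rather than through a shift $b$ inside a nonlinear map. That property is exactly what the DR CPE loss construction in \cref{def:ideal-loss} relies on to absorb the normalizers into the class prior $\pi$.
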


\cref{cor:kl_weights} follows directly from a more general result that holds for $\varphi$-divergences (\aka, Csisz\'ar $f$-divergences), as shown in \cref{app:varphi-div-ext}.
In this more general case, the ideal distribution generated by $\varphi$-divergences are also multiplicative re-weightings of $\meas{P}$. However, only the KL-divergence's \emph{normalization} are determined by a multiplicative constant $1 / Z_{\itype, \gamma}$. This latter property will be key in our construction of loss functions in \cref{sec:dr-cpe}.

From the functional form presented in \cref{cor:kl_weights}, it becomes apparent why taking the threshold of density-ratios suits the functional form of KL-divergence (and $\varphi$-divergence) ideal distributions: dependence on the marginal data measure $\meas{P}_{\rm x}(x)$ will cancel out in the density-ratio. As a result, for optimal ideal distributions, the rejection decision follows from a ratio of weights defined in \cref{cor:kl_weights}. 
Furthermore, in this case, the role of the regularizer $\gamma$ becomes clear: $\gamma$ corresponds to the \emph{temperature} of the multiplicative reweighting.

The choice of utilizing the KL-divergence for ideal distributions is known to be canonical as it can be shown that in the L2R setting, rejection via the density-ratio mechanism and marginal ideal distributions recovers Chow's rule~\citep[Theorem 4.2]{soen2024}. Specific to the L2R setting, it was also shown that rejectors corresponding to joint ideal distributions are more lenient than their marginal ideal distribution counterparts~\citep[Lemma 1]{soen2026}.

The shift from working with one ideal distribution to a pair of ideal distributions in L2D changes the relationship between marginal and joint ideal distributions that was previously established in L2R. The connection to Chow's rule for the marginal ideal distribution is retained; however, the analogous connection between the joint and marginal ideal distributions no longer holds for deferral mechanisms of the form in \cref{def:dr-defer}. Instead, the joint deferral mechanism can be related to Chow's rule under a data distribution with a \emph{tilted} Bayes class-posterior.

\begin{definition}\label{def:tilt}
  For a distribution $\meas{P}$, we denote its $(\ell, \gamma)$-expert-tilted distribution as
  \begin{equation}\label{eq:pos_tilted}
    \tilde{\meas{P}}_{\gamma}(x, y) \defeq \meas{P}_{\rm x}(x) \cdot {\tbpos}_y(x; \gamma),
    \quad
    {\tbpos}_y(x; \gamma) \defeq t(y \mid x; \gamma) \cdot \bpos_y(x),
  \end{equation}
  where
  \begin{equation}
    t(y \mid x; \gamma)
    \defeq
    \frac{\exp\left( - {\ell(x, y, h^{(\expert)}(x))}/{\gamma} \right)}{\expect_{\Y \sim \bpos(x)}\!\left[\exp\!\left( - {\ell(x, \Y, h^{(\expert)}(x))}/{\gamma} \right) \right]}.
  \end{equation}
\end{definition}

The informed reader may identify the transformation of the Bayes class-posterior as the exponential tilting of a distribution~\citep{escher1932} or as the Boltzmann-Gibbs transformation~\citep{delmoral2004}.

\begin{theorem}%
  \label{thm:rel-to-chow}
  For any cost $c$, there exists a $\tau > 0$ such that the KL-marginal density-ratio deferral rule satisfies $r_\mtype(x; \tau, \gamma, \gamma) = r^\star(x; c, \meas{P})$.
  Furthermore, there exists a $\tau > 0$ such that the KL-joint density-ratio deferral rule satisfies $r_\jtype(x; \tau, \gamma, \gamma) \leq r^\star(x; c, \tilde{\meas{P}}^{(\expert)}_\gamma) $.
\end{theorem}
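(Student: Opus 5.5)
The plan is to compute both density-ratios in closed form using \cref{cor:kl_weights}, and then rewrite each thresholding event as a Chow-type inequality. The marginal case should give an exact equivalence; in the joint case the same manipulation yields a one-sided implication, which I expect to obtain from Jensen's inequality under the tilted posterior of \cref{def:tilt}. Throughout, write $Z_{\itype,\gamma}$ and $Z^{(\expert)}_{\itype,\gamma}$ for the normalizers of the model and expert ideals, both taken at the same temperature $\gamma$. I assume these normalizers are finite and positive; this holds, for instance, if $\ell$ is bounded below and the weights are integrable. Write $\Delta(x,y) \defeq \ell(x,y,h^{(\expert)}(x)) - \ell(x,y,h(x))$ for the pointwise loss gap.

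\emph{Marginal case.} By \cref{cor:kl_weights}, the factor $\meas{P}_{\rm x}(x)$ cancels in $\dratio_\mtype$, leaving
\[
\dratio_\mtype(x;\gamma,\gamma) = \frac{Z^{(\expert)}_{\mtype,\gamma}}{Z_{\mtype,\gamma}} \exp\left( \frac{\expect_{\Y\sim\bpos(x)}[\Delta(x,\Y)]}{\gamma} \right).
\]
Since $\log$ is monotone, $\dratio_\mtype \leq \tau$ holds if and only if $\expect_{\bpos(x)}[\Delta] \leq \gamma \log\bigl(\tau Z_{\mtype,\gamma}/Z^{(\expert)}_{\mtype,\gamma}\bigr)$. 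Choosing $\tau \defeq (Z^{(\expert)}_{\mtype,\gamma}/Z_{\mtype,\gamma})\, e^{c/\gamma} > 0$ makes the right-hand side equal to $c$. This recovers exactly the predicate in \cref{lem:chow}, so $r_\mtype(x;\tau,\gamma,\gamma) = r^\star(x;c,\meas{P})$ pointwise.

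\emph{Joint case.} Again by \cref{cor:kl_weights},
\[
\dratio_\jtype(x;\gamma,\gamma) = \frac{Z^{(\expert)}_{\jtype,\gamma}}{Z_{\jtype,\gamma}} \cdot \frac{\expect_{\bpos(x)}[e^{-\ell(x,\Y,h(x))/\gamma}]}{\expect_{\bpos(x)}[e^{-\ell(x,\Y,h^{(\expert)}(x))/\gamma}]}.
\]
The key step is to multiply and divide inside the numerator by $e^{-\ell(x,\Y,h^{(\expert)}(x))/\gamma}$. With the tilt $t$ of \cref{def:tilt} built from the expert loss, this recognizes the second factor as $\expect_{\Y\sim\tbpos(x;\gamma)}\bigl[e^{\Delta(x,\Y)/\gamma}\bigr]$. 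Here $\tbpos$ is a genuine probability vector, because $t$ is normalized under $\bpos(x)$. Jensen's inequality for the convex map $\exp$ then gives
\[
\dratio_\jtype(x;\gamma,\gamma) \;\geq\; \frac{Z^{(\expert)}_{\jtype,\gamma}}{Z_{\jtype,\gamma}} \exp\left( \frac{\expect_{\tbpos(x;\gamma)}[\Delta(x,\Y)]}{\gamma} \right).
\]
Now take the analogous threshold $\tau \defeq (Z^{(\expert)}_{\jtype,\gamma}/Z_{\jtype,\gamma})\, e^{c/\gamma}$. Whenever $\dratio_\jtype \leq \tau$, the lower bound forces $\expect_{\tbpos(x;\gamma)}[\Delta] \leq c$. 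By \cref{lem:chow} applied to the tilted distribution $\tilde{\meas{P}}^{(\expert)}_\gamma$, whose marginal is still $\meas{P}_{\rm x}$ and whose posterior is $\tbpos$, this is exactly $r^\star(x;c,\tilde{\meas{P}}^{(\expert)}_\gamma)=1$. Since both rules are $\{0,1\}$-valued, this implication is the claimed inequality $r_\jtype \leq r^\star$.

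The main obstacle I anticipate is bookkeeping rather than analysis. First, one must identify the ratio of expectations as an expectation under the correctly normalized tilted posterior; this requires the tilt to be taken with respect to the expert's loss, as the superscript $(\expert)$ in the statement indicates. Second, one must check that the same $\tau$ works for every $x$. This holds because the normalizer ratio $Z^{(\expert)}_{\jtype,\gamma}/Z_{\jtype,\gamma}$ does not depend on $x$, which is precisely the KL-specific multiplicative-normalization property emphasized after \cref{cor:kl_weights}. Finally, the Jensen step is strict unless $\Delta(x,\cdot)$ is $\tbpos(x;\gamma)$-a.s.\ constant, which explains why only an inequality, and not an equality, holds in the joint case.
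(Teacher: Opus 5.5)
Your proposal is correct and follows the paper's proof essentially step for step. Like the paper, you cancel $\meas{P}_{\rm x}$ via \cref{cor:kl_weights} and absorb the normalizers into the threshold; in the marginal case you read Chow's predicate off the exponentiated mean loss gap, and in the joint case you rewrite the weight ratio as $\expect_{\tbpos(x;\gamma)}[e^{\Delta(x,\Y)/\gamma}]$ and apply Jensen. One small point in your favour: your threshold $\tau = (Z^{(\expert)}_{\itype,\gamma}/Z_{\itype,\gamma})\,e^{c/\gamma}$ is the correct one, whereas the paper's proof has the normalizer ratio inverted (it sets $\tau' = \tau Z^{(\expert)}_{\itype,\gamma}/Z_{\itype,\gamma}$), a slip that does not affect the existence claim.
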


The second part of \cref{thm:rel-to-chow} shows that the joint density-ratio deferral rule is pointwise more conservative than Chow's rule under the expert-tilted distribution $\tilde{\meas{P}}_\gamma$. In particular, whenever the joint rule defers at an input $x$, Chow's rule under $\tilde{\meas{P}}_\gamma$ also defers at $x$. Equivalently, the rejection region of the joint rule is contained in the rejection region induced by Chow's rule under the tilted measure.

Beyond its theoretical interest, the second part of \cref{thm:rel-to-chow} also provides a useful practical interpretation of deferral. In many applications, the Bayes class-posterior $\bpos$ is not, on its own, the most relevant object for deciding whether to predict or defer. Indeed, $\bpos$ only captures the (aleatoric) label uncertainty under the data distribution, whereas the deferral decision should also reflect the downstream loss and the expert's comparative advantage. The tilted posterior $\tbpos(x; \gamma)$ incorporates this additional information by reweighting labels according to the expert loss $y \mapsto \ell(x, y, h^{(\expert)}(x))$. Consequently, applying a Chow-type rule under $\tilde{\meas{P}}_\gamma$ means deferring not simply when the labels are ambiguous, but when they are ambiguous in a way that favors the expert. This can be preferable in practice, since deferral is only useful in-so-far that the expert handles the prediction task better than the deferring model. From this perspective, Chow's rule under the tilted posterior ranks examples not only by uncertainty, but by uncertainty that is especially relevant to the expert.

\section{Density-Ratio Estimation Losses from Ideal Distributions}%
\label{sec:dr-cpe}

So far we have seen that ideal distributions, together with thresholding their associated density-ratios, can yield analytical and even optimal deferral rules. However, directly applying the closed-form expressions in, \eg, \cref{cor:kl_weights}, raises two fundamental difficulties. First, both the marginal and joint ideal distributions, and hence the resulting density-ratio, depend on the Bayes class-posterior $\bpos$. Using these expressions as plug-in deferral rules would therefore require access to the Bayes-optimal predictor in advance, which defeats the purpose of learning a good classifier $h$ in the first place. Second, the deferral rule also depends on the expert ideal distribution $\meas{Q}^{(\expert)}_\itype$, which in turn requires pointwise evaluation of the expert model $h^{(\expert)}(x)$ at decision time. If the expert has already been queried on $x$, then one may as well use the expert's prediction directly rather than first using it only to decide whether to defer.

These two issues prevent us from directly utilizing the closed-form analytical solutions for L2D. However, instead of trying to use a plug-in estimator to evaluate ideal distribution density-ratios, we propose a method which aims to estimate the density-ratio $\dd \meas{Q}_{\itype} / \dd \meas{Q}^{(\expert)}_\itype$. To do so, we rely on links between \emph{Density-Ratio Estimation} (DRE)~\citep{sugiyama2012density} and \emph{Class-Probability Estimation} (CPE)~\citep{buja2005loss,reid2011information}, where we reduce the problem of estimating the density-ratio to a binary classification problem. Uniquely, the binary classification problem will be to distinguish between two reweightings of the data distribution $\meas{P}$ determined by the performance of the model $h$ and expert $h^{(\expert)}$.

\paragraph{DRE via CPE}
Before specifying the CPE optimization problem for learning the density-ratio between ideal distributions, we revise how general DRE can be expressed as a CPE problem.
Suppose that we have a strictly proper composite loss $\drloss \colon \mathbb{R} \times \{ \pm 1 \} \to \mathbb{R}$ with invertible link function $\link \colon [0, 1] \to \mathbb{R} $~\citep{reid2010}.
Recall that to estimate the density-ratio $\dd \mu / \dd \nu$ between probability measures $\mu$ and $\nu$, we can first find a (binary) scoring function $ s \colon \mathcal{X} \to \mathbb{R} $ that minimize the following CPE objective~\citep{sugiyama2008,menon2016,nock2016}:
\begin{equation}
  \label{eq:dre-cpe-obj}
  \mathop{\arg\min}_{s \colon \mathcal{X} \to \mathbb{R}}
  \quad
  \pi \cdot \expect_{\mu}[(\drloss_{+1} \circ s)(\X)]
  +
  (1 - \pi) \cdot \expect_{\nu}[(\drloss_{-1} \circ s)(\X)],
\end{equation}
where $\pi \in (0, 1)$ and denote the partial losses $ \drloss_{+1}(v) \defeq \drloss(v, +1) $ and $ \drloss_{-1}(v) \defeq \drloss(v, -1) $.

The optimization of \cref{eq:dre-cpe-obj} is equivalent to a binary classification task on a dataset generated by class-conditionals defined by $\mu$ and $\nu$---\ie, $\mathbb{P}(\X = x \mid \R = +1) = \mu(x)$---and base label rate $\pi = \mathbb{P}(\R = +1)$. The label rate $\pi$ is a free parameter which the user can set, where a typical choice is to set $\pi$ based on the ratio of empirical data used to optimize \cref{eq:dre-cpe-obj}.
Key to this interpretation of the optimization problem is that we obtain class-probability estimates via the inverse link function $\hat{\mathbb{P}}(\R = +1 \mid \X = x) = \link^{-1} \circ s$.
As a result, the scorer $s$ can then be converted to a density-ratio estimator between $\mu$ and $\nu$ via Bayes' rule~\citep{bickel2009discriminative,menon2016}
\begin{equation}
  \label{eq:dre-cpe-dr}
  \hat{\dratio}(x; s) \defeq \frac{1 - \pi}{\pi} \cdot \frac{(\link^{-1} \circ s)(x)}{1 - (\link^{-1} \circ s)(x)} \approx \frac{\dd \mu}{\dd \nu}.
\end{equation}
If our pair of loss function and link $(\drloss, \link)$ is indeed strictly proper composite, then the minimizer of \cref{eq:dre-cpe-obj} will result in a scorer $s^\star$ corresponding to the Bayes optimal class-posterior. As a result, the density-ratio learned will be exactly $\dd \mu / \dd \nu$.

Typically, DRE via CPE introduces the invertible link function to transform a scorer $s$ into an approximate density-ratio. However, as we are interested in deferral, we only need to calculate the threshold of the approximated density-ratio. It turns out that the thresholding of density-ratios of the form \cref{eq:dre-cpe-dr} can be simplified to a thresholding of the scorer.

\begin{lemma}%
  \label{lem:threshold-scorer}
  Let $s \colon \mathcal{X} \rightarrow \mathbb{R}$ be a scorer learned via the minimization of \cref{eq:dre-cpe-obj} with loss $\drloss$ and invertible link $\link$. Then for $\hat{\dratio}$ given by \cref{eq:dre-cpe-dr}, we have for $\tau > 0$ that
  \begin{equation}
    \left \llbracket \hat{\dratio}(x) \leq \tau \right \rrbracket = \left \llbracket s(x) \leq \link \left( \frac{\pi \tau}{1 - \pi + \pi \tau} \right) \right \rrbracket.
  \end{equation}
\end{lemma}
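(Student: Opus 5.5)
The plan is to rewrite the event $\hat{\dratio}(x) \leq \tau$ as a chain of equivalent inequalities, ending with one about the scorer $s(x)$. Each step applies a strictly increasing bijection to both sides, so every step is an equivalence. Set $p \defeq (\link^{-1} \circ s)(x) \in [0,1]$. Since $\pi \in (0,1)$, the prefactor $(1-\pi)/\pi$ is strictly positive. By \cref{eq:dre-cpe-dr}, the event $\hat{\dratio}(x) \leq \tau$ is therefore equivalent to
\begin{equation*}
  \frac{p}{1-p} \leq \frac{\pi \tau}{1 - \pi}.
\end{equation*}

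Next, invert the odds map $g(p) = p/(1-p)$. This map is a strictly increasing bijection from $[0,1)$ onto $[0,\infty)$, with inverse $g^{-1}(u) = u/(1+u)$. Applying $g^{-1}$ to both sides turns the inequality into $p \leq u/(1+u)$ with $u = \pi\tau/(1-\pi)$. Multiplying numerator and denominator by $1-\pi$ gives
\begin{equation*}
  p \leq \frac{\pi\tau}{1-\pi+\pi\tau}.
\end{equation*}
The right-hand side lies in $(0,1)$ because $\tau > 0$, so it is a valid argument for $\link$.

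Finally, apply $\link$ to both sides. Because $p = \link^{-1}(s(x))$, the left-hand side becomes $s(x)$, which yields the stated Iverson-bracket identity. This step requires $\link$ to be strictly increasing, not merely invertible. I would justify this from the standing assumption that $(\drloss,\link)$ is a proper composite pair in the sense of \citet{reid2010}, whose canonical links (logit, identity, etc.) are increasing. Otherwise I would add monotonicity as an explicit convention, since a decreasing link would reverse the inequality.

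I expect the main obstacle to be this last point together with the boundary cases, rather than the algebra. The case $p = 1$ needs separate treatment: there $\hat{\dratio} = +\infty > \tau$, and correspondingly $s(x) = \link(1) > \link(\pi\tau/(1-\pi+\pi\tau))$, so both indicators equal $0$. If the link is defined only on $(0,1)$ with range $\mathbb{R}$, this case does not arise and the chain of bijections goes through unchanged.
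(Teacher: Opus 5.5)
Your proof is correct, and it is the computation the paper has in mind: the paper states this lemma without a written proof, and your chain (positive rescaling by $(1-\pi)/\pi$, inverting the odds map, then applying $\link$) is exactly that algebra, with the $p=1$ boundary case handled properly. Your remark that $\link$ must be strictly increasing, not merely invertible, correctly identifies an assumption the statement leaves implicit, since a decreasing link would reverse the inequality.
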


Although it appears that the deferral decision is still dependent on the link function $\link$ on the RHS of \cref{lem:threshold-scorer}, in practice one does not typically calculate an exact thresholding value. 
Instead, the scorer $s$ can be evaluated over a validation dataset and the threshold value can be specified \wrt a target deferral percentage---a common approach for deferring via with scoring functions or confidence estimates~\citep{pmlr-v206-pugnana23a,pugnana2023model}.
One should note that even the ``original'' formulation of the L2D problem in \cref{eq:l2d} requires a calibration of the cost $c$---we remind that the deferral regularized $c \cdot \mathbb{P}_{\meas{P}_{\rm x}}[r(\X) = 1] $ can be interpreted as a Lagrange multiplier relaxation of a hard deferral rate constraint $ \mathbb{P}_{\meas{P}_{\rm x}}[r(\X) = 1] \leq \texttt{budget}$~\citep{franc2023optimal}.
In the worst case, this may require training multiple deferral mechanisms $r$ from scratch for each $c$ in a cross validation scheme.
As a result, \cref{lem:threshold-scorer} allows us to make deferral decision based on density-ratios by thresholding in the scorer's output space, rather than explicitly working in the space of density-ratio values.

\paragraph{The Ideal DR CPE Loss}
With the general setup of DRE via CPE, we now define a pair of density-ratio loss functions which correspond to the optimization problem of learning the density-ratio of ideal distributions. In particular, we exploit the fact that the introduced DRE framework of \cref{eq:dre-cpe-obj} allows for density-ratio estimates for any $\pi \in (0, 1)$.

\begin{definition}\label{def:ideal-loss}
  Given a DR loss $\drloss$, the corresponding $(\itype \in \{ \mtype, \jtype \})$-ideal DR CPE loss is given by
  \begin{equation}
    \mathcal{L}_{\itype}(s; \gamma, \gamma^{(\expert)}) \defeq \expect_{\meas{P}_{\rm x}}\left[\tilde{w}_\itype(\X; \gamma) \cdot (\drloss_{+1} \circ s)(\X) + \tilde{w}^{(\expert)}_\itype(\X; \gamma^{(\expert)}) \cdot (\drloss_{-1} \circ s)(\X)\right].
  \end{equation}
\end{definition}

The $\itype$-ideal DR CPE loss in \cref{def:ideal-loss} is simply derived by setting $\mu = \meas{Q}_\itype(\cdot; \gamma)$, $\nu = \meas{Q}^{(\expert)}_\itype(\cdot; \gamma^{(\expert)})$, and $\pi = Z_{\itype, \gamma} / (Z_{\itype, \gamma} + Z^{(\expert)}_{\itype, \gamma})$.
The latter choice of setting the marginal label probabilities $\pi$ conveniently eliminates the need to estimate the normalizers of the ideal distributions. \

Although \cref{def:ideal-loss} defines a pair of loss functions---one for the marginal ideal distributions, one for the joint ideal distributions---practically only the joint variant can be utilized. This is due to the dependence of $\expect_{\bpos(x)}$ inside the exponential function in the weights $\tilde{w}_{\mtype}(x; \gamma) = \exp\left( -{\expect_{\pos^\star(x)}\left[ \ell(x, \Y, h(x)) \right] }/{\gamma} \right)$. Although one could attempt to empirically estimate the weights (and the corresponding loss $\mathcal{L}_{\mtype}$), the resulting estimator would be biased. Furthermore, the estimated loss would typically have a high variance as for most high dimensional input spaces $\mathcal{X}$, only a single class label $y$ is available for each observation $x$.

Despite the marginal variant of the DR CPE loss being difficult to estimate, the joint DR CPE loss $\mathcal{L}_{\jtype}(s; \gamma, \gamma^{(\expert)})$ has a convenient form:
\begin{equation}%
  \label{eq:joint-ideal-loss-simplify}
    \mathop{\expect}_{(\X, \Y) \sim \meas{P}}\left[\exp\left( -\frac{\ell(\X, \Y, h(\X))}{\gamma} \right) \cdot (\drloss_{+1} \circ s)(\X) + \exp\left( -\frac{\ell(\X, \Y, h^{(\expert)}(\X))}{\gamma^{(\expert)}} \right) \cdot (\drloss_{-1} \circ s)(\X)\right]. 
\end{equation}
Indeed, as the expectation $\expect_{\bpos(x)}$ is on the outside of the exponential function, the objective $\mathcal{L}_{\jtype}$ can be calculated as an expectation of a random variable \wrt the joint data measure $\meas{P}$. From an estimation perspective, one can simply switch the expectation in \cref{eq:joint-ideal-loss-simplify} to yield an unbiased estimator.

A natural question is how are these two objectives related.
As the marginal ideal distributions (and thus the corresponding loss function) relate to the typical notion of optimal deferral (via \cref{thm:rel-to-chow}), the objective we would like to minimize is the marginal DR CPE loss $\mathcal{L}_{\mtype}$.
Although $\mathcal{L}_{\mtype}$ is not easy to estimate, through Jensen's inequality, one can directly show that the joint DR CPE loss $\mathcal{L}_{\jtype}$ directly upper bounds the marginal DR CPE loss $\mathcal{L}_{\mtype}$.

\begin{lemma}%
  \label{lem:obj-jensen-ordering}
  Suppose $\ell$ is non-negative. Then $\mathcal{L}_{\mtype}(s; \gamma, \gamma^{(\expert)}) \leq \mathcal{L}_{\jtype}(s; \gamma, \gamma^{(\expert)})$.
\end{lemma}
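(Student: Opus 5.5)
The plan is a pointwise comparison of integrands. By \cref{def:ideal-loss}, both $\mathcal{L}_{\mtype}$ and $\mathcal{L}_{\jtype}$ are expectations over the same marginal $\meas{P}_{\rm x}$ of
\begin{equation*}
  \tilde{w}(\X; \gamma) \cdot (\drloss_{+1} \circ s)(\X) + \tilde{w}^{(\expert)}(\X; \gamma^{(\expert)}) \cdot (\drloss_{-1} \circ s)(\X),
\end{equation*}
differing only in which weights from \cref{cor:kl_weights} appear. So it suffices to show two things: first, that $\tilde{w}_\mtype(x; \gamma) \leq \tilde{w}_\jtype(x; \gamma)$ for every $x$ (and likewise for the expert weights with $h^{(\expert)}$ and $\gamma^{(\expert)}$); second, that the partial losses $\drloss_{\pm 1}$ are non-negative. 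Multiplying the weight inequality by a non-negative partial loss keeps its direction, and integrating against $\meas{P}_{\rm x}$ then gives the claim.

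The weight inequality is Jensen's inequality. Fix $x$ and consider the random variable $U \defeq -\ell(x, \Y, h(x))/\gamma$ with $\Y \sim \bpos(x)$. Since $\exp$ is convex,
\begin{equation*}
  \tilde{w}_\mtype(x; \gamma) = \exp\left( \expect[U] \right) \leq \expect\left[ \exp(U) \right] = \tilde{w}_\jtype(x; \gamma).
\end{equation*}
The same argument with $h^{(\expert)}$ and $\gamma^{(\expert)}$ handles the expert weights. Here the hypothesis that $\ell$ is non-negative does its work. It gives $U \leq 0$, so $\exp(U) \in (0, 1]$, and both weights lie in $(0, 1]$. This makes every expectation finite and justifies applying Jensen and exchanging the order of integration (Tonelli) without further integrability conditions. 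It also makes the normalizers $Z_{\itype, \gamma}$ finite, so the losses are well defined.

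The step I expect to need care is the sign of the DR loss. The inequality transfers pointwise only if $\drloss_{+1}(s(x)) \geq 0$ and $\drloss_{-1}(s(x)) \geq 0$. For the standard strictly proper composite losses (logistic, exponential, square) these partial losses are indeed non-negative. I would state this as the standing assumption on $\drloss$, or note that it holds for the usual choices.

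If one wanted to avoid that assumption, I would try shifting $\drloss$ by a constant. This changes neither the minimizer nor the Bayes-optimality of \cref{eq:dre-cpe-obj}. However, it adds a term proportional to the total weight, and that total weight also differs between $\mtype$ and $\jtype$. So the cleanest route is to assume $\drloss_{\pm1} \geq 0$ and conclude by monotonicity of the integral.
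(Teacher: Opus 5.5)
Your proof is the paper's: its proof of \cref{lem:obj-jensen-ordering} is the one-line appeal to Jensen's inequality. That is exactly your pointwise bound $\tilde{w}_{\mtype} \le \tilde{w}_{\jtype}$ from convexity of $\exp$, followed by integration against $\meas{P}_{\rm x}$. You are also right that $\drloss_{\pm 1} \ge 0$ is genuinely needed and missing from the statement: with the LSIF partial losses, $s \equiv 1$, an expert whose loss is constant in $y$, and a model whose loss is not, one gets $\mathcal{L}_{\mtype} > \mathcal{L}_{\jtype}$. The paper imposes this condition only in \cref{thm:joint-marginal-gap-formal}, which is where the lemma is invoked.
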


From \cref{lem:obj-jensen-ordering}, one can expect that minimizing the joint variant DR CPE loss will reduce the marginal DR CPE loss to some degree. However, despite this, there will be a non-trivial gap between the two quantities. The following result shows that the gap between these loss functions are intrinsically dependent on the variance of performance for the model $h$ and the expert $h^{(\expert)}$.

\begin{theorem}[Informal]%
  \label{thm:joint-marginal-gap}
  Fix $\gamma, \gamma^{(\expert)} > 0$. Suppose $\ell$ is bounded and $\drloss_{+1}, \drloss_{-1} \geq 0$. Then
  \begin{align*}
    &\mathcal{L}_{\jtype}(s; \gamma, \gamma^{(\expert)}) - \mathcal{L}_{\mtype}(s; \gamma, \gamma^{(\expert)}) \\
    &= \Theta\left( \frac{\expect_{\meas{P}_{\rm x}}[(\drloss_{+1} \circ s)(\X) \var_{\meas{P}}[\ell(\X, \Y, h(\X)) \mid \X]]}{\gamma^2} + \frac{\expect_{\meas{P}_{\rm x}}[(\drloss_{-1} \circ s)(\X) \var_{\meas{P}}[\ell(\X, \Y, h^{(\expert)}(\X)) \mid \X]]}{(\gamma^{(\expert)})^2} \right),
  \end{align*}
  where $\Theta$ depends on the loss bound and $\gamma, \gamma^{(\expert)}$. A precise statement is given in \cref{thm:joint-marginal-gap-formal}.
\end{theorem}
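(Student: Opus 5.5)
The plan is to write the gap pointwise in $x$ and reduce each term to a Jensen gap for the exponential, which I then sandwich using a second-order Taylor expansion with bounded remainder. By \cref{cor:kl_weights} and \cref{def:ideal-loss},
\begin{equation*}
  \mathcal{L}_{\jtype} - \mathcal{L}_{\mtype}
  = \expect_{\meas{P}_{\rm x}}\Big[ (\drloss_{+1}\circ s)(\X)\, \Delta_\gamma(\X) + (\drloss_{-1}\circ s)(\X)\, \Delta^{(\expert)}_{\gamma^{(\expert)}}(\X) \Big],
\end{equation*}
where $\Delta_\gamma(x) \defeq \expect_{\Y\sim\bpos(x)}[e^{-L/\gamma}] - e^{-\expect_{\bpos(x)}[L]/\gamma}$ with $L \defeq \ell(x,\Y,h(x))$. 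The expert term $\Delta^{(\expert)}_{\gamma^{(\expert)}}$ is defined analogously with $h^{(\expert)}$ and $\gamma^{(\expert)}$. Since $\drloss_{\pm1}\geq 0$, it suffices to show $\Delta_\gamma(x) = \Theta(\var_{\meas{P}}[L\mid \X=x]/\gamma^2)$ pointwise with constants uniform in $x$. The result then follows by multiplying by the nonnegative partial losses and integrating, and the expert term is handled identically.

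For the pointwise bound, fix $x$, write $m \defeq \expect_{\bpos(x)}[L]$, and let $\phi(u) = e^{-u/\gamma}$. Taylor's theorem with Lagrange remainder around $m$ gives
\begin{equation*}
  \phi(L) = \phi(m) + \phi'(m)(L-m) + \tfrac12 \phi''(\xi)(L-m)^2
\end{equation*}
for some $\xi$ between $L$ and $m$. Taking the expectation kills the linear term, so
\begin{equation*}
  \Delta_\gamma(x) = \tfrac{1}{2\gamma^2}\expect\big[e^{-\xi/\gamma}(L-m)^2\big].
\end{equation*}
Suppose $0\le \ell \le B$. Then $\xi\in[0,B]$, so $e^{-B/\gamma}\le e^{-\xi/\gamma}\le 1$. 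This yields
\begin{equation*}
  \frac{e^{-B/\gamma}}{2\gamma^2}\var[L\mid x] \;\le\; \Delta_\gamma(x) \;\le\; \frac{1}{2\gamma^2}\var[L\mid x].
\end{equation*}
These are exactly the $\Theta$ constants depending on $B$ and $\gamma$, which matches the dependence stated in \cref{thm:joint-marginal-gap}.

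The only delicate step is that $\xi$ is random, since it depends on $\Y$. I will therefore state the remainder in integral form, $\int_0^1(1-t)\phi''(m+t(L-m))\,dt\,(L-m)^2$, so that measurability is clear and the bound $\phi''\in[e^{-B/\gamma}/\gamma^2, 1/\gamma^2]$ on $[0,B]$ applies directly inside the expectation. If $\ell$ is only bounded and not nonnegative, say $\ell \in [-B, B]$, the same argument goes through with the constants replaced by $e^{\mp B/\gamma}$. The formal version, \cref{thm:joint-marginal-gap-formal}, would record these explicit constants. Finally, summing the model and expert contributions, both of which are nonnegative, gives matching two-sided bounds for the sum: the lower constant is $\min$ and the upper is $\max$ over the two terms' constants, which completes the $\Theta$ statement.
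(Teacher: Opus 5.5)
Your proposal is correct and follows essentially the same route as the paper. You split the gap pointwise into the two nonnegative Jensen gaps weighted by $\drloss_{\pm 1}\circ s$, then sandwich each with a second-order Taylor expansion with Lagrange remainder, which yields exactly the constants $e^{-B/\gamma}/(2\gamma^2)$ and $1/(2\gamma^2)$ of \cref{thm:joint-marginal-gap-formal}. The only difference is cosmetic: the paper first factors out $e^{-\expect[\ell]/\gamma}$ and applies the centered-variable \cref{lem:taylor-exp} around $0$, whereas you expand directly around the conditional mean, which is the same computation.
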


\cref{thm:joint-marginal-gap} shows that the gap between loss functions is small when the losses of $h$ and $h^{(\expert)}$ vary little over the label distribution  $\pos^\star(x)$.
There are two distinct circumstances when this gap vanishes. First, the gap is small when for a fixed $x$, the loss is near constant. For the typical loss functions, such as cross-entropy, this occurs when the model predicts randomly. Second, the gap is small when the Bayes optimal class-posterior $\bpos(x)$ is concentrated on a single label. As such, whenever there is no label uncertainty (\ie, no aleatoric uncertainty~\citep{hullermeier2021aleatoric}), the two objectives are equivalent.

We remind that although generally, the two loss functions will disagree, the Bayes optimal scorer of the nicer to estimate joint variant does correspond to a version of Chow's rule, but on an expert-tilted distribution (\cref{thm:rel-to-chow}). So, the learned scorer, irrespective of the gap to its marginal counterpart, corresponds to meaningful deferral decisions.

\concept{%
\textbf{Expert-Comparison Estimation: }
Previous work in the literature suggests learning deferral rules by regressing pointwise targets that compare model and expert performance~\citep{kag2022efficient,jitkrittum2023does}.
In particular, one seeks to estimate a pointwise expert-comparison target that approximates how Chow's rule makes deferral decisions.
For example, one considers the zero-one loss function to design a least-squares regression task on the random variable
\begin{equation}%
  \label{eq:reg_delta_target}
  \Y^{({\rm d})} \defeq \llbracket f(\X) = \Y \rrbracket - \llbracket f^{(\expert)}(\X) = \Y \rrbracket.
\end{equation}
These target labels $\Y^{({\rm d})}$ simply aim to identify whenever the expert $h^{(\expert)}$ would be correct but the model $h$ would not.
Interestingly, a direct connection can be made to the joint ideal DR CPE loss.
\begin{proposition}%
  \label{prop:limit_to_diff}
  Suppose $\ell$ is the zero-one loss function and $\drloss$ is the squared loss function. Let $\Sigma(x) \defeq \prob_{\meas{P}}[f(\X) = \Y \mid \X = x] + \prob_{\meas{P}}[f^{(\expert)}(\X) = \Y \mid \X = x]$. 
  Then, for $\meas{P}_{\rm x}$-almost every $x$ with $\Sigma(x) > 0$, the unique pointwise minimizer $s^\star_0(x)$ of the limiting risk satisfies, for any $\tau \in \mathbb{R}$,
  \begin{equation}
    \llbracket s^\star_0(x) \le \tau \rrbracket
    =
    \left \llbracket \expect[\Y^{({\rm d})} \mid \X = x] \le \tau \, \Sigma(x) \right \rrbracket.
  \end{equation}
\end{proposition}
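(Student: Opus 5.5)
The plan is to identify the ``limiting risk'' as the $\gamma = \gamma^{(\expert)} \to 0$ limit of the joint ideal DR CPE loss in \cref{eq:joint-ideal-loss-simplify}. I would then minimize that limit pointwise in closed form. For the zero-one loss, $\exp(-\ell(x,y,h(x))/\gamma)$ equals $1$ when $f(x)=y$ and $e^{-1/\gamma}$ otherwise, and likewise for the expert. Letting $\gamma \to 0$, the weights converge to the indicators $\llbracket f(\X)=\Y\rrbracket$ and $\llbracket f^{(\expert)}(\X)=\Y\rrbracket$. The limiting risk is therefore
\begin{equation*}
  \mathcal{L}_0(s) = \expect_{(\X,\Y)\sim\meas{P}}\left[\llbracket f(\X)=\Y\rrbracket\,(\drloss_{+1}\circ s)(\X) + \llbracket f^{(\expert)}(\X)=\Y\rrbracket\,(\drloss_{-1}\circ s)(\X)\right].
\end{equation*}
The convergence from the finite-$\gamma$ loss is by dominated or monotone convergence, since the weights lie in $[0,1]$ and the partial losses are nonnegative.

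Next, I would condition on $\X=x$ and write $a(x)\defeq\prob_{\meas{P}}[f(\X)=\Y\mid\X=x]$ and $b(x)\defeq\prob_{\meas{P}}[f^{(\expert)}(\X)=\Y\mid\X=x]$, so that $\Sigma(x)=a(x)+b(x)$. The risk then becomes $\expect_{\meas{P}_{\rm x}}[a(\X)\,\drloss_{+1}(s(\X)) + b(\X)\,\drloss_{-1}(s(\X))]$. Because $s$ ranges over all measurable functions, it suffices to minimize the integrand for $\meas{P}_{\rm x}$-almost every $x$. I would take the squared loss with $\pm1$ targets, $\drloss_{+1}(v)=(1-v)^2$ and $\drloss_{-1}(v)=(1+v)^2$. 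The pointwise objective $v\mapsto a(1-v)^2+b(1+v)^2$ has second derivative $2(a+b)>0$ when $\Sigma(x)>0$, so it is strictly convex. Setting its derivative $-2a(1-v)+2b(1+v)$ to zero gives the unique minimizer
\begin{equation*}
  s^\star_0(x) = \frac{a(x)-b(x)}{a(x)+b(x)}.
\end{equation*}

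Finally, linearity of conditional expectation applied to \cref{eq:reg_delta_target} gives $\expect[\Y^{({\rm d})}\mid\X=x]=a(x)-b(x)$. Since $\Sigma(x)>0$, multiplying by it preserves the inequality, so $s^\star_0(x)\le\tau \iff a(x)-b(x)\le\tau\,\Sigma(x)$, which is the claim.

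The main obstacle is not computational but definitional. One has to fix precisely what ``limiting risk'' means and which $\pm1$ encoding of the squared loss is used. If a different encoding is used, such as $\{0,1\}$ targets, the minimizer is an affine reparametrization of the one above, namely $a/(a+b)$. The threshold identity then holds after a corresponding affine change of $\tau$, so I would state the result for the $\pm1$ convention. If instead one wants the minimizers at finite $\gamma$ to converge to $s^\star_0$, that follows directly: the finite-$\gamma$ minimizer is the same formula with $a$ replaced by $a+e^{-1/\gamma}(1-a)$ and $b$ by $b+e^{-1/\gamma}(1-b)$, which is continuous in $e^{-1/\gamma}\to0$.
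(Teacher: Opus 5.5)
Your proposal is correct and matches the paper's proof step for step. The paper also takes the $\gamma \to 0^+$ limit of \cref{eq:joint-ideal-loss-simplify} by dominated convergence (\cref{cor:square_limit}), uses the partial losses $(v-1)^2$ and $(v+1)^2$, and minimizes the conditional risk $\Sigma(x) v^2 - 2\Delta(x) v + \Sigma(x)$ to obtain $s^\star_0(x) = \Delta(x)/\Sigma(x)$; your remark that the finite-$\gamma$ minimizers converge to $s^\star_0$ is a small addition the paper does not make.
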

An interesting interpretation of \cref{prop:limit_to_diff} is that, in the low temperature limit $\gamma \to 0$, the optimal deferral mechanism induced by the joint ideal loss with squared loss is equivalent to the optimal deferral mechanism learned by regressing on $\Y^{({\rm d})}$. The only difference is that the threshold is no longer constant, but instead rescaled by the input-dependent factor $\Sigma(x)$. Thus, the rule preserves the same expert-comparison signal used in prior work, while additionally favoring deferral in regions where the total correctness probability of the model and expert is larger. 
This can be desirable, since deferral is discouraged when neither the base model $h$ nor the expert $h^{(\expert)}$ is likely to be correct.

\vspace{0.4em}
{\color{black!35} \hrule}
\vspace{0.3em}

\textbf{Anomaly Detection (AD): }
One definition of anomalous observations are examples which are not concentrated~\citep{ripley2007pattern,chandola2009anomaly}, \ie,
examples that are sampled from areas of low density \wrt $\meas{P}$.
To identify such examples, one approach in AD seeks to solve an objective similar to DRE~\cref{eq:dre-cpe-obj}. More concretely, they seek to learn a scorer $s$ via optimizing~\citep{steinwart2005classification}:
\begin{equation}%
  \label{eq:anomaly}
  \mathcal{L}_{\rm anomaly}(s) \defeq \expect_\meas{P}[(\drloss_{+1} \circ s)(\X)] + \int (\drloss_{-1} \circ s)(x) \, \dd \mu(x),
\end{equation}
where $\mu$ corresponds to the reference measure of $\meas{P}$. An observation $x$ can then be classified as anomalous via the scorer $s(x)$, \ie, via a threshold $\llbracket s(x) \leq \alpha_q \rrbracket$ with $\alpha_q$ chosen to correspond to the $q$th quantile of the scorer $s$.
A calibrated variant of the AD task can also be considered, where instead of just wanting to identify anomalous observations, we want the learned scorer $s$ (via a link function) to corresponds to the cumulative distribution function of $\meas{P}$~\citep{menon2018loss}.

This form of AD mirrors that of learning deferral rules from ideal distributions---the only change is 
what is considered a typical versus anomalous distribution.
Here, the typical distribution is considered as the data distribution itself $\meas{P}$ and anything which deviates from it is considered anomalous, \ie, the reference measure $\mu$. Instead, in L2D both the typical and anomalous distributions are reweightings of the data distribution. 
From an optimization perspective, our deferral task is easier. In \cref{eq:anomaly}, the integral term is difficult to estimate (for general $\mu$)~\citep{menon2018loss}. 

}

\begin{figure}[th]%
  \includegraphics[width=\textwidth,trim={0 7pt 0 5pt},clip]{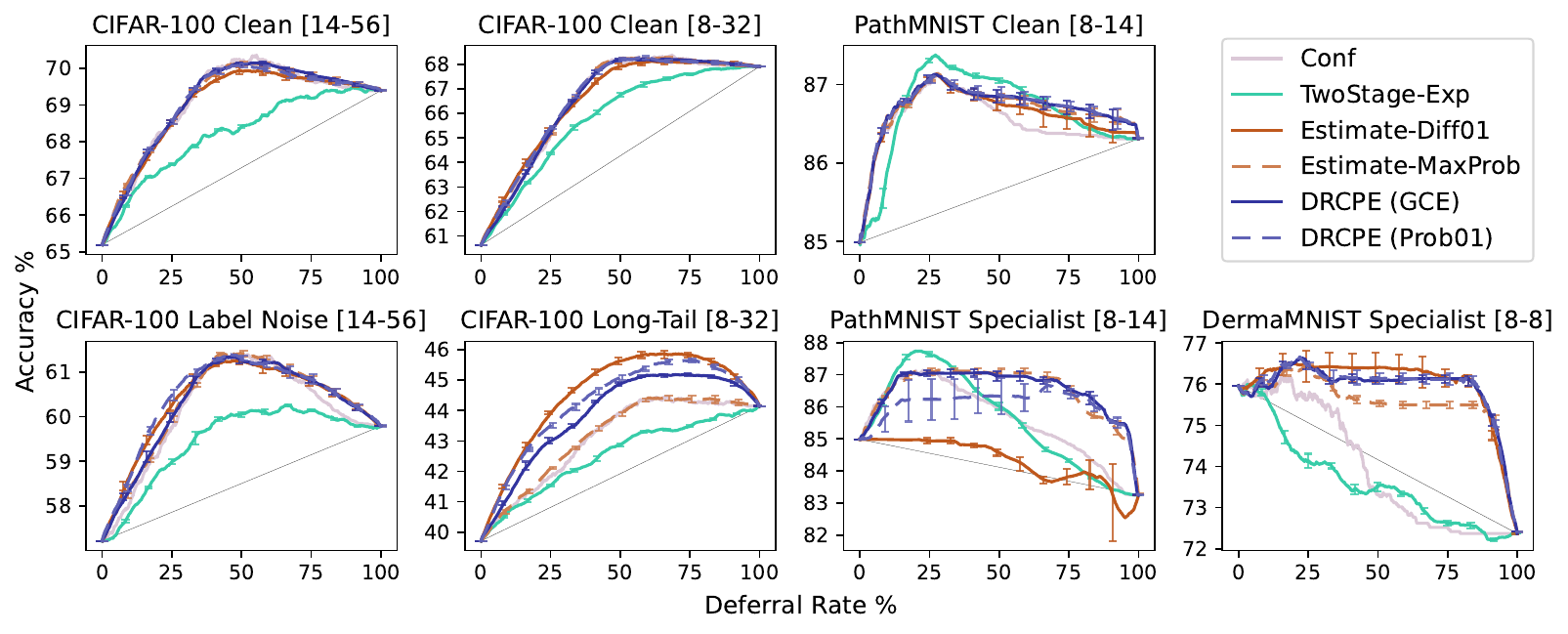}%
  \caption{Accuracy-deferral trade-off. Titles detail the dataset and ResNet utilized, \ie, ``\texttt{[14-56]}'' $\mapsto$ $h$/$h^{(\expert)}$ = ResNet14/56. Lines depict the mean and error bars depict $\pm1$ s.t.d range (over 11 runs).}%
  \label{fig:tradeoff-summary}%
\end{figure}

\section{Experiments}%
\label{sec:experiments}

\definecolor{baseline}{rgb}{0.8,0.4745098039215686,0.6549019607843137}
\definecolor{twostage}{rgb}{0.21568627450980393,0.8,0.6588235294117647}
\definecolor{estimate}{rgb}{0.7490196078431373,0.34509803921568627,0.11372549019607843}
\definecolor{drcpe}{rgb}{0.19607843137254902,0.20392156862745098,0.6235294117647059}

In the following section, we evaluate our ideal DR CPE method learning deferral decisions. To do so, we consider a variant of the experimental model cascade setup presented in \citet{jitkrittum2023does} on corrupted CIFAR-100~\citep{cifar100} and MedMNIST~\citep{medmnistv1,medmnistv2} datasets.
We consider label noise, long-tail, and specialization variants of these datasets: label noise random flips the labels of the first 10 classes; long-tail subsamples 500 examples from the first 25 classes and 50 examples from the rest; and specialization subsamples examples at a rate of 20\% for non-specialist class labels.
We consider a three-way partition of the dataset: a corrupted training set for $h$, $h^{(\expert)}$; a clean training set for $r$; and a test set for evaluation.
When a validation split is available, we use it to train the deferral mechanism; otherwise, we use a partition of the original training set for this purpose. 
The model architectures are ResNet models of various sizes $\{ 8, 14, 32, 56 \}$ ($h^{(\expert)}$ typically larger) and (when applicable) deferral approaches utilize a lightweight Multi-Layer Perceptron with features constructed from probability outputs from $h$~\citep{jitkrittum2023does}.
Further details and experiments with different corruption levels are presented in \cref{app:exp_settings,app:additional_experiments}.

For comparison, we consider 3 types of baselines. First we consider a {\color{baseline} confidence-based} approach (\texttt{Conf}), which simply defers based on thresholding $h$'s perceived confidence (no deferral training)~\citep{ramaswamy2018consistent}. Second, we consider a {\color{twostage} two-stage} surrogate loss method (\texttt{TwoStage-Exp})~\citep{mao2023}. Finally, we consider two {\color{estimate} expert-comparison estimations}~\citep{kag2022efficient,jitkrittum2023does}: one which regresses on $\Y^{(d)}$ as per \cref{eq:reg_delta_target} (\texttt{Estimate-Diff01}); and one which uses the relative confidence between $h$ and $h^{(\expert)}$ by estimating $h^{(\expert)}$'s maximum predicted probability (\texttt{Estimate-MaxProb}).
For {\color{drcpe} DR CPE}, we set $\drloss$ to be the squared loss and the temperature to $\gamma = 0.5$; we consider $\ell$ to be either the generalized cross-entropy (GCE)~\citep{zhang2018generalized} or a probabilistic version of the zero-one loss which consists of the $L_1$ distance between the one-hot label and the predicted probabilities (Prob01).

\cref{fig:tradeoff-summary} summarizes our experiments, where we evaluate the accuracy-deferral trade-offs (upper bounds are better). The approaches perform similarly under clean data. Over corrupted settings, {\color{drcpe} DR CPE} performs best to second best. When compared to the {\color{baseline} confidence-based} and the {\color{twostage} two-stage} approaches, {\color{drcpe} DR CPE} is consistently better. Both of these baselines perform extremely badly on the DermaMNIST specialist setting. %
The exception is that the {\color{twostage} two-stage} approach performs well clean PathMNIST and corrupted PathMNIST for smaller deferral rates (less than $\sim30\%$); however, for higher deferral rates it performs significantly worse than {\color{drcpe} DR CPE} (GCE).
The most competitive baselines are the {\color{estimate}expert-comparison estimators}. \texttt{Estimate-MaxProb} performs significantly worse than {\color{drcpe} DR CPE} in corrupted datasets, except the PathMNIST specialist setting where it performs similarly to {\color{drcpe} DR CPE} with GCE.
\texttt{Estimate-Diff01} has strong performance, beating {\color{drcpe} DR CPE} for the CIFAR-100 long-tail and DermaMNIST specialist settings, but suffers from a larger variance in the latter.
However, the approach critically fails in the PathMNIST specialist setting---which could be attributed to its tendency to overfit~\citep{jitkrittum2023does}. 
{\color{drcpe} DR CPE} with Prob01 also has a loss in performance (and a larger variance) in this setting despite outperforming its GCE counterpart in other settings.

\section{Conclusion}
In this paper, we present a novel method for learning deferral mechanisms based on ideal distributions and the link between Density Ratio Estimation (DRE) and Class-Probability Estimation (CPE). Practically, we find that the DR CPE loss function we propose can yield good accuracy-deferral trade-offs while also being robust in its performance over a variety of dataset and corruption combinations.
So far, the narrative has been for L2D with a pair of models. Of course one could easily chain deferral mechanism, as per model cascades~\citep{wang2022wisdom,jitkrittum2023does}; however, ideally we would want to directly deal with multi-expert deferral, where there may be multiple base models and/or experts~\citep{verma2023learning,mao2023,mao2024principled,mao2025mastering,liu2026more}. An obvious connection here could be to exploit the connection between multi-distributional DRE and CPE in the multi-class setting~\citep{yu2021unified,nock2016scaled}.
Another important direction is to modify our ideal distribution framework to allow additional constraints for, \eg, fairness or limiting expert calls in training~\citep{narasimhan2024learning,charusaie2024unifying,desalvo2025budgeted}.

\section*{Acknowledgments}
The authors would like to thank Aditya Krishna Menon, Wittawat Jitkrittum, and Harikrishna Narasimhan for various fruitful discussions.

This work was funded, in part, by the Swedish research council under contracts 2023-05195 (Alexander Soen and Joakim Jald\'{e}n) and 2021-05266 (Ragnar Thobaben).

\newpage

\bibliography{references}
\bibliographystyle{plainnat}

\newpage

\appendix
\startcontents[appendix]
\begingroup
\setcounter{tocdepth}{2}
\printcontents[appendix]{}{1}{\section*{Appendix Contents}}
\endgroup
\clearpage

\section{Limitations}%
\label{app:limitations}
Although the approach of the paper has many connections, a theoretical limitation is that it does not have consistency guarantees explored in other L2D approaches which utilize surrogate loss functions~\citep{mao2023}. However, we suspect that some finite sample bounds could be established similarly to the expert-comparison estimation approaches~\citep[Lemma 4.2]{jitkrittum2023does} (as a result of the connections we establish between these approaches, \eg, \cref{prop:limit_to_diff}). Practically, although we are not required to commit to a ``deferral cost'' $c$ or threshold $\tau$ at training time, the temperature $\tau$ can have a non-trivial effect on performance (experiments are detailed in the \cref{app:additional_experiments}).

\section{Broader Impact}%
\label{app:impact}
L2D inherently has impact in various downstream tasks, \eg, in health.
This can cause potential issues in scenarios of \emph{overly conservative deferral}, \ie, the overall system does not defer to a human expert when the base model is unreliable.
As such, in practice, one should ensure that the deferral rates $\tau$ selected are validated to ensure that unreliable base model predictions are not utilized at a high rate.
On the flip side, a well-calibrated L2D model has the potential of reducing the burden on a human expert.
Other impacts our and other L2D approaches could have is on the fairness and privacy of the overarching deferring system (when compared to just the base model).
Although, L2D has been previously motivated to improve fairness~\citep{madras2018predict}, our approach does not have any explicit constraints on any fairness metrics. Furthermore, deferral could impact the privacy of users, \eg, in a loan prediction system, patterns of deferral could leak certain demographic information.
These aspects should be considered when utilizing our method.

\section{Ideal Distribution with $\varphi$-Divergences}%
\label{app:varphi-div-ext}

In the following section, we present a generalization of the optimal ideal distributions considered in \cref{cor:kl_weights}.
Instead of considering the KL-divergence, we will pick the family of $\varphi$-divergences (\aka, Csisz\'ar $f$-divergences) as our choice of dissimilarity $\div$ in \cref{def:ideal-dist}.
We first recall the definition of an $\varphi$-divergence.

\begin{definition}[$\varphi$-Divergence]
  Given a convex lower semi-continuous generator $\varphi \colon \mathbb{R} \to (-\infty, \infty]$ that satisfies $\varphi(1) = 0$, the corresponding $\varphi$-divergence between non-negative measures $\meas{P} \ll \meas{Q}$ is defined by
  \begin{equation}
    \fdiv{\varphi}(\meas{P} \Mid \meas{Q}) = \int_{\mathcal{Z}} \varphi\left( \frac{\dd \meas{P}}{\dd \meas{Q}}(z) \right) \dd \meas{Q}(z).
  \end{equation}
\end{definition}

We will be looking at the generic problem of solving:
\begin{equation}
  \label{eq:phi-div-opt}
  \begin{aligned}
    \min_{\dratio \colon \mathcal{Z} \to \mathbb{R}_{\ge 0}}
    \quad
    & \expect_{\meas{P}} \left[ \dratio(\Z) L(\Z) \right] + \gamma \cdot \div_{\varphi}(\dratio \cdot \meas{P} \Mid \meas{P}), \\
    \sucht
    \quad
    & \expect_{\meas{P}}[\dratio(\Z)] = 1.
  \end{aligned}
\end{equation}
where $ L \colon \mathbb{Z} \to \mathbb{R} $ and $\gamma > 0$.

\cref{eq:phi-div-opt} accounts for both marginal and joint variants of the optimization problems in \cref{def:ideal-dist}.
The only difference is that instead of optimizing for a distribution, we optimize for the density-ratio directly (similar to~\citet{dvijotham2020framework}). The corresponding ideal distribution can be recovered via $\meas{Q}(z) = \dratio(z) \meas{P}(z)$, \ie, $\dratio$ is the reweighting.
Solutions to such an equation under different assumptions on $\varphi$ are well-known and have been explored in various literatures~\citep{knoblauch2022optimization,picard2022change,soen2024,roulet2025loss}.
In what follows, we present a novel (to the best of our knowledge) solution to \cref{eq:phi-div-opt}. This will immediately give the proof of \cref{cor:kl_weights} via an instantiation of $\div_{\varphi} = \kl$.
The approach we utilize is heavily inspired by \citet{picard2022change}.

\subsection{The Result}

We first present the result and the necessary definitions required to state it.

First, we will only consider a subset of possible generators $\varphi \colon \mathbb{R} \to (-\infty, +\infty]$. Formally, we consider the set
\begin{equation}
  \Phi \defeq \left \{ \varphi \colon \mathbb{R} \to (-\infty, +\infty] \mid \varphi \textrm{ is proper convex, lower semi-continuous}, \varphi(1) = 0, \varphi(\mathbb{R}_{\geq 0}) \subset \mathbb{R} \right\}.
\end{equation}
$\Phi$ defines the class of usual $\varphi$-generators with an additional constraint that $\varphi(\mathbb{R}_{\geq 0}) \subset \mathbb{R}$. This ensures that $\varphi$ is subdifferentiable on $\mathbb{R}_{>0}$ and that the right derivative $\varphi'_+(0)$ is well-defined in $[-\infty, +\infty)$.
\begin{definition}[Right Derivative, \citet{rockafellar1970}]%
  \label{def:right-derivative}
  Let $\varphi \colon \mathbb{R} \to [-\infty, +\infty]$ be any function. Let $x$ be a point where $\varphi$ is finite. Then given that the following limit exists, the right derivative is defined as,
  \begin{equation}
    \varphi'_+(x) \defeq \lim_{h \downarrow 0} \frac{\varphi(x+h) - \varphi(x)}{h}.
  \end{equation}
\end{definition}

We also consider the standard convex conjugate.
\begin{definition}[Convex Conjugate, \citet{rockafellar1970}]
  Let $\varphi \colon \mathbb{R} \to [-\infty, +\infty]$ be any function. The convex-conjugate (Fenchel conjugate) is defined as,
  \begin{equation}
    \varphi^*(p) \defeq \sup_{x \in \mathbb{R}} \left\{ xp - \varphi(x) \right\}.
  \end{equation}
\end{definition}

We also consider a refinement of the generator class $\Phi$:
\begin{equation}
  \Phi_\uparrow \defeq \left\{ \varphi \colon \mathbb{R} \to (-\infty, +\infty] \mid \varphi \in \Phi, \varphi(\mathbb{R}_{<0}) = \{ +\infty \} \right\}.
\end{equation}
Intuitively, $\Phi_\uparrow$ is just the class of generators where negative inputs are set to $\infty$. Indeed, we can define a transform from any generator $\varphi \in \Phi$ to $\Phi_\uparrow$ via
\begin{equation}
  \varphi_\uparrow(x) \defeq
  \begin{cases}
    \varphi(x) & \text{if } x \geq 0 \\
    +\infty    & \text{otherwise}.
  \end{cases}
\end{equation}
It immediately follows that as $\varphi$ only takes the Radon-Nikodym derivative of non-negative measures in $\varphi$-divergences, the resulting divergence is invariant from the choice of generator $\varphi$ or $\varphi_\uparrow$, \ie, $\div_{\varphi} = \div_{\varphi_{\uparrow}}$. Despite this, in what follows, we will see that the properties of $\varphi_\uparrow$ are much easier to work with. For instance, the convex-conjugate of $\varphi_\uparrow$ conveniently becomes
\begin{equation}
  \label{eq:phi_uparrow_conjugate}
  \varphi_\uparrow^*(p) = \sup_{x \geq 0} \{  xp - \varphi(x) \}.
\end{equation}

With these definitions, we can now state our result.

\begin{theorem}%
  \label{thm:general-phi-div-ideal}
  Let $\varphi \in \Phi$. Suppose that there exists a constant $b \in \mathbb{R}$ such that
  \begin{equation}
    \label{eq:opt-b-finite}
    \expect_{\meas{P}} \left[ (\gamma\varphi_\uparrow)^*(b - L(\Z)) \right] < +\infty,
  \end{equation}
  the right derivative $(\varphi^*)'_+$ exists $\meas{P}$-almost surely at $\frac{b - L(\Z)}{\gamma}$,
  the quantity
  \begin{equation}
    \label{eq:opt-b-candidate-finite}
    \expect_{\meas{P}} \left[ \left| (\gamma\varphi_\uparrow)^*(b - L(\Z)) - \max\left\{ 0, (\varphi^*)'_+ \left( \frac{b - L(\Z)}{\gamma} \right) \right\} \cdot (b - L(\Z)) \right| \right] < +\infty
  \end{equation}
  is finite, and
  \begin{equation}
    \label{eq:opt-b}
    \expect_{\meas{P}} \left[ \max\left\{ 0, (\varphi^*)'_+ \left( \frac{b - L(z)}{\gamma} \right) \right\} \right] = 1
  \end{equation}
  Then optimal ratio $\dratio^\star$ of \cref{eq:phi-div-opt} is given by
  \begin{equation}
    \label{eq:general-phi-opt-dratio}
    \dratio^\star(z) = \max\left\{ 0, (\varphi^*)'_+ \left( \frac{b - L(z)}{\gamma} \right) \right\}.
  \end{equation}
\end{theorem}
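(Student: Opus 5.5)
The plan is a weak-duality argument: a pointwise Fenchel--Young inequality gives a lower bound on the objective of \cref{eq:phi-div-opt} over all feasible ratios, and we then show that the candidate $\dratio^\star$ in \cref{eq:general-phi-opt-dratio} is feasible and attains that bound. Write $p(z) \defeq b - L(z)$ and note that $(\gamma\varphi_\uparrow)^*(p) = \gamma\,\varphi_\uparrow^*(p/\gamma)$.

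\emph{Lower bound.} Take any feasible $\dratio \geq 0$ with $\expect_{\meas{P}}[\dratio] = 1$. Since $\div_\varphi = \div_{\varphi_\uparrow}$ and the constraint gives $\expect_{\meas{P}}[b\,\dratio] = b$, the objective can be rewritten as
\begin{equation*}
  \expect_{\meas{P}}\left[\dratio L + \gamma\varphi_\uparrow(\dratio)\right]
  = b + \expect_{\meas{P}}\left[\gamma\varphi_\uparrow(\dratio) - \dratio\,(b - L)\right].
\end{equation*}
The Fenchel--Young inequality applied pointwise to $\gamma\varphi_\uparrow$ gives $\gamma\varphi_\uparrow(\dratio) - \dratio\,p \geq -(\gamma\varphi_\uparrow)^*(p)$. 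By \cref{eq:opt-b-finite} the right-hand side is integrable, so every feasible objective value is at least $b - \expect_{\meas{P}}[(\gamma\varphi_\uparrow)^*(b - L)]$. This lower bound is finite, and the expectation of the left-hand side is well defined, possibly $+\infty$. Equality holds exactly when, $\meas{P}$-a.s., $\dratio(z)$ attains $\sup_{x \geq 0}\{x\,p(z) - \gamma\varphi(x)\}$, i.e.\ when $\dratio(z) \in \arg\max_{x \geq 0}\{x\,q - \varphi(x)\}$ with $q = p(z)/\gamma$.

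\emph{Identifying the maximizer.} This is the step I expect to be the main obstacle. The claim is that $x^\star(q) \defeq \max\{0, (\varphi^*)'_+(q)\}$ lies in $\arg\max_{x \geq 0}\{xq - \varphi(x)\}$ whenever $(\varphi^*)'_+(q)$ exists, which is assumed $\meas{P}$-a.s. The route is through subdifferentials. The function $g(x) = xq - \varphi(x)$ is concave, and $\partial\varphi^*(q) = \arg\max_x g(x)$ is a closed interval. Its right endpoint is $(\varphi^*)'_+(q)$, assuming finiteness, which is the subtle point to check; I would use $\varphi(\mathbb{R}_{\geq 0}) \subset \mathbb{R}$ together with the assumed existence of the derivative.
\begin{itemize}
  \item If $(\varphi^*)'_+(q) \geq 0$, it is an unconstrained maximizer of $g$ lying in $\mathbb{R}_{\geq 0}$, hence also a maximizer over $x \geq 0$.
  \item If $(\varphi^*)'_+(q) < 0$, concavity of $g$ makes it nonincreasing to the right of its largest maximizer. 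Hence $g$ is nonincreasing on $[0,\infty)$, and the constrained maximum is attained at $x = 0$, which is finite because $\varphi(0) \in \mathbb{R}$.
\end{itemize}
In both cases $x^\star(q)$ attains the supremum defining $\varphi_\uparrow^*(q)$. Multiplying through by $\gamma$ gives $\gamma\varphi(\dratio^\star) = \dratio^\star p - (\gamma\varphi_\uparrow)^*(p)$ pointwise.

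\emph{Feasibility and attainment.} The candidate $\dratio^\star$ is nonnegative by construction, and \cref{eq:opt-b} gives $\expect_{\meas{P}}[\dratio^\star] = 1$, so it is feasible. By the identity just derived, condition \cref{eq:opt-b-candidate-finite} says precisely that $\gamma\varphi(\dratio^\star)$ is integrable. Hence $\div_\varphi(\dratio^\star \cdot \meas{P} \Mid \meas{P})$ is finite, and the earlier rearrangement of the objective is legitimate for $\dratio^\star$, with all terms finite. Equality then holds in Fenchel--Young almost surely, so the objective at $\dratio^\star$ equals the lower bound $b - \expect_{\meas{P}}[(\gamma\varphi_\uparrow)^*(b - L)]$. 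Therefore $\dratio^\star$ is optimal.

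The bookkeeping I would keep careful throughout is that $\expect_{\meas{P}}[\dratio L]$ for a general feasible $\dratio$ might not be finite on its own. To handle this, I would only ever integrate the combined nonnegative quantity $\gamma\varphi_\uparrow(\dratio) - \dratio\,p + (\gamma\varphi_\uparrow)^*(p)$, so that no separate finiteness of $\expect_{\meas{P}}[\dratio L]$ is needed.
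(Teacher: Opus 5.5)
Your proposal is correct and is essentially the paper's argument. The paper also bounds the problem by the dual value at $c=b$; your pointwise Fenchel--Young inequality is exactly the weak-duality direction of its step (a), which is the only direction used. It then shows the candidate attains that value through the Fenchel--Young equality, and your subdifferential case analysis plays the role of its two helper lemmas on $\varphi_\uparrow^*$.

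One small repair is needed. For $\varphi\in\Phi$, $(\varphi^*)'_+(q)$ can equal $-\infty$ at the left endpoint of $\mathrm{dom}\,\varphi^*$. There $\partial\varphi^*(q)=\emptyset$, so your ``largest maximizer'' argument has nothing to work with, and finiteness cannot be derived from $\varphi(\mathbb{R}_{\geq 0})\subset\mathbb{R}$. Argue instead, as the paper does, that $(\varphi^*)'_+(q)<0$ forces $q<\varphi'_+(0)$: either $\varphi'_+(0)$ minimizes $\varphi^*$, or $\varphi^*$ is nondecreasing when $\varphi'_+(0)=-\infty$. Hence $x=0$ maximizes $xq-\varphi(x)$ over $x\geq 0$.
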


It follows that \cref{cor:kl_weights} follows immediately from \cref{thm:general-phi-div-ideal} by consider the generator of the KL-divergence.

\subsection{Helper Lemmas}

To prove \cref{thm:general-phi-div-ideal}, we first prove a number of helper lemmas.

We first consider a simplification one can yield for the convex conjugate of $\varphi_{\uparrow}$ by examining its right derivative.
\begin{lemma}%
  \label{lem:phi_uparrow_conjugate}
  Let $\varphi \colon \mathbb{R} \to (-\infty, +\infty]$ be a proper convex, lower semi-continuous function such that $\varphi(0) < +\infty$ and $\varphi'_+(0) \in [-\infty, +\infty)$.
  \begin{equation}
    \varphi_\uparrow^*(p) =
    \begin{cases}
      -\varphi(0) & \text{if } p \leq \varphi'_+(0) \\
      \varphi^*(p) & \text{if } p > \varphi'_+(0).
    \end{cases}
  \end{equation}
\end{lemma}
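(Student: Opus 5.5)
We compute $\varphi_\uparrow^*(p) = \sup_{x \geq 0} \{ xp - g_p(x) \}$ from \cref{eq:phi_uparrow_conjugate}, where $g_p(x) \defeq \varphi(x)$. The idea is to study the concave function $x \mapsto xp - \varphi(x)$ on $[0, \infty)$ and locate its maximiser using the right derivative at $0$. Throughout, convexity of $\varphi$ gives that $\varphi'_+$ is nondecreasing on the effective domain. It also gives the supporting inequality $\varphi(x) \geq \varphi(0) + \varphi'_+(0)\, x$ for all $x \geq 0$ whenever $\varphi'_+(0) > -\infty$. This holds because difference quotients $(\varphi(x)-\varphi(0))/x$ are nondecreasing in $x$ and tend to $\varphi'_+(0)$ as $x \downarrow 0$.

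\textbf{Case $p \leq \varphi'_+(0)$.} Here $\varphi'_+(0)$ is finite, since $p \in \mathbb{R}$ and $\varphi'_+(0) < +\infty$; if $\varphi'_+(0) = -\infty$ this case is vacuous. For any $x \geq 0$, monotonicity of difference quotients gives
\[ xp - \varphi(x) \leq x\,\varphi'_+(0) - \varphi(x) \leq -\varphi(0). \]
Equality is attained at $x = 0$, which is admissible because $\varphi(0) < +\infty$. Hence $\varphi_\uparrow^*(p) = -\varphi(0)$.

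\textbf{Case $p > \varphi'_+(0)$.} We must show that the unrestricted supremum $\varphi^*(p) = \sup_{x \in \mathbb{R}} \{xp - \varphi(x)\}$ can be restricted to $x \geq 0$. It suffices to show $xp - \varphi(x) \leq -\varphi(0)$ for every $x < 0$, since the value $-\varphi(0)$ is already achieved at $x=0$. For $x<0$ in the domain of $\varphi$, convexity gives that the left difference quotient is at most the right derivative:
\[ \frac{\varphi(0) - \varphi(x)}{0 - x} \leq \varphi'_-(0) \leq \varphi'_+(0) < p. \]
Rearranging yields $\varphi(x) \geq \varphi(0) + x\,(\varphi(0)-\varphi(x))/x$, and more usefully $\varphi(0) - \varphi(x) \leq -x\,\varphi'_+(0) \leq -xp$ when $x<0$, because $-x>0$ and $\varphi'_+(0) < p$. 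Thus $xp - \varphi(x) \leq -\varphi(0)$. Points with $\varphi(x)=+\infty$ contribute $-\infty$ and can be ignored. Therefore $\varphi^*(p) = \sup_{x\geq 0}\{xp-\varphi(x)\} = \varphi_\uparrow^*(p)$.

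\textbf{Main obstacle.} The main care point is the edge case $\varphi'_+(0) = -\infty$, where $0$ is a boundary point of the domain. Then no $x<0$ lies in the domain, because a finite left value would force a finite right derivative. So $\varphi = \varphi_\uparrow$ on $\mathbb{R}$, and the second formula holds trivially for all $p$. We would also justify the slope inequalities via the standard three-slope lemma for convex functions [Rockafellar, Thm.~24.1], which handles both cases uniformly.
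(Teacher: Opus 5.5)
Your proof is correct and matches the paper's argument: you split off $\varphi'_+(0) = -\infty$ (where $\varphi = \varphi_\uparrow$), use the supporting-line bound $\varphi(x) \geq \varphi(0) + \varphi'_+(0)\,x$ on $x \geq 0$ when $p \leq \varphi'_+(0)$, and use the chord-slope bound on $x<0$ when $p > \varphi'_+(0)$; you also rightly drop the paper's auxiliary point $\bar{x}>0$ with value above $-\varphi(0)$, which the supremum equality does not need. The one blemish is the stray line ``$\varphi(x) \geq \varphi(0) + x(\varphi(0)-\varphi(x))/x$''. As written it is equivalent to $\varphi(x) \geq \varphi(0)$, which is false in general, but you never use it, because the inequality you actually rely on follows correctly from the displayed chord bound.
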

\begin{proof}
  Fix $p \in \mathbb{R}$ and define $g_p(x) = xp - \varphi(x)$. Then $g_p$ is concave and
  \begin{equation*}
    \varphi_\uparrow^*(p) = \sup_{x \geq 0} g_p(x).
  \end{equation*}

  Suppose first that $\varphi'_+(0) = -\infty$. We claim that then $\varphi(x) = +\infty$ for all $x < 0$. Indeed, if there existed $x_0 < 0$ with $\varphi(x_0) < +\infty$, then for any $y > 0$ convexity would imply
  \begin{equation*}
    \frac{\varphi(y) - \varphi(0)}{y} \geq \frac{\varphi(0) - \varphi(x_0)}{-x_0} > -\infty,
  \end{equation*}
  and sending $y \downarrow 0$ would yield $\varphi'_+(0) > -\infty$ (note, \cref{def:right-derivative}), a contradiction. Hence, in this case $\varphi(x) = + \infty$ for all $x < 0$, and thus $\varphi = \varphi_\uparrow$, so $\varphi_\uparrow^*(p) = \varphi^*(p)$ for all $p \in \mathbb{R}$.

  We now proceed further with two cases with $\varphi'_+(0) > -\infty$.

  \ding{172} Suppose that $p \leq \varphi'_+(0)$. For any $x > 0$, convexity implies that
  \begin{equation*}
    \frac{\varphi(x) - \varphi(0)}{x} \geq \varphi'_+(0) \geq p.
  \end{equation*}
  Rearranging gives $ g_p(x) = xp - \varphi(x) \leq -\varphi(0) = g_p(0) $, and so $x = 0$ is a maximizer of $g_p$ over $\mathbb{R}_{\ge 0}$. Therefore, $ \varphi_\uparrow^*(p) = -\varphi(0) $.

  \ding{173} Now suppose that $p > \varphi'_+(0)$. By the definition of the right derivative, there exists $\bar{x} > 0$ such that
  \begin{equation*}
    \frac{\varphi(\bar{x}) - \varphi(0)}{\bar{x}} < p,
  \end{equation*}
  which implies $ g_p(\bar{x}) > -\varphi(0) = g_p(0) $.
  Hence, $x = 0$ is not a maximizer over $\mathbb{R}$.

  For $x < 0$ and any $y > 0$, convexity gives
  \begin{equation*}
    \frac{\varphi(0) - \varphi(x)}{-x}
    \leq
    \frac{\varphi(y) - \varphi(0)}{y}.
  \end{equation*}
  Sending $y \downarrow 0$ yields $\varphi(x) \geq \varphi(0) + \varphi'_+(0) x$,
  and thus
  \begin{equation*}
    g_p(x)
    = xp - \varphi(x)
    \leq -\varphi(0) + (p - \varphi'_+(0))x
    < -\varphi(0).
  \end{equation*}
  Since $g_p(\bar{x}) > -\varphi(0)$ for some $\bar{x} > 0$, no negative $x$ can maximize $g_p$ over $\mathbb{R}$. Therefore,
  \begin{equation*}
    \sup_{x \in \mathbb{R}} g_p(x) = \sup_{x \geq 0} g_p(x),
  \end{equation*}
  which yields $ \varphi_\uparrow^*(p) = \varphi^*(p) $.
\end{proof}

\begin{lemma}
  \label{lem:phi_uparrow_right_derivative}
  Let $\varphi \colon \mathbb{R} \to (-\infty, +\infty]$ be a proper convex, lower semi-continuous function such that $\varphi(0) < +\infty$.
  Then, at every $p \in \mathbb{R}$ where the right derivatives are defined,
  \begin{equation}
    (\varphi_{\uparrow}^*)'_+(p) = \max\{0, (\varphi^*)'_+(p)\}.
  \end{equation}
\end{lemma}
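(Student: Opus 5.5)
The plan is to compute the right derivative of $\varphi_\uparrow^*$ directly from the piecewise formula in \cref{lem:phi_uparrow_conjugate}, splitting on where $p$ sits relative to the kink point $p_0 \defeq \varphi'_+(0)$. That lemma needs $\varphi'_+(0) \in [-\infty, +\infty)$, so I first check this. Convexity makes the difference quotients $(\varphi(h)-\varphi(0))/h$ nonincreasing as $h \downarrow 0$. Hence the limit exists in $[-\infty,+\infty]$. It can equal $+\infty$ only if $\varphi(h) = +\infty$ for all $h>0$; in that case $\varphi_\uparrow$ is finite only at $0$, $\varphi_\uparrow^*(p) = -\varphi(0)$ is constant, and both sides can be handled separately. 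This degenerate case does not occur for $\varphi \in \Phi$, since $\varphi(\mathbb{R}_{\ge 0}) \subset \mathbb{R}$.

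If $p_0 = -\infty$, the proof of \cref{lem:phi_uparrow_conjugate} already gives $\varphi = \varphi_\uparrow$. So $(\varphi_\uparrow^*)'_+ = (\varphi^*)'_+$, and it remains to see that $(\varphi^*)'_+(p) \ge 0$. This holds because $\operatorname{dom}\varphi \subset [0,\infty)$, so $\varphi^*$ is nondecreasing: $\varphi^*(p) = \sup_{x\ge0}\{xp - \varphi(x)\}$ and each term is nondecreasing in $p$. Therefore $\max\{0,(\varphi^*)'_+\} = (\varphi^*)'_+$.

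Now let $p_0$ be finite. There are three cases.
\begin{enumerate}
  \item \emph{The case $p > p_0$.} On a right neighbourhood of $p$ we have $\varphi_\uparrow^* = \varphi^*$, so $(\varphi_\uparrow^*)'_+(p) = (\varphi^*)'_+(p)$. I then show $(\varphi^*)'_+(p) \ge 0$ here. In step \ding{173} of the proof of \cref{lem:phi_uparrow_conjugate}, the supremum of $g_p$ is approached only from $x > 0$, and $g_q(x) - g_p(x) = x(q-p) \ge 0$ for $x \ge 0$ and $q > p$. Hence $\varphi^*(q) \ge \varphi^*(p)$ for $q>p$ near $p$, so the right derivative is nonnegative and equals $\max\{0,(\varphi^*)'_+(p)\}$.
  \item \emph{The case $p < p_0$.} The function $\varphi_\uparrow^*$ is constant equal to $-\varphi(0)$ near $p$, so its right derivative is $0$. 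I must show $(\varphi^*)'_+(p) \le 0$. Because $\varphi(x) \ge \varphi(0) + p_0 x$ for all $x$ (shown in the lemma's proof), the subgradient $p_0 \in \partial\varphi(0)$, equivalently $0 \in \partial\varphi^*(p_0)$. By convexity of $\varphi^*$ and monotonicity of its subdifferential, every subgradient of $\varphi^*$ at $p < p_0$ is $\le 0$. In particular $(\varphi^*)'_+(p) \le 0$, and the claim reads $0 = \max\{0,\cdot\}$.
  \item \emph{The case $p = p_0$.} The right neighbourhood lies in the region $p > p_0$, so $(\varphi_\uparrow^*)'_+(p_0) = \lim_{q\downarrow p_0} (\varphi^*(q) - (-\varphi(0)))/(q-p_0)$. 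Here I need $\varphi_\uparrow^*$ continuous at $p_0$, which holds because $\varphi^*(p_0) = -\varphi(0)$ by Fenchel–Young equality from $p_0 \in \partial\varphi(0)$. Then the quotient is exactly the right difference quotient of $\varphi^*$ at $p_0$, giving $(\varphi^*)'_+(p_0)$. It remains to show this is $\ge 0$, which follows as in case 1 since $\varphi^*$ is nondecreasing on $[p_0,\infty)$.
\end{enumerate}

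The main obstacle is the boundary case $p = p_0$ together with the sign claims in cases 1 and 3. To show $\varphi^*$ is nondecreasing on $[p_0,\infty)$, I would use $\varphi^* = \varphi_\uparrow^*$ there: $\varphi_\uparrow^*$ is a supremum over $x \ge 0$ of functions nondecreasing in $p$, hence nondecreasing everywhere. The boundary point $p_0$ itself is covered because $\varphi^*(p_0) = -\varphi(0) = \varphi_\uparrow^*(p_0)$. This monotonicity gives the nonnegativity needed in both case 1 and case 3.
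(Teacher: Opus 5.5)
Your proposal is correct and follows essentially the same route as the paper: you split into $\varphi'_+(0) = -\infty$ versus finite, treat the three subcases $p < p_0$, $p > p_0$, $p = p_0$ via \cref{lem:phi_uparrow_conjugate}, and use the same Fenchel--Young identity $\varphi^*(p_0) = -\varphi(0) = \varphi_\uparrow^*(p_0)$ at the kink. The only minor differences are these: you get the sign of $(\varphi^*)'_+$ from monotonicity of $\varphi_\uparrow^*$ (and of $\partial\varphi^*$) rather than from the paper's observation that $p_0$ minimizes $\varphi^*$, and you explicitly flag the degenerate case $\varphi'_+(0) = +\infty$, which the paper's proof silently excludes.
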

\begin{proof}
  We utilize \cref{lem:phi_uparrow_conjugate} and consider a few cases. First consider when $\varphi'_+(0) = -\infty$. From the proof of \cref{lem:phi_uparrow_conjugate}, this implies that $\varphi(x) = +\infty$ for all $x < 0$. Hence, $\varphi = \varphi_\uparrow$. Therefore,
  \begin{equation*}
    \varphi^*(p) = \sup_{x \geq 0} \{xp - \varphi(x)\}.
  \end{equation*}
  Since each affine function $p \mapsto xp - \varphi(x)$ has non-negative slope $x \geq 0$, the supremum $\varphi^*$ is non-decreasing. Thus, at every point where $(\varphi^*)'_+$ is defined, we have $(\varphi^*)'_+(p) \geq 0$, and therefore
  \begin{equation*}
    (\varphi_\uparrow^*)'_+(p)
    =
    (\varphi^*)'_+(p)
    =
    \max\{0, (\varphi^*)'_+(p)\}.
  \end{equation*}
  This proves the claim in the case $\varphi'_+(0) = -\infty$.

  We further proceed with the assumption that $\varphi'_+(0) > -\infty$.

  We note that $\varphi'_+(0) \in \partial \varphi(0)$. Indeed, for $x > 0$, convexity implies
  \begin{equation*}
    \frac{\varphi(x) - \varphi(0)}{x} \geq \varphi'_+(0),
  \end{equation*}
  and hence $\varphi(x) \geq \varphi(0) + \varphi'_+(0)x$. For $x < 0$ and any $y > 0$, convexity gives
  \begin{equation*}
    \frac{\varphi(0) - \varphi(x)}{-x}
    \leq
    \frac{\varphi(y) - \varphi(0)}{y}.
  \end{equation*}
  Sending $y \downarrow 0$ yields $\varphi(x) \geq \varphi(0) + \varphi'_+(0) x$.
  Thus, $\varphi'_+(0) \in \partial \varphi(0)$, and Fenchel-Young duality gives $ 0 \in \partial \varphi^*(\varphi'_+(0)) $.
  Hence, $\varphi'_+(0)$ is a minimizer of the convex function $\varphi^*$, and therefore
  \begin{equation*}
    (\varphi^*)'_+(p) \leq 0 \quad \text{for all } p < \varphi'_+(0),
    \qquad
    (\varphi^*)'_+(p) \geq 0 \quad \text{for all } p \geq \varphi'_+(0),
  \end{equation*}
  at every point where the right derivative is defined.

  We now further proceed via considering three cases.

  \ding{172} Suppose first that $p < \varphi'_+(0)$.
  Then $\varphi_\uparrow^*$ is constant on the interval $[p, \varphi'_+(0)]$, and hence $(\varphi_\uparrow^*)'_+(p) = 0$.
  Therefore, $ (\varphi_\uparrow^*)'_+(p) = 0 = \max\{0, (\varphi^*)'_+(p)\} $.

  \ding{173} Now suppose that $p > \varphi'_+(0)$.
  Then $\varphi_\uparrow^*(q) = \varphi^*(q)$ for all $q$ in a right-neighborhood of $p$, and therefore $ (\varphi_\uparrow^*)'_+(p) = (\varphi^*)'_+(p) $.
  Since $p > \varphi'_+(0)$, we also have $(\varphi^*)'_+(p) \geq 0$.
  Thus, $ (\varphi_\uparrow^*)'_+(p) = \max\{0, (\varphi^*)'_+(p)\} $.

  \ding{174} Finally, suppose that $p = \varphi'_+(0)$. For any $h > 0$, we have $\varphi_\uparrow^*(p+h) = \varphi^*(p+h)$, while \cref{lem:phi_uparrow_conjugate} gives $\varphi_\uparrow^*(p) = -\varphi(0)$.
  Since the argument above shows $p = \varphi'_+(0) \in \partial \varphi(0)$, Fenchel-Young duality yields $ \varphi^*(p) = -\varphi(0) = \varphi_\uparrow^*(p) $.
  Therefore,
  \begin{equation*}
    (\varphi_\uparrow^*)'_+(p)
    =
    \lim_{h \downarrow 0} \frac{\varphi_\uparrow^*(p+h) - \varphi_\uparrow^*(p)}{h}
    =
    \lim_{h \downarrow 0} \frac{\varphi^*(p+h) - \varphi^*(p)}{h}
    =
    (\varphi^*)'_+(p).
  \end{equation*}
  Since $p$ minimizes $\varphi^*$, we have $(\varphi^*)'_+(p) \geq 0$, and hence
  $ (\varphi_\uparrow^*)'_+(p) = \max\{0, (\varphi^*)'_+(p)\} $.
\end{proof}

\subsection{Proof of Result}
We now prove \cref{thm:general-phi-div-ideal}.

\begin{ProofOf}{thm:general-phi-div-ideal}
  In the following proof, we will show that $\dratio^\star$ (as per \cref{eq:general-phi-opt-dratio}) obtains an upper bound, and hence will be optimal for the specified objective~\cref{eq:phi-div-opt}.
  
  First notice that from \cref{lem:phi_uparrow_right_derivative}, we can rewrite $\dratio^\star$ as
  \begin{equation}
    \label{eq:opt-f-div-dratio-alt}
    \dratio^\star(z) = \max\left\{ 0, (\varphi^*)'_+ \left( \frac{b - L(z)}{\gamma} \right) \right\} = (\varphi_{\uparrow}^*)'_+\left( \frac{b - L(z)}{\gamma} \right) = ((\gamma \varphi_{\uparrow})^*)'_+\left( {b - L(z)} \right),
  \end{equation}
  where the middle equality follows from \cref{lem:phi_uparrow_conjugate,lem:phi_uparrow_right_derivative}:
  if $\frac{b-L(z)}{\gamma} < \varphi'_+(0)$ then $\varphi_\uparrow^*$ is locally constant, if $\frac{b-L(z)}{\gamma} > \varphi'_+(0)$ then $\varphi_\uparrow^* = \varphi^*$ locally, and if $\frac{b-L(z)}{\gamma} = \varphi'_+(0)$ then the right derivative is computed from the right and again agrees with $(\varphi^*)'_+$.
  Hence $(\varphi_\uparrow^*)'_+$ exists $\meas{P}$-almost surely at $\frac{b-L(z)}{\gamma}$.
  The last equality follows from the scaling rule of convex-conjugates and the chain rule of differentiation.

  We will now reframe the optimization objective in \cref{eq:phi-div-opt} as a maximization problem. In particular, we consider
  \begin{align*}
    \max_{\dratio \colon \mathcal{Z} \to \mathbb{R}_{\ge 0}}
    \quad
    & \expect_{\meas{P}} \left[ \dratio(\Z) (-L(\Z)) \right] - \gamma \cdot \div_{\varphi}(\dratio \cdot \meas{P} \Mid \meas{P}), \\
    \sucht
    \quad
    & \expect_{\meas{P}}[\dratio(\Z)] = 1.
  \end{align*}
  This gives us the following:
  \begin{align*}
    & \max_{\dratio \colon \mathcal{Z} \to \mathbb{R}_{\ge 0}, \expect_{\meas{P}}[\dratio(\Z)] = 1} \quad \expect_{\meas{P}} \left[ \dratio(\Z) (-L(\Z)) \right] - \gamma \cdot \div_{\varphi}(\dratio \cdot \meas{P} \Mid \meas{P}) \\
    &= \max_{\dratio \colon \mathcal{Z} \to \mathbb{R}_{\ge 0}, \expect_{\meas{P}}[\dratio(\Z)] = 1} \quad \expect_{\meas{P}} \left[ \dratio(\Z) (-L(\Z)) \right] - \gamma \cdot \div_{\varphi_\uparrow}(\dratio \cdot \meas{P} \Mid \meas{P}) \\
    &= \max_{\meas{Q} \in \triangle(\mathcal{Z})} \quad \expect_{\meas{Q}} \left[ -L(\Z) \right] - \gamma \cdot \div_{\varphi_\uparrow}(\meas{Q} \Mid \meas{P}) \\
    {}^{(\textrm{a})}&= \inf_{c \in \mathbb{R}} \quad \gamma \expect_{\meas{P}} \left[ \varphi_\uparrow^* \left( \frac{c - L(\Z)}{\gamma} \right) \right] - c \\
    {}^{(\textrm{b})}&= \inf_{c \in \mathbb{R}} \quad \expect_{\meas{P}} \left[ (\gamma\varphi_\uparrow)^* \left( c - L(\Z)\right) \right] - c \\
    {}^{(\textrm{c})}&\leq \expect_{\meas{P}} \left[ (\gamma\varphi_\uparrow)^* \left( b - L(\Z)\right) \right] - b,
  \end{align*}
  where $\textrm{(a)}$ follows from a standard convex analysis result, \eg, see \citet[Equation (22)]{lcTI}; $\textrm{(b)}$ follows from the scaling rule of convex-conjugates; and $\textrm{(c)}$ follows from taking a specific choice for $c \in \mathbb{R}$, in particular, our $b$ as per our assumption \cref{eq:opt-b}.
  The finiteness assumption \cref{eq:opt-b-finite} ensures that the right-hand side is well-defined at $c = b$.
  This yields our upper bound.

  We now consider a corresponding lower bound.
  By \cref{eq:opt-b}, $\dratio^\star$ is feasible for \cref{eq:phi-div-opt}.
  Moreover, by \cref{eq:opt-b-candidate-finite} and the Fenchel-Young identity established below, all expressions evaluated at $\dratio^\star$ are integrable, so the objective value at $\dratio^\star$ is well-defined.
  In particular, we consider
  \begin{align*}
    & \max_{\dratio \colon \mathcal{Z} \to \mathbb{R}_{\ge 0}, \expect_{\meas{P}}[\dratio(\Z)] = 1} \quad \expect_{\meas{P}} \left[ \dratio(\Z) (-L(\Z)) \right] - \gamma \cdot \div_{\varphi_\uparrow}(\dratio \cdot \meas{P} \Mid \meas{P}) \\
    &= \max_{\dratio \colon \mathcal{Z} \to \mathbb{R}_{\ge 0}, \expect_{\meas{P}}[\dratio(\Z)] = 1} \quad \expect_{\meas{P}} \left[ \dratio(\Z) (-L(\Z))  - \gamma \cdot (\varphi_\uparrow \circ \dratio)(\Z) \right] \\
    &= \max_{\dratio \colon \mathcal{Z} \to \mathbb{R}_{\ge 0}, \expect_{\meas{P}}[\dratio(\Z)] = 1} \quad \expect_{\meas{P}} \left[ \dratio(\Z) (-L(\Z))  - ((\gamma \varphi_\uparrow) \circ \dratio)(\Z) \right] \\
    &\geq \expect_{\meas{P}} \left[ \dratio^\star(\Z) (-L(\Z))  - ((\gamma \varphi_\uparrow) \circ \dratio^\star)(\Z) \right] \\
    &= \expect_{\meas{P}} \left[ \dratio^\star(\Z) (b -L(\Z))  - ((\gamma \varphi_\uparrow) \circ \dratio^\star)(\Z) \right] - \expect_{\meas{P}} \left[ \dratio^\star(\Z) b \right] \\
    &= \expect_{\meas{P}} \left[ \dratio^\star(\Z) (b -L(\Z))  - ((\gamma \varphi_\uparrow) \circ \dratio^\star)(\Z) \right] - b \cdot \expect_{\meas{P}} \left[ \dratio^\star(\Z) \right] \\
    &= \expect_{\meas{P}} \left[ \dratio^\star(\Z) (b -L(\Z))  - ((\gamma \varphi_\uparrow) \circ \dratio^\star)(\Z) \right] - b,
  \end{align*}
  where the last equality, we utilize the assumption on $b$, \cref{eq:opt-b}.

  From our assumptions, it follows that $p \mapsto (\gamma\varphi_\uparrow)^*(p)$ is proper, convex, and lower semi-continuous on $\mathbb{R}$.
  By \cref{eq:opt-f-div-dratio-alt}, $(\varphi_\uparrow^*)'_+$ exists at $\frac{b - L(z)}{\gamma}$ for $\meas{P}$-almost every $z$.
  Therefore, the scaling rule for convex conjugates implies that $((\gamma\varphi_\uparrow)^*)'_+$ exists at $b - L(z)$ for $\meas{P}$-almost every $z$.
  Moreover, \cref{eq:opt-b-finite} implies $(\gamma\varphi_\uparrow)^*(b - L(z)) < +\infty$ $\meas{P}$-almost surely.
  For any proper convex lower semi-continuous function on $\mathbb{R}$, whenever the right derivative exists at a point in its effective domain, that right derivative belongs to the subdifferential at that point.
  Hence, leveraging \cref{eq:opt-f-div-dratio-alt}, via Fenchel-Young duality, we have for $\meas{P}$-almost every $z$
  \begin{equation*}
    \dratio^\star(z) \in \partial (\gamma \varphi_\uparrow)^*(b - L(z))
    \quad \iff \quad
    b - L(z) \in \partial (\gamma \varphi_\uparrow)(\dratio^\star(z)).
  \end{equation*}
  Furthermore, Fenchel-Young equality gives for $\meas{P}$-almost every $z$
  \begin{equation*}
    (\gamma \varphi_\uparrow)^*(b - L(z)) + (\gamma \varphi_\uparrow)(\dratio^\star(z)) = (\gamma \varphi_\uparrow)^*(b - L(z)) + ((\gamma \varphi_\uparrow) \circ \dratio^\star)(z) = (b - L(z)) \cdot (\dratio^\star(z)).
  \end{equation*}
  Since \cref{eq:opt-b-candidate-finite} ensures the right-hand side below is integrable, taking expectations and substituting this identity into our lower bound yields,
  \begin{align*}
    & \max_{\dratio \colon \mathcal{Z} \to \mathbb{R}_{\ge 0}, \expect_{\meas{P}}[\dratio(\Z)] = 1} \quad \expect_{\meas{P}} \left[ \dratio(\Z) (-L(\Z)) \right] - \gamma \cdot \div_{\varphi_\uparrow}(\dratio \cdot \meas{P} \Mid \meas{P}) \\
    &\geq \expect_{\meas{P}} \left[ \dratio^\star(\Z) (b -L(\Z))  - ((\gamma \varphi_\uparrow) \circ \dratio^\star)(\Z) \right] - b \\
    &= \expect_{\meas{P}} \left[ (\gamma \varphi_\uparrow)^*(b - L(\Z)) \right] - b,
  \end{align*}
  which exactly matches our upper bound. That is, $\dratio^\star$ is optimal as per \cref{eq:phi-div-opt}.
\end{ProofOf}

\section{Connection to Chow's Rule}
\label{app:chow}

\subsection{Proof of \cref{thm:rel-to-chow}}

\begin{ProofOf}{thm:rel-to-chow}
  Fix $\gamma > 0$.
  From \cref{cor:kl_weights} we identify that the density-ratio $\dratio_{\itype}$ corresponds to the ratio of normalized weight function $w^{\cdot}_\itype$.
  Thus, for a $\tau > 0$, we have rejectors
  \begin{equation*}
    r_\itype(x; \tau, \gamma, \gamma)
    = \left \llbracket \frac{w_\itype(x; \gamma)}{w^{(\expert)}_\itype(x; \gamma)} \leq \tau \right \rrbracket
    = \left \llbracket \frac{\tilde{w}_\itype(x; \gamma)}{\tilde{w}^{(\expert)}_\itype(x; \gamma)} \leq \tau' \right \rrbracket,
  \end{equation*}
  where we are just setting $\tau' = \tau Z^{(\expert)}_{\itype, \gamma} / Z_{\itype, \gamma}$ (a positive scaling of original threshold value).

  To show that the $\kl$-marginal density-ratio deferral rule is equivalent to Chow's Rule \wrt $\meas{P}$, we consider the following simplification:
  \begin{align*}
    \frac{\tilde{w}_\mtype(x; \gamma)}{\tilde{w}^{(\expert)}_\mtype(x; \gamma)} \leq \tau'
    & \iff
    \frac{\exp\left( -\frac{\expect_{\pos^\star(x)}\left[ \ell(x, \Y, h(x)) \right] }{\gamma} \right)}{\exp\left( -\frac{\expect_{\pos^\star(x)}\left[ \ell(x, \Y, h^{(\expert)}(x)) \right] }{\gamma} \right)} \leq \tau' \\
    & \iff
    {\exp\left( \frac{\expect_{\pos^\star(x)}\left[ \ell(x, \Y, h^{(\expert)}(x)) \right] - \expect_{\pos^\star(x)}\left[ \ell(x, \Y, h(x)) \right] }{\gamma} \right)} \leq \tau' \\
    & \iff
    \expect_{\pos^\star(x)}\left[ \ell(x, \Y, h^{(\expert)}(x)) - \ell(x, \Y, h(x)) \right] \leq \gamma \log \tau'.
  \end{align*}
  Notice that this is exactly the condition for $r^\star(x; \gamma \log \tau', \meas{P})$.
  For any prescribed cost $c \in \mathbb{R}$, set $\tau' = \exp(c / \gamma) > 0$. Since $\tau' = \tau Z^{(\expert)}_{\mtype, \gamma} / Z_{\mtype, \gamma}$, choosing
  \begin{equation*}
    \tau = \exp(c / \gamma) \frac{Z_{\mtype, \gamma}}{Z^{(\expert)}_{\mtype, \gamma}} > 0
  \end{equation*}
  yields $\gamma \log \tau' = c$.

  To show the connection that $\kl$-joint density-ratio deferral rule has between Chow's Rule on the expert-tilted distribution, we consider the following:
  \begin{align*}
    \frac{\tilde{w}_\jtype(x; \gamma)}{\tilde{w}^{(\expert)}_\jtype(x; \gamma)} \leq \tau'
    & \iff \frac{\expect_{\bpos(x)}\left[ \exp\left( -\frac{\ell(x, \Y, h(x))}{\gamma} \right) \right]}{\expect_{\bpos(x)}\left[ \exp\left( -\frac{\ell(x, \Y, h^{(\expert)}(x))}{\gamma} \right) \right]} \leq \tau' \\
    & \iff \expect_{\bpos(x)}\left[ t(\Y \mid x; \gamma, \meas{P}) \frac{\exp\left( -\frac{\ell(x, \Y, h(x))}{\gamma} \right)}{\exp\left( -\frac{\ell(x, \Y, h^{(\expert)}(x))}{\gamma} \right)} \right] \leq \tau' \\
    & \iff \log \expect_{\tbpos(x; \gamma)}\left[ \exp\left( \frac{\ell(x, \Y, h^{(\expert)}(x)) - \ell(x, \Y, h(x))}{\gamma} \right) \right] \leq \log \tau'.
  \end{align*}
  Using Jensen's inequality, we have
  \begin{align*}
    \frac{\tilde{w}_\jtype(x; \gamma)}{\tilde{w}^{(\expert)}_\jtype(x; \gamma)} \leq \tau'
    \implies
    \expect_{\tbpos(x; \gamma)}\left[ {\ell(x, \Y, h^{(\expert)}(x)) - \ell(x, \Y, h(x))} \right] \leq \gamma \log \tau'.
  \end{align*}
  The RHS condition is exactly equivalent to $r^\star(x; \gamma \log \tau', \tilde{\meas{P}}_{\gamma})$ for the posterior twisted distribution $\tilde{\meas{P}}_{\gamma}$. For any prescribed cost $c \in \mathbb{R}$, set $\tau' = \exp(c / \gamma) > 0$, and then choose
  \begin{equation*}
    \tau = \exp(c / \gamma) \frac{Z_{\jtype, \gamma}}{Z^{(\expert)}_{\jtype, \gamma}} > 0.
  \end{equation*}
  This gives $\gamma \log \tau' = c$, hence $r_\jtype(x; \tau, \gamma, \gamma) \leq r^\star(x; c, \tilde{\meas{P}}_\gamma)$.
\end{ProofOf}

\section{Joint Ideal DR CPE Loss Properties}%
\label{app:joint}

\subsection{Proof of \cref{lem:obj-jensen-ordering}}

\begin{ProofOf}{lem:obj-jensen-ordering}
  The result immediately follows from Jensen's inequality.
\end{ProofOf}

\subsection{Proof of \cref{thm:joint-marginal-gap}}

\begin{lemma}%
  \label{lem:taylor-exp}
  Let $\Z$ be a bounded random variable in $[m, M]$ for $m < M$. Furthermore, let $\expect[\Z] = 0$. Then for any $\gamma > 0$, we have that
  \begin{equation}
    \exp\left( - \frac{M}{\gamma} \right)
    \frac{1}{2\gamma^2}
    \var[\Z]
    \le
    \left \vert \expect\left[\exp\left( -\frac{\Z}{\gamma} \right)\right] - 1 \right \vert
    \le
    \exp\left( - \frac{m}{\gamma} \right)
    \frac{1}{2\gamma^2}
    \var[\Z].
  \end{equation}
\end{lemma}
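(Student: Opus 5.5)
Since $\expect[\Z]=0$, I would write $\expect[\exp(-\Z/\gamma)] - 1 = \expect[\exp(-\Z/\gamma) - 1 + \Z/\gamma]$. This subtracts the linear term for free, and what remains is a second-order Taylor remainder, which I would sandwich pointwise.

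Set $g(u) \defeq e^{-u/\gamma} - 1 + u/\gamma$. Then $g(0) = 0$, $g'(0) = 0$, and $g''(u) = e^{-u/\gamma}/\gamma^2$. Taylor's theorem with Lagrange remainder gives, for each $u \in [m, M]$, some $\xi$ between $0$ and $u$ with $g(u) = \frac{1}{2\gamma^2} e^{-\xi/\gamma} u^2$.

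The key point is that $\xi$ lies in $[m, M]$ whenever $0$ lies in $[m,M]$. This holds here: since $\Z \in [m,M]$ and $\expect[\Z] = 0$, we have $m \le 0 \le M$. (If $m = 0$ or $M = 0$ with $m<M$, then $\Z$ is a.s. zero and the claim is trivial.) Because $e^{-\xi/\gamma}$ is decreasing in $\xi$, this yields the pointwise bounds
\begin{equation*}
  \frac{e^{-M/\gamma}}{2\gamma^2}\, u^2 \;\le\; g(u) \;\le\; \frac{e^{-m/\gamma}}{2\gamma^2}\, u^2 .
\end{equation*}

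Next I would substitute $u = \Z$ and take expectations, using $\expect[\Z^2] = \var[\Z]$ since the mean is zero. Because $g \ge 0$, the quantity $\expect[\exp(-\Z/\gamma)] - 1 = \expect[g(\Z)]$ is nonnegative, so it equals its absolute value and the two-sided bound follows directly. Integrability is automatic because $\Z$ is bounded.

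No step is a real obstacle. The only care needed is confirming that the Lagrange point $\xi$ stays in $[m,M]$, which is where $\expect[\Z]=0$ enters by forcing $m \le 0 \le M$, and noting the sign so that the absolute value can be dropped.
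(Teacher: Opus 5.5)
Your proposal is correct and follows essentially the same route as the paper: a second-order Taylor expansion of $z \mapsto \exp(-z/\gamma)$ about $0$ with Lagrange remainder, bounding the remainder factor between $\exp(-M/\gamma)$ and $\exp(-m/\gamma)$, then taking expectations and using $\expect[\Z] = 0$ to cancel the linear term. You also make explicit two steps the paper leaves implicit: $\expect[\Z]=0$ forces $m \le 0 \le M$, so the intermediate point stays in $[m,M]$; and the resulting quantity is nonnegative, so the absolute value can be dropped.
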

\begin{proof}
  We consider the 2nd order Taylor expansion of the map $z \mapsto \exp(-z / \gamma)$ around $z = 0$ with an explicit Lagrange remainder. Given the maps sufficient smoothness, we have
  \begin{align*}
    \exp(-z / \gamma)
    &= \exp(0) - \frac{\exp(0)}{\gamma}z + \frac{\exp(- (\theta z) / \gamma)}{2\gamma^2} z^2 \\
    &= 1 - \frac{z}{\gamma} + \frac{z^2}{2\gamma^2} \exp(- (\theta z) / \gamma),
  \end{align*}
  where $\theta \in [0, 1]$.

  Re-arranging, setting $z = \Z$, and using the positivity of $z^2 / (2\gamma^2)$, we get the bound
  \begin{align*}
    -\frac{\Z}{\gamma} + \frac{\Z^2}{2\gamma^2} \exp\left( -\frac{M}{\gamma} \right)
    \le
    \exp\left(-\frac{\Z}{\gamma}\right) - 1
    \le
    -\frac{\Z}{\gamma} + \frac{\Z^2}{2\gamma^2} \exp\left( -\frac{m}{\gamma} \right).
  \end{align*}
  Finally, taking an expectation and noting that $\Z$ is a centered random variable yields the desired result.
\end{proof}

\begin{theorem}[\cref{thm:joint-marginal-gap}, Formal]%
  \label{thm:joint-marginal-gap-formal}
  Suppose $0 \leq \ell \leq B$ and that the partial losses satisfy $\drloss_{+1} \geq 0$ and $\drloss_{-1} \geq 0$. Then
  \begin{align*}
    &\frac{\exp(-B / \gamma)}{2\gamma^2}
    \expect_{\meas{P}_{\rm x}}\left[(\drloss_{+1} \circ s)(\X) \var_{\bpos(\X)}[\ell(\X, \Y, h(\X))]\right] \\
    &\qquad\qquad
    + \frac{\exp(-B / \gamma^{(\expert)})}{2(\gamma^{(\expert)})^2}
    \expect_{\meas{P}_{\rm x}}\left[(\drloss_{-1} \circ s)(\X) \var_{\bpos(\X)}[\ell(\X, \Y, h^{(\expert)}(\X))]\right] \\
    &\leq
    \mathcal{L}_{\jtype}(s; \gamma, \gamma^{(\expert)}) - \mathcal{L}_{\mtype}(s; \gamma, \gamma^{(\expert)}) \\
    &\leq
    \frac{1}{2\gamma^2}
    \expect_{\meas{P}_{\rm x}}\left[(\drloss_{+1} \circ s)(\X) \var_{\bpos(\X)}[\ell(\X, \Y, h(\X))]\right] \\
    &\qquad\qquad
    + \frac{1}{2(\gamma^{(\expert)})^2}
    \expect_{\meas{P}_{\rm x}}\left[(\drloss_{-1} \circ s)(\X) \var_{\bpos(\X)}[\ell(\X, \Y, h^{(\expert)}(\X))]\right].
  \end{align*}
\end{theorem}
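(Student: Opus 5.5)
The plan is to compare the two losses pointwise in $x$ and then integrate. By \cref{def:ideal-loss,cor:kl_weights}, both $\mathcal{L}_{\jtype}$ and $\mathcal{L}_{\mtype}$ are expectations over $\meas{P}_{\rm x}$ with the same partial losses $\drloss_{\pm 1}\circ s$. They differ only in the unnormalized weights. Hence
\begin{equation*}
  \mathcal{L}_{\jtype} - \mathcal{L}_{\mtype}
  = \expect_{\meas{P}_{\rm x}}\!\left[ (\drloss_{+1}\circ s)(\X)\,\Delta(\X;h,\gamma) + (\drloss_{-1}\circ s)(\X)\,\Delta(\X;h^{(\expert)},\gamma^{(\expert)}) \right],
\end{equation*}
where $\Delta(x;g,\gamma) \defeq \expect_{\bpos(x)}[\exp(-\ell(x,\Y,g(x))/\gamma)] - \exp(-\expect_{\bpos(x)}[\ell(x,\Y,g(x))]/\gamma)$. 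Since $\drloss_{\pm1}\ge 0$, it suffices to bound each $\Delta(x;g,\gamma)$ above and below by a multiple of $\var_{\bpos(x)}[\ell]$, multiply by the nonnegative partial losses, and integrate. Both $\ell$ and $\drloss$ are nonnegative, so Tonelli handles the exchange.

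For the pointwise bound, fix $x$ and let $\mu(x) = \expect_{\bpos(x)}[\ell(x,\Y,g(x))] \in [0,B]$. Factor out $\exp(-\mu/\gamma)$:
\begin{equation*}
  \Delta(x;g,\gamma) = \exp(-\mu/\gamma)\left(\expect[\exp(-\Z/\gamma)] - 1\right),
  \qquad \Z \defeq \ell(x,\Y,g(x)) - \mu .
\end{equation*}
The variable $\Z$ is centered and lies in $[-\mu, B-\mu]$. By Jensen the bracket is nonnegative, so the absolute value in \cref{lem:taylor-exp} can be dropped. Applying \cref{lem:taylor-exp} with $m=-\mu$ and $M = B-\mu$ gives
\begin{equation*}
  \exp(-\tfrac{\mu}{\gamma})\exp(-\tfrac{B-\mu}{\gamma})\tfrac{\var[\Z]}{2\gamma^2}
  \le \Delta \le
  \exp(-\tfrac{\mu}{\gamma})\exp(\tfrac{\mu}{\gamma})\tfrac{\var[\Z]}{2\gamma^2}.
\end{equation*}
The exponentials combine to $\exp(-B/\gamma)$ on the left and to $1$ on the right. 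Also $\var[\Z] = \var_{\bpos(x)}[\ell(x,\Y,g(x))]$. These are exactly the constants in the formal statement.

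The main point to get right is this recentering step. Applying \cref{lem:taylor-exp} directly to the uncentered loss would not work, because the lemma needs $\expect[\Z]=0$. One must also check that the prefactor $\exp(-\mu/\gamma)$ cancels the $\mu$-dependence in the endpoints, leaving constants independent of $x$. If $\var=0$ at some $x$, both sides vanish, so the argument needs $m<M$ only when $\Z$ is nondegenerate. The final step is to substitute the two pointwise bounds, for $(h,\gamma)$ and $(h^{(\expert)},\gamma^{(\expert)})$, into the decomposition and integrate.
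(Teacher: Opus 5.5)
Your proposal is correct and follows the paper's proof essentially step for step. You use the same decomposition of $\mathcal{L}_{\jtype}-\mathcal{L}_{\mtype}$ into the two weight-difference terms, pull out the prefactor $\exp(-\mu/\gamma)$, and recenter to $Z=\ell-\mu\in[-\mu,B-\mu]$; you then apply \cref{lem:taylor-exp} with $m=-\mu$, $M=B-\mu$, so the exponentials collapse to $\exp(-B/\gamma)$ and $1$. Your explicit Jensen remark, that the bracket is nonnegative so the absolute value can be dropped, is a small clarification the paper leaves implicit.
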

\begin{ProofOf}{thm:joint-marginal-gap-formal}
  From \cref{lem:obj-jensen-ordering} and the non-negativity of the partial losses, we have that
  \begin{equation*}
    \mathcal{L}_{\jtype}(s; \gamma, \gamma^{(\expert)}) - \mathcal{L}_{\mtype}(s; \gamma, \gamma^{(\expert)})
    = A + B,
  \end{equation*}
  where
  \begin{align*}
    A &= \expect_{\X \sim \meas{P}_{\rm x}}\left[ (\drloss_{+1} \circ s)(\X) \expect_{\Y \sim \bpos(\X)}\left[ \exp\left( - \frac{\ell(\X, \Y, h(\X))}{\gamma} \right) - \exp\left( - \frac{\expect_{\Y \sim \bpos(\X)}[\ell(\X, \Y, h(\X))]}{\gamma} \right)\right] \right] \\
    B &= \expect_{\X \sim \meas{P}_{\rm x}}\left[ (\drloss_{-1} \circ s)(\X) \expect_{\Y \sim \bpos(\X)}\left[ \exp\left( - \frac{\ell(\X, \Y, h^{(\expert)}(\X))}{\gamma^{(\expert)}} \right) - \exp\left( - \frac{\expect_{\Y \sim \bpos(\X)}[\ell(\X, \Y, h^{(\expert)}(\X))]}{\gamma^{(\expert)}} \right)\right] \right].
  \end{align*}
  We will bound each of these terms identically, noting that the only difference between them is that $\gamma$ is replaced by $\gamma^{(\expert)}$ and $h$ is replaced by $h^{(\expert)}$. In particular, we examine the inner expectation of $A$. \Wlog, we have
  \begin{align*}
    & \expect_{\Y \sim \bpos(\X)}\left[ \exp\left( - \frac{\ell(\X, \Y, h(\X))}{\gamma} \right) - \exp\left( - \frac{\expect_{\Y \sim \bpos(\X)}[\ell(\X, \Y, h(\X))]}{\gamma} \right)\right] \\
    &=
    \exp\left( - \frac{\expect_{\Y \sim \bpos(\X)}[\ell(\X, \Y, h(\X))]}{\gamma} \right)
    \expect_{\Y \sim \bpos(\X)}\left[ \exp\left( - \frac{\ell(\X, \Y, h(\X))}{\gamma} + \frac{\expect_{\Y \sim \bpos(\X)}[\ell(\X, \Y, h(\X))]}{\gamma}\right) - 1 \right] \\
    &=
    \exp\left( - \frac{\expect_{\Y \sim \bpos(\X)}[\ell(\X, \Y, h(\X))]}{\gamma} \right)
    \expect_{\Y \sim \bpos(\X)}\left[ \exp\left( - \frac{\ell(\X, \Y, h(\X)) - \expect_{\Y \sim \bpos(\X)}[\ell(\X, \Y, h(\X))]}{\gamma}\right) - 1 \right].
  \end{align*}
  Noting that for fixed $\X = x$, we can define a centered random variable
  \begin{equation*}
    \Z = \ell(\X, \Y, h(\X)) - \langle \ell \rangle,
  \end{equation*}
  where we shorthand $\langle \ell \rangle = \expect_{\Y \sim \bpos(\X)}[\ell(\X, \Y, h(\X))]$.

  Since $0 \leq \ell \leq B$, we have $\langle \ell \rangle \in [0, B]$ and therefore $\Z \in [-\langle \ell \rangle, B - \langle \ell \rangle]$.
  Thus, via \cref{lem:taylor-exp}, we have that 
  \begin{align*}
    &\expect_{\Y \sim \bpos(\X)}\left[ \exp\left( - \frac{\ell(\X, \Y, h(\X))}{\gamma} \right) - \exp\left( - \frac{\expect_{\Y \sim \bpos(\X)}[\ell(\X, \Y, h(\X))]}{\gamma} \right)\right] \\
    &\in \frac{1}{2\gamma^2} \var[\ell(\X, \Y, h(\X)) \mid \X = x] \left[ \exp\left(-\frac{B}{\gamma}\right), 1 \right].
  \end{align*}
  Multiplying by the non-negative factor $(\drloss_{+1} \circ s)(x)$ and taking an outer expectation gives
  \begin{align*}
    &\frac{\exp(-B / \gamma)}{2\gamma^2}
    \expect_{\meas{P}_{\rm x}}\left[(\drloss_{+1} \circ s)(\X) \var[\ell(\X, \Y, h(\X)) \mid \X]\right] \\
    &\qquad\qquad\leq A
    \leq \\
    &\frac{1}{2\gamma^2}
    \expect_{\meas{P}_{\rm x}}\left[(\drloss_{+1} \circ s)(\X) \var[\ell(\X, \Y, h(\X)) \mid \X]\right].
  \end{align*}
  Applying the same argument to $B$ with $(\gamma^{(\expert)}, h^{(\expert)})$ in place of $(\gamma, h)$ and summing the two bounds yields the claim.
\end{ProofOf}

\section{Connections to Expert-Comparison Estimation}%
\label{app:connections}

The joint ideal DR CPE losses connect to various expert-comparison estimation approaches. In the following, we prove \cref{prop:limit_to_diff} and also consider a few different instantiations of $\ell$ and $\drloss$ which yields other interesting expert-comparison estimation approaches.

\begin{lemma}
  \label{lem:limit_exp_weights}
  Let $c \geq 0$. Then
  \begin{align*}
    \lim_{\gamma \to 0^+} \exp\left( -\frac{c}{\gamma} \right) &=
    \begin{cases}
      0 & c > 0, \\
      1 & c = 0.
    \end{cases}
    \intertext{ Furthermore, }
    \lim_{\gamma \to \infty} \exp\left( -\frac{c}{\gamma} \right) &= 1.
  \end{align*}
\end{lemma}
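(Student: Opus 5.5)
We argue directly from elementary properties of the exponential function, splitting into the two limits and, for $\gamma \to 0^+$, into the cases $c > 0$ and $c = 0$.

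\textbf{Case $c = 0$.} Here $\exp(-c/\gamma) = \exp(0) = 1$ for every $\gamma > 0$. The map $\gamma \mapsto \exp(-c/\gamma)$ is therefore constantly $1$, so both limits ($\gamma \to 0^+$ and $\gamma \to \infty$) equal $1$.

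\textbf{Case $c > 0$, $\gamma \to 0^+$.} The argument $-c/\gamma$ tends to $-\infty$. Since $\exp$ is continuous and increasing with $\lim_{u \to -\infty} \exp(u) = 0$, the limit is $0$. For an explicit version, fix $\varepsilon \in (0,1)$ and take $\gamma < c / \log(1/\varepsilon)$. Then $-c/\gamma < \log \varepsilon$, so $0 < \exp(-c/\gamma) < \varepsilon$.

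\textbf{Case $c > 0$, $\gamma \to \infty$.} The argument $-c/\gamma$ tends to $0$. Continuity of $\exp$ at $0$ then gives the limit $\exp(0) = 1$.

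There is no genuine obstacle. The only point requiring care is separating $c = 0$ from $c > 0$ in the $\gamma \to 0^+$ limit, since the limiting value differs. That dichotomy is what later lets the weights $\exp(-\ell/\gamma)$ under the zero-one loss collapse to correctness indicators $\llbracket \ell = 0 \rrbracket$ in \cref{prop:limit_to_diff}.
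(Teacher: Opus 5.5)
Your proof is correct and follows essentially the same route as the paper. The case $c=0$ is handled by noting the map is identically $1$, the case $c>0$ with $\gamma\to 0^+$ by $-c/\gamma\to-\infty$, and the case $\gamma\to\infty$ by continuity of $\exp$ at $0$; your explicit threshold $\gamma < c/\log(1/\varepsilon)$ is a correct quantitative addition.
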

\begin{proof}
  If $c > 0$, then $c / \gamma \to \infty$ as $\gamma \to 0^+$, and therefore
  $-c / \gamma \to -\infty$. Since $\exp(x) \to 0$ as $x \to -\infty$, we obtain
  \begin{equation*}
    \lim_{\gamma \to 0^+} \exp\left( -\frac{c}{\gamma} \right) = 0.
  \end{equation*}
  If instead $c = 0$, then for every $\gamma > 0$ we have
  \begin{equation*}
    \exp\left( -\frac{c}{\gamma} \right) = \exp(0) = 1,
  \end{equation*}
  and hence
  \begin{equation*}
    \lim_{\gamma \to 0^+} \exp\left( -\frac{c}{\gamma} \right) = 1.
  \end{equation*}
  Finally, as $\gamma \to \infty$ we have $c / \gamma \to 0$, so by continuity of
  the exponential function,
  \begin{equation*}
    \lim_{\gamma \to \infty} \exp\left( -\frac{c}{\gamma} \right) = \exp(0) = 1.
  \end{equation*}
\end{proof}

\begin{corollary}
  \label{cor:square_limit}
  Suppose that $\ell$ the zero-one loss function. Then
  \begin{align*}
    \lim_{\gamma \to 0^+} \mathcal{L}_{\jtype}(s; \gamma, \gamma)
    = \expect_{\meas{P}}\left[
      \llbracket f(\X) = \Y \rrbracket \cdot (\drloss_{+1} \circ s)(\X)
      + \llbracket f^{(\expert)}(\X) = \Y \rrbracket \cdot (\drloss_{-1} \circ s)(\X)
    \right],
  \end{align*}
\end{corollary}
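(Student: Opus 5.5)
The plan is to start from the simplified form of the joint loss in \cref{eq:joint-ideal-loss-simplify} with $\gamma = \gamma^{(\expert)}$:
\begin{equation*}
  \mathcal{L}_{\jtype}(s; \gamma, \gamma) = \expect_{(\X,\Y)\sim\meas{P}}\left[ e^{-\ell(\X,\Y,h(\X))/\gamma} (\drloss_{+1}\circ s)(\X) + e^{-\ell(\X,\Y,h^{(\expert)}(\X))/\gamma} (\drloss_{-1}\circ s)(\X) \right].
\end{equation*}
With $\ell$ the zero-one loss we have $\ell(x,y,h(x)) = \llbracket f(x) \neq y \rrbracket \in \{0,1\}$, and similarly for the expert. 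Hence each exponential weight takes only the values $1$ (when the prediction is correct) or $e^{-1/\gamma}$ (when it is wrong).

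\textbf{Pointwise limit.} By \cref{lem:limit_exp_weights}, for every $(x,y)$ the weight $e^{-\ell(x,y,h(x))/\gamma}$ converges to $\llbracket \ell(x,y,h(x)) = 0 \rrbracket = \llbracket f(x) = y \rrbracket$ as $\gamma \to 0^+$, and likewise the expert weight converges to $\llbracket f^{(\expert)}(x) = y \rrbracket$. So the integrand converges pointwise to the integrand on the right-hand side of the claim.

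\textbf{Exchanging limit and expectation.} Since the weights are monotone non-increasing as $\gamma \downarrow 0$ and bounded in $[0,1]$, I will use dominated (or monotone) convergence. I will split the integrand into its $\drloss_{+1}$ part and its $\drloss_{-1}$ part and treat each separately.

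When the partial losses are non-negative, as in the setting of \cref{thm:joint-marginal-gap} and for the squared loss, the argument goes as follows.
\begin{itemize}
  \item The integrand is dominated by $(\drloss_{+1}\circ s)(\X) + (\drloss_{-1}\circ s)(\X)$.
  \item If this bound is integrable, dominated convergence applies directly.
  \item If it is not integrable, monotone convergence (decreasing sequence) still applies as soon as the loss is finite for some $\gamma_0$, which holds by the implicit assumption that $\mathcal{L}_{\jtype}(s;\gamma,\gamma)$ is finite.
\end{itemize}
Either way, the limit passes inside the expectation and gives the stated expression.

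\textbf{Main obstacle.} The only real difficulty is justifying the exchange of limit and expectation: the corollary implicitly requires $\expect_{\meas{P}_{\rm x}}[|\drloss_{\pm1}\circ s|] < \infty$ (or finiteness of the loss at some $\gamma$). I would state this integrability assumption explicitly and then invoke dominated convergence using the bound $e^{-\ell/\gamma} \le 1$. The rest is the direct pointwise computation above.
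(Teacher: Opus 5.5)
Your proposal is correct and follows the paper's proof: you take the pointwise limits of the exponential weights in \cref{eq:joint-ideal-loss-simplify} via \cref{lem:limit_exp_weights}, then use dominated convergence to exchange limit and expectation. Your explicit integrability assumption on $\drloss_{\pm1}\circ s$ makes precise what the paper leaves implicit. Also, the zero-one weights are bounded below by $e^{-1/\gamma}>0$, so for non-negative partial losses, finiteness of the loss at any single $\gamma$ already implies that integrability. Your monotone-convergence branch is therefore never needed.
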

\begin{proof}
  As $\ell$ is the zero-one loss, for every $(x, y)$ we have
  \begin{equation*}
    \ell(x, y, h(x)) = 1 - \llbracket f(x) = y \rrbracket,
    \qquad
    \ell(x, y, h^{(\expert)}(x)) = 1 - \llbracket f^{(\expert)}(x) = y \rrbracket.
  \end{equation*}
  Hence, by the preceding \cref{lem:limit_exp_weights},
  \begin{align*}
    \lim_{\gamma \to 0^+} \exp\left( -\frac{\ell(\X, \Y, h(\X))}{\gamma} \right)
    &= \llbracket f(\X) = \Y \rrbracket, \\
    \lim_{\gamma \to 0^+} \exp\left( -\frac{\ell(\X, \Y, h^{(\expert)}(\X))}{\gamma} \right)
    &= \llbracket f^{(\expert)}(\X) = \Y \rrbracket.
  \end{align*}

  Therefore, from \cref{eq:joint-ideal-loss-simplify},
  \begin{align*}
    \lim_{\gamma \to 0^+} \mathcal{L}_{\jtype}(s; \gamma, \gamma)
    &= \lim_{\gamma \to 0^+}
    \expect_{\meas{P}}\left[
      \exp\left( -\frac{\ell(\X, \Y, h(\X))}{\gamma} \right) (\drloss{+1} \circ s)(\X) \right .\\
      &\hspace{4.8em} \left.
      + \exp\left( -\frac{\ell(\X, \Y, h^{(\expert)}(\X))}{\gamma} \right) (\drloss{-1} \circ s)(\X)
    \right] \\
    &= \expect_{\meas{P}}\left[
      \llbracket f(\X) = \Y \rrbracket \cdot (\drloss_{+1} \circ s)(\X)
      + \llbracket f^{(\expert)}(\X) = \Y \rrbracket \cdot (\drloss_{-1} \circ s)(\X)
    \right],
  \end{align*}
  where we used dominated convergence in the last step.
\end{proof}

\subsection{Proof of \cref{prop:limit_to_diff}}

\begin{ProofOf}{prop:limit_to_diff}
  For the squared loss, the partial losses are~\citep{menon2016}
  \begin{equation*}
    \drloss_{+1}(v) = (v - 1)^2,
    \qquad
    \drloss_{-1}(v) = (v + 1)^2.
  \end{equation*}
  Therefore, from \cref{cor:square_limit}
  \begin{align*}
    \lim_{\gamma \to 0^+} \mathcal{L}_{\jtype}(s; \gamma, \gamma)
    = \expect_{\meas{P}}\left[
      \llbracket f(\X) = \Y \rrbracket (s(\X) - 1)^2
      + \llbracket f^{(\expert)}(\X) = \Y \rrbracket (s(\X) + 1)^2
    \right].
  \end{align*}

  To identify the optimal scorer of the limiting objective, we condition on $\X = x$ and consider the pointwise conditional risk
  \begin{align*}
    R_x(v)
    &=
    \expect\left[
      \llbracket f(\X) = \Y \rrbracket (v - 1)^2
      +
      \llbracket f^{(\expert)}(\X) = \Y \rrbracket (v + 1)^2
      \mid \X = x
    \right] \\
    &= p_h(x) (v - 1)^2 + p_e(x) (v + 1)^2,
  \end{align*}
  where
  \begin{equation*}
    p_h(x) = \prob_{\meas{P}}[f(\X)=\Y \mid \X = x],
    \qquad
    p_e(x) = \prob_{\meas{P}}[f^{(\expert)}(\X)=\Y \mid \X = x].
  \end{equation*}
  Equivalently, with
  \begin{equation*}
    \Delta(x) = \expect[\Y^{({\rm d})} \mid \X = x] = p_h(x) - p_e(x),
    \qquad
    \Sigma(x) = p_h(x) + p_e(x),
  \end{equation*}
  we can write
  \begin{equation*}
    R_x(v) = \Sigma(x) v^2 - 2 \Delta(x) v + \Sigma(x).
  \end{equation*}
  Therefore, whenever $\Sigma(x) > 0$, the unique minimizer satisfies
  \begin{equation*}
    \frac{\partial R_x(v)}{\partial v}
    =
    2 \Sigma(x) v - 2 \Delta(x) = 0,
  \end{equation*}
  and hence
  \begin{equation*}
    s_0^\star(x) = \frac{\Delta(x)}{\Sigma(x)}.
  \end{equation*}
  If $\Sigma(x) = 0$, then $R_x(v) = 0$ for all $v$, so any value of $v$ is
  optimal. Finally, for any $\tau \in \mathbb{R}$ and any $x$ with
  $\Sigma(x) > 0$, we have
  \begin{equation*}
    s_0^\star(x) \le \tau
    \iff
    \Delta(x) \le \tau \, \Sigma(x),
  \end{equation*}
  as claimed.
\end{ProofOf}

\subsection{Ratio Expert Comparison}

One interesting variant of \cref{prop:limit_to_diff} is when we change the exact DRE loss $\drloss$ to other well-known DRE loss functions.

\begin{proposition}
  \label{prop:lsif}
  Suppose $\ell$ is the zero-one loss function and $\drloss$ is the LSIF~\citep{kanamori2009least} loss function.  
  Then the unique pointwise minimizer $s^\star_r(x)$ of the limiting risk satisfies, for any $\tau \in \mathbb{R}$,
  \begin{equation}
    \llbracket s^\star_r(x) \le \tau \rrbracket
    =
    \left \llbracket
        \prob_{\meas{P}}[f(\X)=\Y \mid \X = x] \leq \tau \prob_{\meas{P}}[f^{(\expert)}(\X)=\Y \mid \X = x]
    \right \rrbracket.
  \end{equation}
\end{proposition}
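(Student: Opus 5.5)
Following the proof of \cref{prop:limit_to_diff}, I would first pass to the low-temperature limit using \cref{cor:square_limit}. That corollary holds for any partial losses (dominated convergence is fine provided $\drloss_{\pm1}\circ s$ is integrable). It gives the limiting risk
\begin{equation*}
  \expect_{\meas{P}}\left[ \llbracket f(\X) = \Y \rrbracket (\drloss_{+1} \circ s)(\X) + \llbracket f^{(\expert)}(\X) = \Y \rrbracket (\drloss_{-1} \circ s)(\X) \right].
\end{equation*}
For LSIF, I would take the standard partial losses in the density-ratio parametrisation, where the scorer $v$ is the ratio itself: $\drloss_{+1}(v) = -v$ and $\drloss_{-1}(v) = \tfrac{1}{2} v^2$. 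This is the Bregman/LSIF form $\tfrac12\expect_\nu[s^2] - \expect_\mu[s]$. I would state this convention explicitly, since the link here is the identity on ratios rather than a CPE link.

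Next I would condition on $\X = x$, as before, using $p_h(x)$ and $p_e(x)$. The pointwise conditional risk is
\begin{equation*}
  R_x(v) = -p_h(x) v + \tfrac{1}{2} p_e(x) v^2 .
\end{equation*}
When $p_e(x) > 0$, this is strictly convex, so its unique minimiser is $s^\star_r(x) = p_h(x)/p_e(x)$. The threshold comparison then follows by multiplying through by $p_e(x) > 0$: we get $s^\star_r(x) \le \tau \iff p_h(x) \le \tau\, p_e(x)$, which is the claimed identity.

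The main obstacle is the degenerate set where $p_e(x) = 0$. There $R_x(v) = -p_h(x) v$, so the cases split as follows.
\begin{itemize}
  \item If $p_h(x) > 0$, the risk is unbounded below and has no minimiser. Formally $s^\star_r(x) = +\infty$, and the right-hand side $p_h(x) \le 0$ is false, which matches $\llbracket +\infty \le \tau \rrbracket = 0$.
  \item If $p_h(x) = 0$, every $v$ is optimal. The right-hand side reads $0 \le 0$, which is true, so I would adopt the convention of choosing any $v \le \tau$, or simply restrict to $p_e(x) > 0$.
\end{itemize}
I would therefore state the result for $\meas{P}_{\rm x}$-a.e.\ $x$ with $p_e(x) > 0$, in parallel with the $\Sigma(x) > 0$ caveat of \cref{prop:limit_to_diff}, and remark on the extended-real convention for the remaining case. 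Apart from this, the argument is a one-line calculation.
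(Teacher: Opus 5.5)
Your proposal is correct and follows the paper's proof essentially line for line. The paper also uses the LSIF partial losses $\drloss_{+1}(v) = -v$ and $\drloss_{-1}(v) = \frac{1}{2} v^2$, forms the conditional risk $R_x(v) = -p_h(x) v + \frac{1}{2} p_e(x) v^2$, and reads off $s^\star_r(x) = p_h(x)/p_e(x)$. Your restriction to $p_e(x) > 0$ is a legitimate refinement rather than a different route: the paper's statement claims uniqueness without this caveat, unlike the $\Sigma(x) > 0$ condition in \cref{prop:limit_to_diff}.
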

\begin{proof}
  For the LSIF loss function, the partial losses are ~\cite{menon2016},
  \begin{equation*}
    \drloss_{+1}(v) = -v
    \qquad
    \drloss_{-1}(v) = \frac{1}{2} v^2.
  \end{equation*}
  Mirroring the proof of \cref{prop:limit_to_diff}, we have the pointwise condition risk
  \begin{equation*}
    R_x(v) = - p_h(x) v + \frac{1}{2} p_e(x) v^2.
  \end{equation*}
  Therefore, taking the minimizer, we have that
  \begin{equation*}
    \frac{\partial R_x(v)}{\partial v}
    =
    - p_h(x) + p_e(x) v = 0,
  \end{equation*}
  and hence,
  \begin{equation*}
    s^\star_r(x) = \frac{p_h(x)}{p_e(x)}.
  \end{equation*}
  The result follows immediately.
\end{proof}

\begin{proposition}
  \label{prop:kliep}
  Suppose $\ell$ is the zero-one loss function and $\drloss$ is the KLIEP~\citep{sugiyama2008} loss function.  
  Then the unique pointwise minimizer $s^\star_r(x)$ of the limiting risk satisfies, for any $\tau \in \mathbb{R}$,
  \begin{equation}
    \llbracket s^\star_r(x) \le \tau \rrbracket
    =
    \left \llbracket
        \prob_{\meas{P}}[f(\X)=\Y \mid \X = x] \leq \tau \prob_{\meas{P}}[f^{(\expert)}(\X)=\Y \mid \X = x]
    \right \rrbracket.
  \end{equation}
\end{proposition}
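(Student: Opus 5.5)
The plan mirrors the proofs of \cref{prop:limit_to_diff} and \cref{prop:lsif}: pass to the low-temperature limiting risk, minimize it pointwise in $v = s(x)$, and read off the threshold. First, \cref{cor:square_limit} does not depend on the choice of $\drloss$. It only uses the zero-one form of $\ell$, \cref{lem:limit_exp_weights}, and dominated convergence, which requires the partial losses evaluated at $s$ to be integrable. So I will take from it that, under the KLIEP loss, the limiting risk is
\begin{equation*}
  \expect_{\meas{P}}\left[ \llbracket f(\X) = \Y \rrbracket \cdot (\drloss_{+1} \circ s)(\X) + \llbracket f^{(\expert)}(\X) = \Y \rrbracket \cdot (\drloss_{-1} \circ s)(\X) \right].
\end{equation*}

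Next I will specialize the partial losses. In the DRE-as-CPE form of \citet{menon2016}, KLIEP has scorer range $v > 0$ and partial losses
\begin{equation*}
  \drloss_{+1}(v) = -\log v, \qquad \drloss_{-1}(v) = v.
\end{equation*}
Conditioning on $\X = x$ and using the notation $p_h(x), p_e(x)$ from the proof of \cref{prop:limit_to_diff}, the pointwise conditional risk is
\begin{equation*}
  R_x(v) = -p_h(x) \log v + p_e(x)\, v .
\end{equation*}
When $p_h(x), p_e(x) > 0$, this function is strictly convex on $(0, \infty)$. Setting $R_x'(v) = -p_h(x)/v + p_e(x) = 0$ gives the unique minimizer $s^\star_r(x) = p_h(x)/p_e(x)$, the same as in the LSIF case.

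The threshold identity then follows directly. Since $p_e(x) > 0$, multiplying through gives $s^\star_r(x) \le \tau \iff p_h(x) \le \tau\, p_e(x)$, which is the claimed Iverson-bracket equality for every $\tau \in \mathbb{R}$. If $\tau \le 0$, both sides are simply false when $p_h(x) > 0$, so the identity still holds.

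The main obstacle I expect is the degenerate points, which the statement handles implicitly, as it does for LSIF.
\begin{itemize}
  \item If $p_e(x) = 0 < p_h(x)$, then $R_x$ decreases without bound as $v \to \infty$, so no minimizer exists. I would interpret $s^\star_r(x) = +\infty$, which is consistent with the right-hand side being false.
  \item If $p_h(x) = 0 < p_e(x)$, the infimum is approached only as $v \downarrow 0$. Setting $s^\star_r(x) = 0$ makes both sides equal to $\llbracket 0 \le \tau \rrbracket$.
  \item If both vanish, $R_x \equiv 0$ and every $v$ is optimal. Then uniqueness fails, so I would restrict to $\meas{P}_{\rm x}$-a.e.\ $x$ with $p_h(x), p_e(x) > 0$, as in \cref{prop:limit_to_diff}, and remark on the extended-valued conventions above for the remaining points.
\end{itemize}
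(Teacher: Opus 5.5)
Your proposal is correct and takes the same route as the paper: take the $\gamma \to 0^+$ limiting risk from \cref{cor:square_limit}, form the conditional risk $R_x(v) = -p_h(x)\log v + p_e(x)\, v$ from the KLIEP partial losses, and read off $s^\star_r(x) = p_h(x)/p_e(x)$ from the first-order condition. Your $R_x$ is the consistent version (the paper's displayed risk has a stray factor $\frac{1}{2}$ on the $p_e$ term, which its own derivative ignores), and your treatment of the points where $p_h(x)$ or $p_e(x)$ vanishes makes explicit a restriction that the paper's ``unique minimizer'' claim leaves implicit.
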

\begin{proof}
  For the KLIEP loss function, the partial losses are ~\cite{menon2016},
  \begin{equation*}
    \drloss_{+1}(v) = - \log v
    \qquad
    \drloss_{-1}(v) = v
  \end{equation*}
  Mirroring the proof of \cref{prop:limit_to_diff}, we have the pointwise condition risk
  \begin{equation*}
    R_x(v) = - p_h(x) \log v + \frac{1}{2} p_e(x) v.
  \end{equation*}
  Therefore, taking the minimizer, we have that
  \begin{equation*}
    \frac{\partial R_x(v)}{\partial v}
    =
    - \frac{1}{v} p_h(x) + p_e(x) = 0,
  \end{equation*}
  and hence,
  \begin{equation*}
    s^\star_r(x) = \frac{p_h(x)}{p_e(x)}.
  \end{equation*}
  The result follows immediately.
\end{proof}

It is particularly interesting that two of (arguably) the most well-known DRE losses~\citep{sugiyama2008,kanamori2009least} yield the same optimal deferral mechanism, as per \cref{prop:kliep,prop:lsif}.
This is an interesting variant of the usual ``difference'' which is optimized for via \cref{eq:reg_delta_target}. The optimal scorer is also very intuitive: we want to reject whenever our base model's (conditional) accuracy is worse than what the expert would have (up to discount scaling $\tau$).

The logistic loss also yields a similar optimal deferral mechanism.

\begin{proposition}
  \label{prop:logistic}
  Suppose $\ell$ is the zero-one loss function and $\drloss$ is the logistic loss function.  
  Then the unique pointwise minimizer $s^\star_r(x)$ of the limiting risk satisfies, for any $\tau \in \mathbb{R}$,
  \begin{equation}
    \llbracket s^\star_r(x) \le \tau \rrbracket
    =
    \left \llbracket
        \prob_{\meas{P}}[f(\X)=\Y \mid \X = x] \leq \exp(\tau) \prob_{\meas{P}}[f^{(\expert)}(\X)=\Y \mid \X = x]
    \right \rrbracket.
  \end{equation}
\end{proposition}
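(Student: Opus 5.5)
We mirror the proofs of \cref{prop:limit_to_diff,prop:lsif,prop:kliep}. By \cref{cor:square_limit}, the low-temperature limit of the joint ideal DR CPE loss with the zero-one loss $\ell$ is
\begin{equation*}
  \expect_{\meas{P}}\left[ \llbracket f(\X) = \Y \rrbracket \cdot (\drloss_{+1} \circ s)(\X) + \llbracket f^{(\expert)}(\X) = \Y \rrbracket \cdot (\drloss_{-1} \circ s)(\X) \right].
\end{equation*}
For the logistic loss we take the partial losses $\drloss_{+1}(v) = \log(1 + e^{-v})$ and $\drloss_{-1}(v) = \log(1 + e^{v})$, with logit link \citep{menon2016}. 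Both are non-negative and bounded by an integrable function for fixed $s$, so dominated convergence in \cref{cor:square_limit} still applies.

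Conditioning on $\X = x$ and writing $p_h(x) = \prob_{\meas{P}}[f(\X)=\Y \mid \X = x]$ and $p_e(x) = \prob_{\meas{P}}[f^{(\expert)}(\X)=\Y \mid \X = x]$, the pointwise conditional risk is
\begin{equation*}
  R_x(v) = p_h(x) \log(1 + e^{-v}) + p_e(x) \log(1 + e^{v}).
\end{equation*}
Setting the derivative to zero gives
\begin{equation*}
  -p_h(x) \cdot \frac{1}{1 + e^{v}} + p_e(x) \cdot \frac{e^{v}}{1 + e^{v}} = 0,
\end{equation*}
so $e^{v} = p_h(x)/p_e(x)$ and hence $s^\star_r(x) = \log\big(p_h(x)/p_e(x)\big)$.

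When $p_h(x), p_e(x) > 0$, $R_x$ is strictly convex: its second derivative is $\Sigma(x)\, e^{v}/(1+e^{v})^2 > 0$. The minimizer is therefore unique. Since $\exp$ is strictly increasing,
\begin{equation*}
  s^\star_r(x) \le \tau \iff p_h(x) \le e^{\tau} p_e(x),
\end{equation*}
which is the claim.

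The only real obstacle is the degenerate cases. If $p_e(x) = 0 < p_h(x)$, then $R_x$ is strictly decreasing and has no finite minimizer; the minimizer is $+\infty$ in the extended sense, so $\llbracket s^\star_r \le \tau \rrbracket = 0$, matching the condition $p_h(x) \le 0$, which fails. The case $p_h(x) = 0 < p_e(x)$ is symmetric, giving $s^\star_r = -\infty$ and both sides equal to $1$. When both vanish the risk is constant. I would state the result for $\meas{P}_{\rm x}$-almost every $x$ with $p_h(x), p_e(x) > 0$, or interpret minimizers in the extended reals, as is implicitly done in \cref{prop:lsif,prop:kliep}.
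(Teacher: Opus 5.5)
Your proof is correct and follows the paper's argument: form the conditional limiting risk $R_x(v) = p_h(x)\log(1+e^{-v}) + p_e(x)\log(1+e^{v})$, solve the first-order condition to get $s^\star_r(x) = \log(p_h(x)/p_e(x))$, and read off the threshold using the monotonicity of $\exp$. Your strict-convexity check and your handling of the degenerate cases go beyond the paper's proof, which omits them. In particular, when $p_h(x)=0$ or $p_e(x)=0$ no finite minimizer exists, and when both vanish uniqueness fails, so restricting to $x$ with $p_h(x), p_e(x) > 0$ is the right way to make ``unique'' literally true.
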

\begin{proof}
  For the logistic loss function, the partial losses are ~\cite{menon2016},
  \begin{equation*}
    \drloss_{+1}(v) = \log(1 + \exp(-v))
    \qquad
    \drloss_{-1}(v) = \log(1 + \exp(v))
  \end{equation*}
  Mirroring the proof of \cref{prop:limit_to_diff}, we have the pointwise condition risk
  \begin{equation*}
    R_x(v) = p_h(x) \log(1 + \exp(-v)) + p_e(x) \log(1 + \exp(v)).
  \end{equation*}
  Therefore, taking the minimizer, we have that
  \begin{align*}
    \frac{\partial R_x(v)}{\partial v}
    &=
    - \frac{\exp(-v)}{1+\exp(-v)} p_h(x) + \frac{\exp(v)}{1+\exp(v)} p_e(x) \\
    &=
    - \frac{1}{1+\exp(v)} p_h(x) + \frac{\exp(v)}{1+\exp(v)} p_e(x) \\
    &= 0.
  \end{align*}
  Thus we have
  \begin{equation*}
    - p_h(x) + \exp(v) p_e(x) = 0
  \end{equation*}
  and hence,
  \begin{equation*}
    s^\star_r(x) = \log \left(\frac{p_h(x)}{p_e(x)} \right).
  \end{equation*}
  The result follows immediately.
\end{proof}

The optimal deferral mechanisms are essentially the same ratio thresholds reparameterized when comparing \cref{prop:kliep,prop:lsif} versus \cref{prop:logistic}. Despite this, as in practice we are thresholding the scorers (as per \cref{lem:threshold-scorer}), KLIEP and LSIF may be more preferable: a small change in the scorer for the logistic loss (\cref{prop:logistic}) can cause a large shift in the decision \wrt the accuracy of base versus expert.

\section{Experiment Settings}%
\label{app:exp_settings}

The following section details the experimental setup in more details. All implementations were done using \texttt{PyTorch} for models and training; and \texttt{tensorflow\_datasets} (\texttt{tfds}) to manage datasets.
Experiments were completed via a single NVIDIA Tesla V100.
To produce error bars, we repeat deferral model training (keeping the base and expert models fixed) over 11 random seeds. The \texttt{Conf} baseline does not have error bars as it is derived directly from the base model's confidence.

\subsection{Base Datasets}

We consider three datasets:
\begin{itemize}[leftmargin=10pt, itemindent=0pt, parsep=0pt]
  \item CIFAR-100 (no license): The standard vision dataset consisting of 50,000 training images and 10,000 test images with 100 classes.
  \item PathMNIST (CC BY 4.0): A colon pathology image dataset consisting of 89996 training images;  10,004 validation images; and 7,180 test images. The dataset has 9 class labels.
  \item DermaMNIST (CC BY-NC 4.0): A smaller dermatoscopy image dataset consisting of 7,007 training images; 1,003 validation images; and 2,005 test images. The dataset has 7 class labels.
\end{itemize}

As CIFAR-100 does not have an explicit validation set, we follow \citet{jitkrittum2023does} and generate a validation set consisting of a subselection of 5,000 examples from the training set. In particular, we subsample 50 examples for each of the 100 classes to yield 45,000 training examples and 5,000 validation examples.

\subsection{Corruptions}

We consider three types of corruptions:
\begin{itemize}[leftmargin=10pt, itemindent=0pt, parsep=0pt]
  \item Label noise: For $k \in \{ 10, 25 \}$, the first $k$ classes are randomly relabeled.
  \item Long-tail: For $h \in \{ 50, 25 \}$, we specify the first $h$ classes as head classes, and the remaining classes as tail classes. We then subsample the dataset with $500$ examples for the head class and $50$ examples for the tail class.
  \item Specialist: We specify a list of specialists class labels. All examples for the specialist class are taken. For all other classes, we subsample $p \in \{ 20, 10 \}$ percent of their examples.
\end{itemize}

These corruptions are applied to the training partition of the base dataset to train the base model and experts utilized in experiments.

For the specialist class labels, we use the following for each dataset:
\begin{itemize}[leftmargin=10pt, itemindent=0pt, parsep=0pt]
  \item CIFAR-100: The specialist is on mammals, which consists of 25/100 classes. This can be depicted by the following class label dictionary:
    \begin{small}
  \begin{verbatim}
MAMMAL_SPECIALIST_FINE_LABELS_BY_COARSE = {
  "aquatic_mammals": ("beaver", "dolphin", "otter", "seal", "whale"),
  "large_carnivores": ("bear", "leopard", "lion", "tiger", "wolf"),
  "large_omnivores_and_herbivores": (
    "camel",
    "cattle",
    "chimpanzee",
    "elephant",
    "kangaroo",
  ),
  "medium_mammals": ("fox", "porcupine", "possum", "raccoon", "skunk"),
  "small_mammals": ("hamster", "mouse", "rabbit", "shrew", "squirrel"),
}
  \end{verbatim}
    \end{small}
  \item PathMNIST: The specialist is on class labels associated with tumors, which consists of 3/9 classes. Its labels are given by:
    \begin{small}
  \begin{verbatim}
PATH_SPECIALIST = [
  "lymphocytes",
  "cancer-associated stroma",
  "colorectal adenocarcinoma epithelium"
]
  \end{verbatim}
    \end{small}

  \item DermaMNIST: The specialist is on class labels associated with melanocytic lesions, which consists of 2/7 classes. Its labels are given by:

    \begin{small}
  \begin{verbatim}
DERMA_SPECIALIST = [
  "melanoma",
  "melanocytic nevi",
]
  \end{verbatim}
    \end{small}
\end{itemize}

\subsection{Parameterized Deferral Models}

We follow \citet{jitkrittum2023does}'s post-hoc deferral-model setup for all learned deferral baselines. In all such methods, the base classifier $h$ and expert $h^{(\expert)}$ are first trained and then frozen; only a separate deferral scorer is fit on the clean deferral-training split.

The parameterized deferral methods learn a lightweight MLP that takes features derived solely from the base model's predictive probability vector $\pos(x) \in \Delta(\mathcal{Y})$. Following \citet{jitkrittum2023does}, we use a feature map $v(\pos(x)) \in \mathbb{R}^{\vert \mathbb{Y} \vert +11}$ consisting of:

\begin{enumerate}
  \item The entropy of $\pos(x)$;
  \item The top-10 predicted probabilities; and
  \item A one-hot encoding of the predicted class $\argmax_{y'} \pos_{y'}(x)$.
\end{enumerate}

The learned deferral mechanism uses an architecture of
\begin{equation*}
  g(x) = (\mathrm{FC}_1 \circ \mathrm{FC}_{2^4,\mathrm{ReLU}} \circ \mathrm{FC}_{2^6,\mathrm{ReLU}} \circ v \circ \pos)(x),
\end{equation*}
where $\mathrm{FC}_k$ denotes a fully connected layer with $k$ outputs. A rejector then is derived from a thresholding on $g$ (which depends on the approach).

\subsection{Optimizers}

We use a combination of SGD and Adam~\citep{kingma2014adam} optimizers.

\begin{itemize}[leftmargin=10pt, itemindent=0pt, parsep=0pt]
  \item {Base models}
    \begin{itemize}
      \item All base classifiers are trained with {SGD} using momentum $0.9$ and weight decay $10^{-4}$.
      \item {CIFAR-100}: learning rate $1.0$, with linear warmup for 15 epochs, then multi-step decay at epochs \{96, 192, 224\}.
      \item {PathMNIST} and {DermaMNIST}: learning rate $0.1$, with linear warmup for 5 epochs, then multi-step decay at epochs \{60, 90, 110\}.
    \end{itemize}

  \item {Deferral models}
    \begin{itemize}
      \item All {learned} deferral models use {Adam}.
      \item The optimizer settings are consistent across methods: learning rate $7 \times 10^{-4}$ and weight decay $10^{-3}$.
    \end{itemize}
\end{itemize}

\subsection{Baselines}

We go into more detail regarding the baselines we consider. %

\paragraph{Confidence-Based Deferral}
Perhaps the simplest approach to deferral is to utilize the base model $h$'s perceived confidence of its prediction.
At inference time, one can simply threshold this confidence (\texttt{Conf}):
\begin{equation}
  \textrm{Conf}(x) = \max_{y} h_y(x),
\end{equation}
that is, $\llbracket \textrm{Conf}(x) \leq \tau \rrbracket$ for some threshold $\tau \in [0, 1]$.

\paragraph{Expert-Comparison Estimation}

These approaches are based on simple prediction targets (typically trained via regression, \ie, squared loss functions)~\citep{kag2022efficient,jitkrittum2023does}. As a result, we end up with a model $g$ that approximates a constructed random variable. Deferral can then be established by thresholding $g$, \ie, $r(x) = \llbracket d(g(x)) \leq \tau \rrbracket$ for some $\tau$ which we can tune and $d$ is a decision function.

\begin{itemize}[leftmargin=10pt, itemindent=0pt, parsep=0pt]
  \item \textbf{Difference in zero-one accuracy} (\texttt{Estimate-Diff01}):
    This baseline considers the difference in zero-one accuracy. This is exactly the target discussed in \cref{eq:reg_delta_target}. For completeness, we repeat the target random variable that is defined:
    \begin{equation*}
      \Y^{({\rm d})}
      \defeq
      \llbracket f(\X)=\Y \rrbracket
      -
      \llbracket f^{(\expert)}(\X)=\Y \rrbracket.
    \end{equation*}
    Thus, negative values correspond to examples on which the expert is correct and the base model is not.
    The decision function in this case is just the identity $d(z) = z$.

  \item \textbf{Expert maximum confidence} (\texttt{Estimate-MaxProb})
    This baseline regresses the expert's maximum predicted class probability:
    \begin{equation}
      \label{eq:expert_conf_target}
      \P^{({\rm m})}
      \defeq
      \max_{y \in \mathcal{Y}} \, \pos^{(\expert)}_{y}(\X).
    \end{equation}
    At test time, the learned score is compared against the base model's maximum confidence,
    \begin{equation}
      d(g(x)) = \max_{y \in \mathcal{Y}} \, \pos_{y}(x) - g(x),
    \end{equation}
    where we remind that $\pos$ is the base model and $g(x)$ approximates $\max_{y \in \mathcal{Y}} \, \pos^{(\expert)}_{y}(\X)$ via \cref{eq:expert_conf_target}.
    Thus, deferral is favored when the expert appears more confident than the base model.

\end{itemize}

\paragraph{Two-Stage Surrogate Losses}
We adapt \citet{mao2023} for single expert deferral. In particular, we utilize their modified exponential loss function, which is given by (\texttt{TwoStage-Exp}):
\begin{equation}
  \begin{aligned}
    \mathcal{L}_{\texttt{two-stage}}(s; c) =
    \expect_{\meas{P}} \Bigg[
      &\llbracket f(\X) = \Y \rrbracket \exp\left( s(\X) - \max_{y} h_y(\X) \right) \\
      +
      (
      &\llbracket f^{(\expert)}(\X) = \Y \rrbracket - c) \exp\left( \max_{y} h_y(\X) - s(\X) \right)
    \Bigg].
  \end{aligned}
\end{equation}
Typically, one defers from thresholding $s$ to the base model's confidence, \ie, $\llbracket s(x) > \max_{y} h_y(x) \rrbracket$. We consider a modification by thresholding over additional values to create a deferral curve $\llbracket s(x) > \max_{y} h_y(x) + \tau \rrbracket$ for $\tau \in \mathbb{R}$. Usually, one would train various deferral models over different $c$, and then build a deferral curve from each $c$ via thresholding at $\tau = 0$. However, we found that for our experimental setups, it was better to change the decision boundary over via a range of $\tau$s. Instead, we search over the hyperparmeters $c$ and show the best deferral curve by ranging over $\tau$ at inference time.

\section{Additional Experiment}%
\label{app:additional_experiments}

In the following, we present additional experiments. For corrupted datasets, in this appendix section, we signify the strength of the corruption via parenthesis, \eg, ``PathMNIST Specialist(20)'' is the specialist corruption with $20$ head classes.

\cref{tab:summary} presents a tabulated version of \cref{fig:tradeoff-summary}. We highlight approaches which are closer to random deferral (the linear interpolation between 0\% and 100\% deferral rates). As one can see, although DR CPE approaches may not be the best approach for all dataset and deferral combination, it is the most robust to failure (least red) for even its Prob01 approach.

We test over alternate $\ell$'s used to define the weights in \cref{eq:joint-ideal-loss-simplify}: we consider probabilistic zero-one loss; Generalized Cross Entropy (GCE)~\citep{zhang2018generalized}; and a top-$k$ ($k=5$) zero-one loss. These results are summarized in \cref{app:tradeoff-summary}.
From these plots, we see that for some loss functions, like the probabilistic zero-one loss, there are setting where a higher $\gamma = 1$ temperature can cause a reduction in performance, see PathMNIST Specialist(20). GCE has better performance in the $\gamma = 1$ setting, but it suffers from a significant drop in performance compared to the $\gamma = 0.5$ counterpart and also has a high variance (similar to Prob01 and top-$k$ variants for PathMNIST Specialist(20) with $\gamma = 0.5$).

We also consider changing the target evaluation criterion from accuracy to top-$k$ ($k=5$). These results are summarized in \cref{app:topk-summary}---we focus on CIFAR-100 for this scenario due to its larger number of ($\vert \mathcal{Y} \vert = 100$) class labels.
An interesting finding of these results is that performance in accuracy does not necessarily result in the corresponding top-$k$ performance, nor vice-versa. Some of the most dramatic changes here are the performance of the \texttt{TwoStage-Exp} approach, which does much better in top-$k$.
This is even when compared against DR CPE with objective specified to top-$k$ accuracy. Despite this, the top-$k$ variant of DR CPE performs well against its GCE variant for specialist CIFAR-100 settings.

Finally, we consider a variation of \cref{fig:tradeoff-summary}, where we vary the density ratio loss $\drloss$ utilized. From \cref{prop:kliep,prop:lsif,prop:logistic}, the Bayes optimal scorer (and thus deferral curves) should be the same. However, in practice we find that the logistic loss performs consistently worse when compared to other loss functions and suffers from an extremely high variance. The squared loss seems to be the best performing with LSIF and KLIEP performing similarly.

\begin{table}[t]
\caption{Tabulated deferral accuracy results of \cref{fig:tradeoff-summary}. Bolded entries are the best performing approaches if they lie outside of the 1 standard deviation ranges of all other approaches (of different types). Red entries are approaches which are closer to random than the best performing approach.}
\label{tab:summary}
\centering
\small
\setlength{\tabcolsep}{4pt}
\renewcommand{\arraystretch}{1.12}
\begin{adjustbox}{max width=\textwidth}
\begin{tabular}{llccccccc}
\toprule
Dataset & Method & 5\% & 10\% & 15\% & 20\% & 25\% & 50\% & 75\% \\
\midrule
\multirow{7}{*}{\rotatebox[origin=c]{90}{\begin{tabular}{@{}c@{}}CIFAR-100\\Clean\\\texttt{[14-56]}\end{tabular}}} & Conf & $66.14 \pm 0.00$ & $66.67 \pm 0.00$ & $67.58 \pm 0.00$ & $\mathbf{68.21 \pm 0.00}$ & $68.47 \pm 0.00$ & $70.16 \pm 0.00$ & $69.64 \pm 0.00$ \\
 & TwoStage-Exp & $\textcolor{red}{65.71 \pm 0.02}$ & $66.45 \pm 0.02$ & $66.93 \pm 0.02$ & $67.14 \pm 0.02$ & $\textcolor{red}{67.39 \pm 0.03}$ & $\textcolor{red}{68.41 \pm 0.03}$ & $69.18 \pm 0.01$ \\
 & Estimate-Diff01 & $66.12 \pm 0.07$ & $66.79 \pm 0.07$ & $67.38 \pm 0.10$ & $67.98 \pm 0.10$ & $68.50 \pm 0.09$ & $69.93 \pm 0.08$ & $69.72 \pm 0.07$ \\
 & Estimate-MaxProb & $\mathbf{66.22 \pm 0.04}$ & $\mathbf{66.98 \pm 0.05}$ & $67.56 \pm 0.04$ & $68.01 \pm 0.06$ & $68.53 \pm 0.07$ & $70.13 \pm 0.06$ & $69.79 \pm 0.09$ \\
 & DRCPE (GCE) & $66.01 \pm 0.03$ & $66.74 \pm 0.07$ & $67.51 \pm 0.06$ & $68.03 \pm 0.06$ & $68.54 \pm 0.03$ & $70.12 \pm 0.05$ & $69.84 \pm 0.05$ \\
 & DRCPE (Prob01) & $66.14 \pm 0.07$ & $66.84 \pm 0.05$ & $67.58 \pm 0.08$ & $68.07 \pm 0.07$ & $68.54 \pm 0.06$ & $70.07 \pm 0.05$ & $69.73 \pm 0.06$ \\
 & Random & $65.40 \pm 0.00$ & $65.61 \pm 0.00$ & $65.82 \pm 0.00$ & $66.03 \pm 0.00$ & $66.24 \pm 0.00$ & $67.29 \pm 0.00$ & $68.34 \pm 0.00$ \\
\midrule
\multirow{7}{*}{\rotatebox[origin=c]{90}{\begin{tabular}{@{}c@{}}CIFAR-100\\Clean\\\texttt{[8-32]}\end{tabular}}} & Conf & $61.54 \pm 0.00$ & $62.46 \pm 0.00$ & $63.37 \pm 0.00$ & $64.18 \pm 0.00$ & $65.11 \pm 0.00$ & $68.17 \pm 0.00$ & $68.18 \pm 0.00$ \\
 & TwoStage-Exp & $61.41 \pm 0.03$ & $\textcolor{red}{62.08 \pm 0.04}$ & $62.98 \pm 0.04$ & $63.55 \pm 0.07$ & $64.36 \pm 0.06$ & $66.74 \pm 0.08$ & $67.58 \pm 0.04$ \\
 & Estimate-Diff01 & $61.76 \pm 0.07$ & $62.82 \pm 0.09$ & $63.77 \pm 0.10$ & $64.56 \pm 0.12$ & $65.31 \pm 0.08$ & $67.91 \pm 0.12$ & $68.09 \pm 0.07$ \\
 & Estimate-MaxProb & $61.57 \pm 0.05$ & $62.62 \pm 0.08$ & $63.63 \pm 0.05$ & $64.30 \pm 0.08$ & $65.15 \pm 0.05$ & $68.24 \pm 0.06$ & $68.16 \pm 0.04$ \\
 & DRCPE (GCE) & $61.50 \pm 0.06$ & $62.39 \pm 0.07$ & $63.33 \pm 0.09$ & $64.31 \pm 0.06$ & $65.23 \pm 0.06$ & $68.18 \pm 0.06$ & $68.16 \pm 0.04$ \\
 & DRCPE (Prob01) & $61.70 \pm 0.08$ & $62.71 \pm 0.12$ & $63.63 \pm 0.06$ & $64.54 \pm 0.09$ & $65.43 \pm 0.07$ & $68.19 \pm 0.05$ & $68.10 \pm 0.04$ \\
 & Random & $60.99 \pm 0.00$ & $61.36 \pm 0.00$ & $61.72 \pm 0.00$ & $62.09 \pm 0.00$ & $62.45 \pm 0.00$ & $64.27 \pm 0.00$ & $66.09 \pm 0.00$ \\
\midrule
\multirow{7}{*}{\rotatebox[origin=c]{90}{\begin{tabular}{@{}c@{}}PathMNIST\\Clean\\\texttt{[8-14]}\end{tabular}}} & Conf & $86.02 \pm 0.00$ & $86.46 \pm 0.00$ & $86.73 \pm 0.00$ & $86.91 \pm 0.00$ & $87.03 \pm 0.00$ & $86.66 \pm 0.00$ & $86.36 \pm 0.00$ \\
 & TwoStage-Exp & $\textcolor{red}{85.29 \pm 0.08}$ & $85.92 \pm 0.08$ & $86.64 \pm 0.08$ & $\mathbf{87.10 \pm 0.05}$ & $\mathbf{87.31 \pm 0.03}$ & $\mathbf{87.05 \pm 0.03}$ & $86.59 \pm 0.03$ \\
 & Estimate-Diff01 & $85.98 \pm 0.06$ & $86.59 \pm 0.06$ & $86.74 \pm 0.03$ & $86.87 \pm 0.05$ & $87.06 \pm 0.04$ & $86.75 \pm 0.08$ & $86.57 \pm 0.13$ \\
 & Estimate-MaxProb & $\mathbf{86.05 \pm 0.02}$ & $86.47 \pm 0.01$ & $86.72 \pm 0.01$ & $86.94 \pm 0.01$ & $87.04 \pm 0.01$ & $86.84 \pm 0.05$ & $86.65 \pm 0.03$ \\
 & DRCPE (GCE) & $86.02 \pm 0.04$ & $86.60 \pm 0.05$ & $86.75 \pm 0.03$ & $86.93 \pm 0.03$ & $87.08 \pm 0.04$ & $86.85 \pm 0.08$ & $86.71 \pm 0.09$ \\
 & DRCPE (Prob01) & $86.00 \pm 0.05$ & $86.58 \pm 0.03$ & $86.75 \pm 0.03$ & $86.89 \pm 0.04$ & $87.06 \pm 0.04$ & $86.87 \pm 0.10$ & $86.72 \pm 0.07$ \\
 & Random & $85.05 \pm 0.00$ & $85.12 \pm 0.00$ & $85.18 \pm 0.00$ & $85.25 \pm 0.00$ & $85.32 \pm 0.00$ & $85.65 \pm 0.00$ & $85.98 \pm 0.00$ \\
\midrule
\multirow{7}{*}{\rotatebox[origin=c]{90}{\begin{tabular}{@{}c@{}}CIFAR-100\\Label-Noise\\\texttt{[14-56]}\end{tabular}}} & Conf & $\textcolor{red}{57.58 \pm 0.00}$ & $\textcolor{red}{58.08 \pm 0.00}$ & $58.74 \pm 0.00$ & $59.13 \pm 0.00$ & $59.76 \pm 0.00$ & $61.32 \pm 0.00$ & $60.60 \pm 0.00$ \\
 & TwoStage-Exp & $\textcolor{red}{57.36 \pm 0.02}$ & $\textcolor{red}{57.84 \pm 0.03}$ & $\textcolor{red}{58.25 \pm 0.04}$ & $\textcolor{red}{58.74 \pm 0.03}$ & $\textcolor{red}{58.98 \pm 0.06}$ & $60.05 \pm 0.05$ & $60.14 \pm 0.02$ \\
 & Estimate-Diff01 & $57.93 \pm 0.07$ & $58.69 \pm 0.10$ & $59.28 \pm 0.07$ & $59.80 \pm 0.06$ & $60.24 \pm 0.11$ & $61.23 \pm 0.06$ & $60.78 \pm 0.09$ \\
 & Estimate-MaxProb & $57.85 \pm 0.04$ & $58.41 \pm 0.05$ & $58.88 \pm 0.05$ & $59.36 \pm 0.05$ & $59.95 \pm 0.07$ & $\mathbf{61.41 \pm 0.06}$ & $60.85 \pm 0.07$ \\
 & DRCPE (GCE) & $57.85 \pm 0.09$ & $58.38 \pm 0.07$ & $58.92 \pm 0.07$ & $59.44 \pm 0.12$ & $59.98 \pm 0.09$ & $61.26 \pm 0.06$ & $60.82 \pm 0.06$ \\
 & DRCPE (Prob01) & $57.96 \pm 0.07$ & $58.64 \pm 0.09$ & $59.45 \pm 0.11$ & $\mathbf{60.01 \pm 0.06}$ & $\mathbf{60.47 \pm 0.11}$ & $61.31 \pm 0.06$ & $60.84 \pm 0.10$ \\
 & Random & $57.35 \pm 0.00$ & $57.48 \pm 0.00$ & $57.61 \pm 0.00$ & $57.73 \pm 0.00$ & $57.86 \pm 0.00$ & $58.51 \pm 0.00$ & $59.15 \pm 0.00$ \\
\midrule
\multirow{7}{*}{\rotatebox[origin=c]{90}{\begin{tabular}{@{}c@{}}CIFAR-100\\Long-Tail\\\texttt{[8-32]}\end{tabular}}} & Conf & $\textcolor{red}{40.32 \pm 0.00}$ & $\textcolor{red}{40.70 \pm 0.00}$ & $\textcolor{red}{41.15 \pm 0.00}$ & $\textcolor{red}{41.43 \pm 0.00}$ & $\textcolor{red}{41.84 \pm 0.00}$ & $43.91 \pm 0.00$ & $\textcolor{red}{44.31 \pm 0.00}$ \\
 & TwoStage-Exp & $\textcolor{red}{40.21 \pm 0.03}$ & $\textcolor{red}{40.70 \pm 0.01}$ & $\textcolor{red}{40.93 \pm 0.02}$ & $\textcolor{red}{41.24 \pm 0.05}$ & $\textcolor{red}{41.54 \pm 0.04}$ & $\textcolor{red}{42.87 \pm 0.05}$ & $\textcolor{red}{43.48 \pm 0.03}$ \\
 & Estimate-Diff01 & $40.83 \pm 0.06$ & $41.80 \pm 0.05$ & $42.60 \pm 0.08$ & $\mathbf{43.28 \pm 0.07}$ & $\mathbf{43.91 \pm 0.05}$ & $\mathbf{45.63 \pm 0.12}$ & $\mathbf{45.82 \pm 0.07}$ \\
 & Estimate-MaxProb & $\textcolor{red}{40.32 \pm 0.04}$ & $\textcolor{red}{40.80 \pm 0.05}$ & $\textcolor{red}{41.21 \pm 0.05}$ & $\textcolor{red}{41.60 \pm 0.05}$ & $\textcolor{red}{42.10 \pm 0.04}$ & $43.88 \pm 0.06$ & $\textcolor{red}{44.38 \pm 0.11}$ \\
 & DRCPE (GCE) & $40.44 \pm 0.05$ & $41.18 \pm 0.07$ & $41.99 \pm 0.14$ & $42.57 \pm 0.12$ & $43.00 \pm 0.10$ & $44.86 \pm 0.06$ & $45.18 \pm 0.04$ \\
 & DRCPE (Prob01) & $40.77 \pm 0.04$ & $41.76 \pm 0.06$ & $42.50 \pm 0.07$ & $43.09 \pm 0.07$ & $43.51 \pm 0.08$ & $45.08 \pm 0.07$ & $45.64 \pm 0.07$ \\
 & Random & $39.93 \pm 0.00$ & $40.15 \pm 0.00$ & $40.37 \pm 0.00$ & $40.60 \pm 0.00$ & $40.82 \pm 0.00$ & $41.92 \pm 0.00$ & $43.03 \pm 0.00$ \\
\midrule
\multirow{7}{*}{\rotatebox[origin=c]{90}{\begin{tabular}{@{}c@{}}PathMNIST\\Specialist\\\texttt{[8-14]}\end{tabular}}} & Conf & $\mathbf{85.96 \pm 0.00}$ & $86.49 \pm 0.00$ & $86.91 \pm 0.00$ & $87.10 \pm 0.00$ & $87.08 \pm 0.00$ & $85.95 \pm 0.00$ & $\textcolor{red}{84.83 \pm 0.00}$ \\
 & TwoStage-Exp & $85.57 \pm 0.06$ & $86.56 \pm 0.06$ & $\mathbf{87.30 \pm 0.07}$ & $\mathbf{87.74 \pm 0.08}$ & $\mathbf{87.64 \pm 0.07}$ & $86.02 \pm 0.12$ & $\textcolor{red}{84.32 \pm 0.06}$ \\
 & Estimate-Diff01 & $\textcolor{red}{84.99 \pm 0.00}$ & $\textcolor{red}{84.98 \pm 0.01}$ & $\textcolor{red}{84.98 \pm 0.01}$ & $\textcolor{red}{84.97 \pm 0.02}$ & $\textcolor{red}{84.94 \pm 0.09}$ & $\textcolor{red}{84.58 \pm 0.10}$ & $\textcolor{red}{83.83 \pm 0.25}$ \\
 & Estimate-MaxProb & $85.86 \pm 0.03$ & $86.53 \pm 0.03$ & $86.86 \pm 0.02$ & $87.08 \pm 0.05$ & $87.09 \pm 0.10$ & $87.05 \pm 0.13$ & $86.57 \pm 0.08$ \\
 & DRCPE (GCE) & $85.68 \pm 0.10$ & $86.38 \pm 0.05$ & $87.01 \pm 0.05$ & $87.05 \pm 0.06$ & $87.05 \pm 0.10$ & $86.96 \pm 0.14$ & $86.57 \pm 0.07$ \\
 & DRCPE (Prob01) & $\textcolor{red}{85.14 \pm 0.24}$ & $85.77 \pm 0.42$ & $86.22 \pm 0.65$ & $86.23 \pm 0.64$ & $86.23 \pm 0.63$ & $86.33 \pm 0.55$ & $86.40 \pm 0.15$ \\
 & Random & $84.90 \pm 0.00$ & $84.81 \pm 0.00$ & $84.72 \pm 0.00$ & $84.64 \pm 0.00$ & $84.55 \pm 0.00$ & $84.12 \pm 0.00$ & $83.68 \pm 0.00$ \\
\midrule
\multirow{7}{*}{\rotatebox[origin=c]{90}{\begin{tabular}{@{}c@{}}DermaMNIST\\Specialist\\\texttt{[8-8]}\end{tabular}}} & Conf & $76.01 \pm 0.00$ & $\textcolor{red}{75.71 \pm 0.00}$ & $76.11 \pm 0.00$ & $\textcolor{red}{75.76 \pm 0.00}$ & $\textcolor{red}{75.66 \pm 0.00}$ & $\textcolor{red}{73.32 \pm 0.00}$ & $\textcolor{red}{72.52 \pm 0.00}$ \\
 & TwoStage-Exp & $75.97 \pm 0.06$ & $\textcolor{red}{75.66 \pm 0.05}$ & $\textcolor{red}{74.94 \pm 0.22}$ & $\textcolor{red}{74.38 \pm 0.15}$ & $\textcolor{red}{74.14 \pm 0.17}$ & $\textcolor{red}{73.50 \pm 0.11}$ & $\textcolor{red}{72.62 \pm 0.06}$ \\
 & Estimate-Diff01 & $76.02 \pm 0.15$ & $76.32 \pm 0.21$ & $76.40 \pm 0.27$ & $76.50 \pm 0.27$ & $76.43 \pm 0.38$ & $76.40 \pm 0.36$ & $76.15 \pm 0.21$ \\
 & Estimate-MaxProb & $\textcolor{red}{75.81 \pm 0.06}$ & $76.00 \pm 0.12$ & $76.17 \pm 0.14$ & $76.37 \pm 0.09$ & $76.41 \pm 0.06$ & $75.54 \pm 0.07$ & $75.49 \pm 0.09$ \\
 & DRCPE (GCE) & $\textcolor{red}{75.72 \pm 0.07}$ & $\textcolor{red}{75.94 \pm 0.12}$ & $76.34 \pm 0.09$ & $76.52 \pm 0.10$ & $76.46 \pm 0.10$ & $76.10 \pm 0.08$ & $76.13 \pm 0.09$ \\
 & DRCPE (Prob01) & $\textcolor{red}{75.77 \pm 0.11}$ & $76.03 \pm 0.17$ & $76.47 \pm 0.07$ & $76.59 \pm 0.09$ & $76.45 \pm 0.08$ & $76.14 \pm 0.14$ & $76.12 \pm 0.14$ \\
 & Random & $75.78 \pm 0.00$ & $75.60 \pm 0.00$ & $75.42 \pm 0.00$ & $75.24 \pm 0.00$ & $75.06 \pm 0.00$ & $74.16 \pm 0.00$ & $73.27 \pm 0.00$ \\
\bottomrule
\end{tabular}
\end{adjustbox}
\end{table}

\begin{figure}[th]%
  \includegraphics[width=\textwidth]{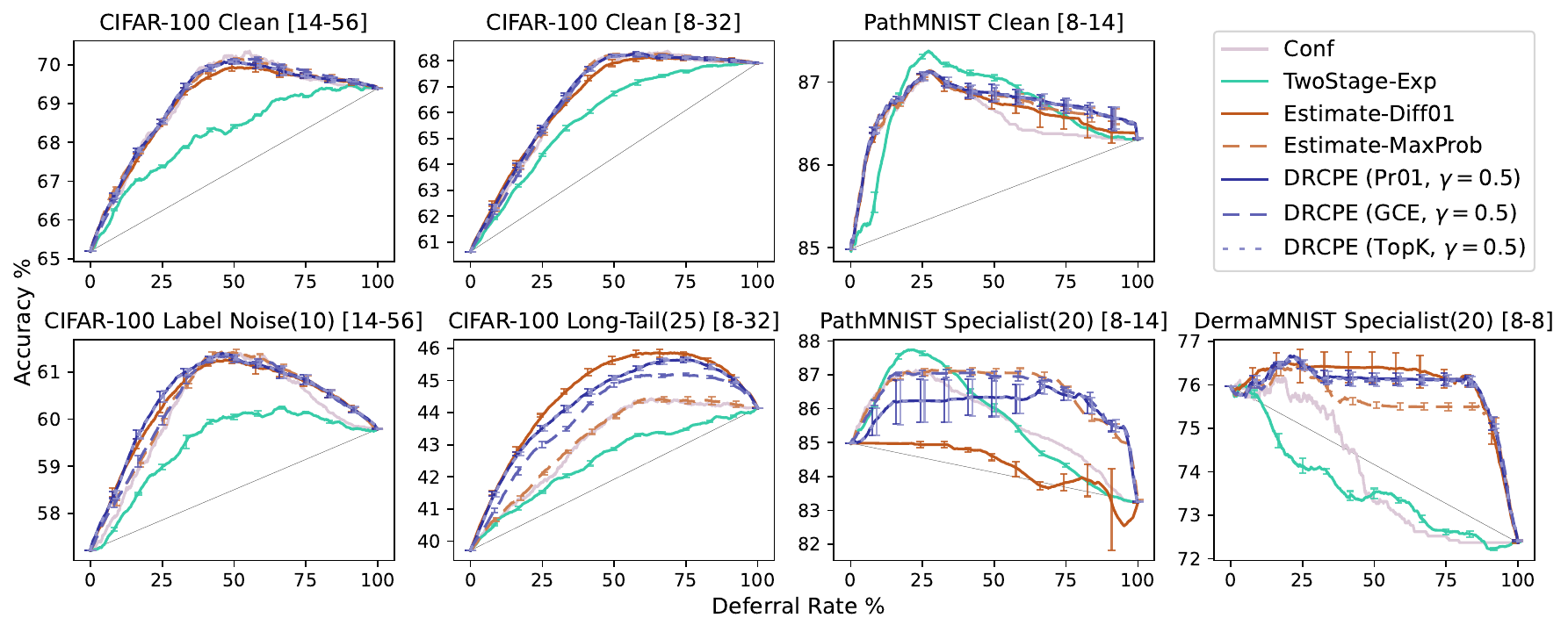}
  \includegraphics[width=\textwidth]{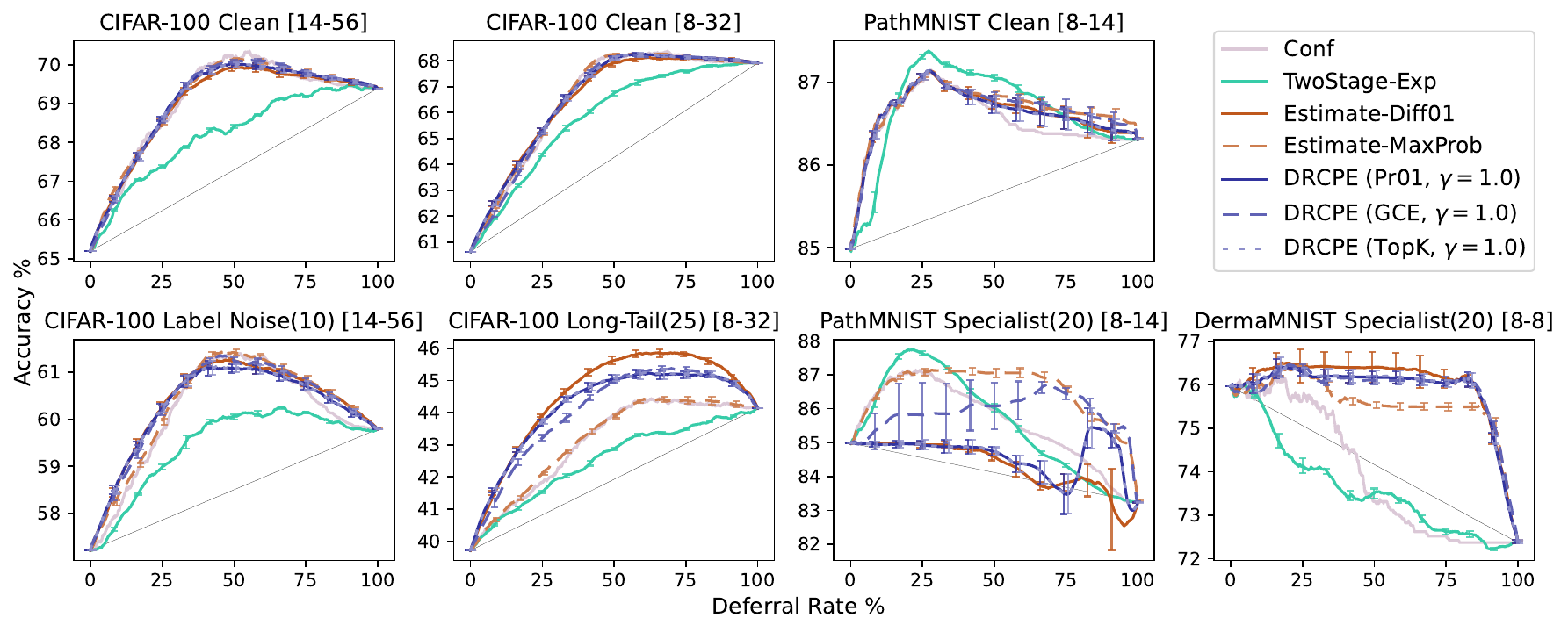}
  \caption{Accuracy-deferral trade-off plots over different $\ell$s and $\gamma \in \{ 0.5, 1.0 \}$.}%
  \label{app:tradeoff-summary}
\end{figure}

\begin{figure}[th]%
  \includegraphics[width=\textwidth]{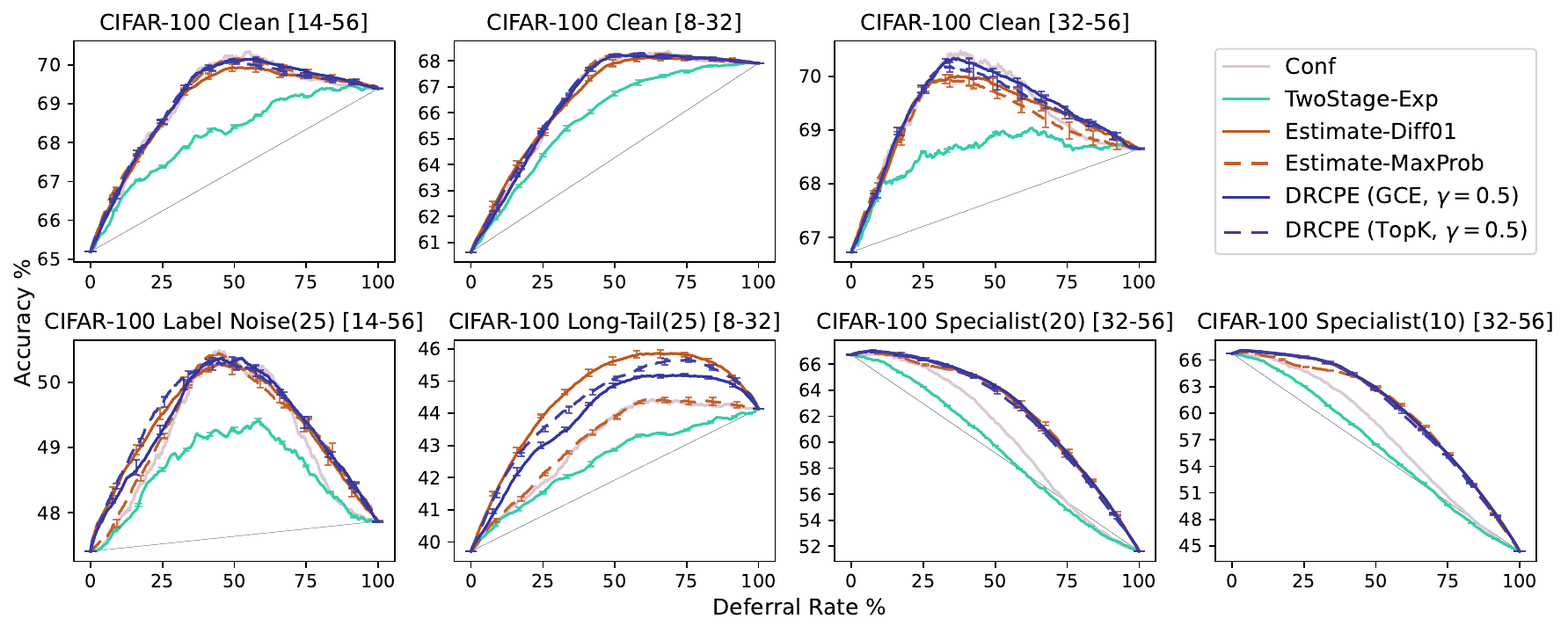}
  \includegraphics[width=\textwidth]{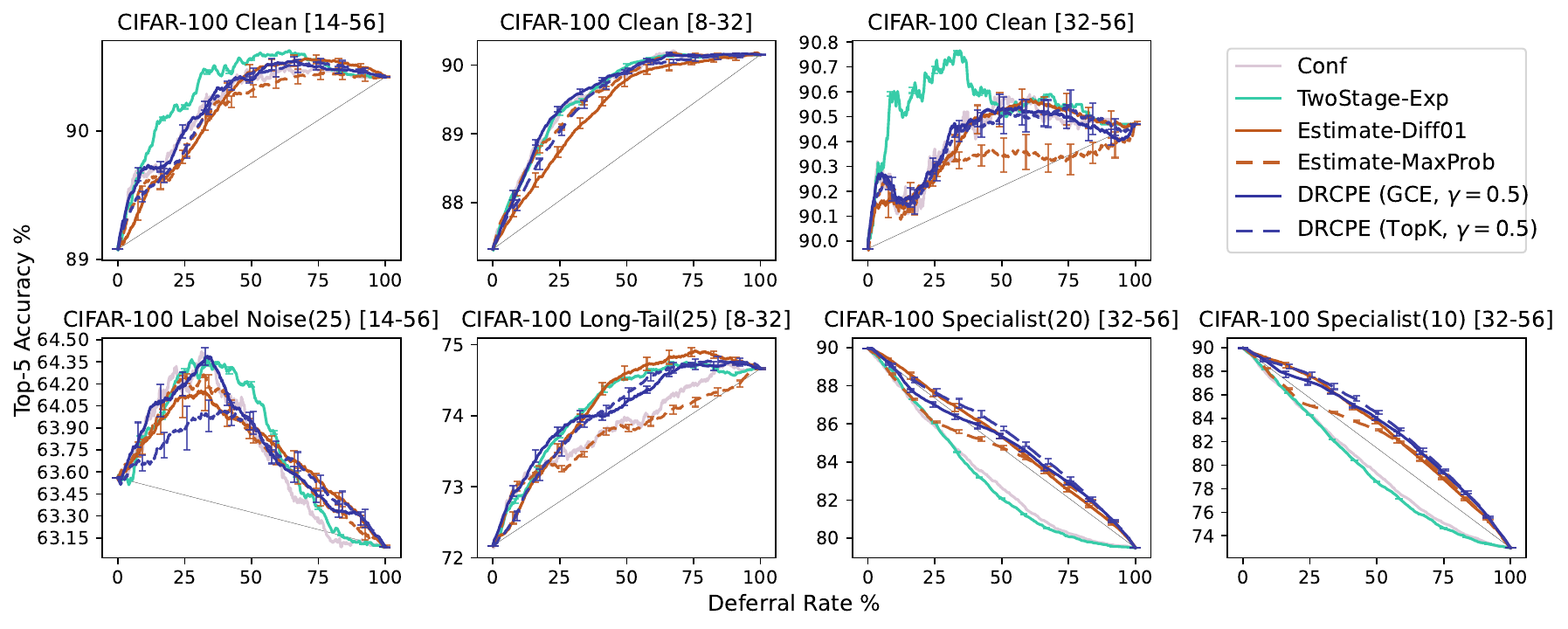}
  \caption{Accuracy-deferral trade-off plots over both accuracy and top-$k$ ($k=5$) performance.}%
  \label{app:topk-summary}
\end{figure}

\begin{figure}[th]%
  \includegraphics[width=\textwidth]{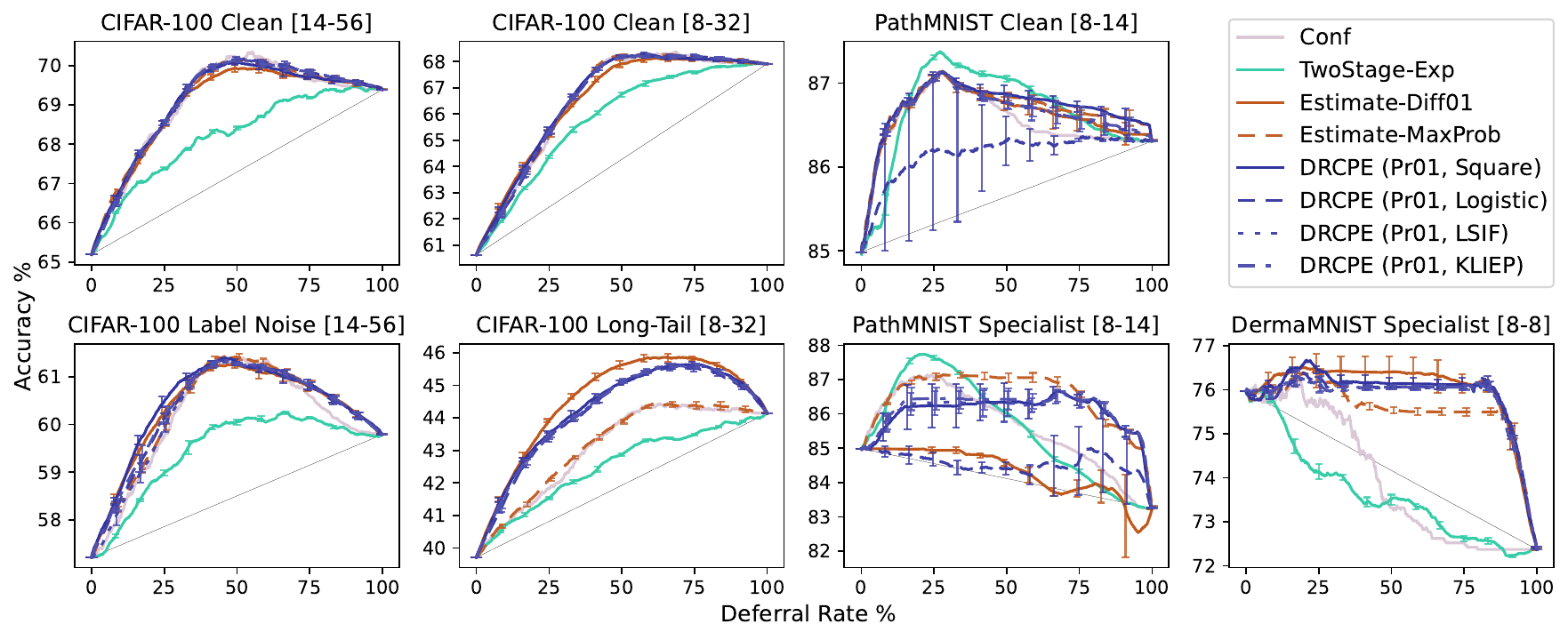}
  \includegraphics[width=\textwidth]{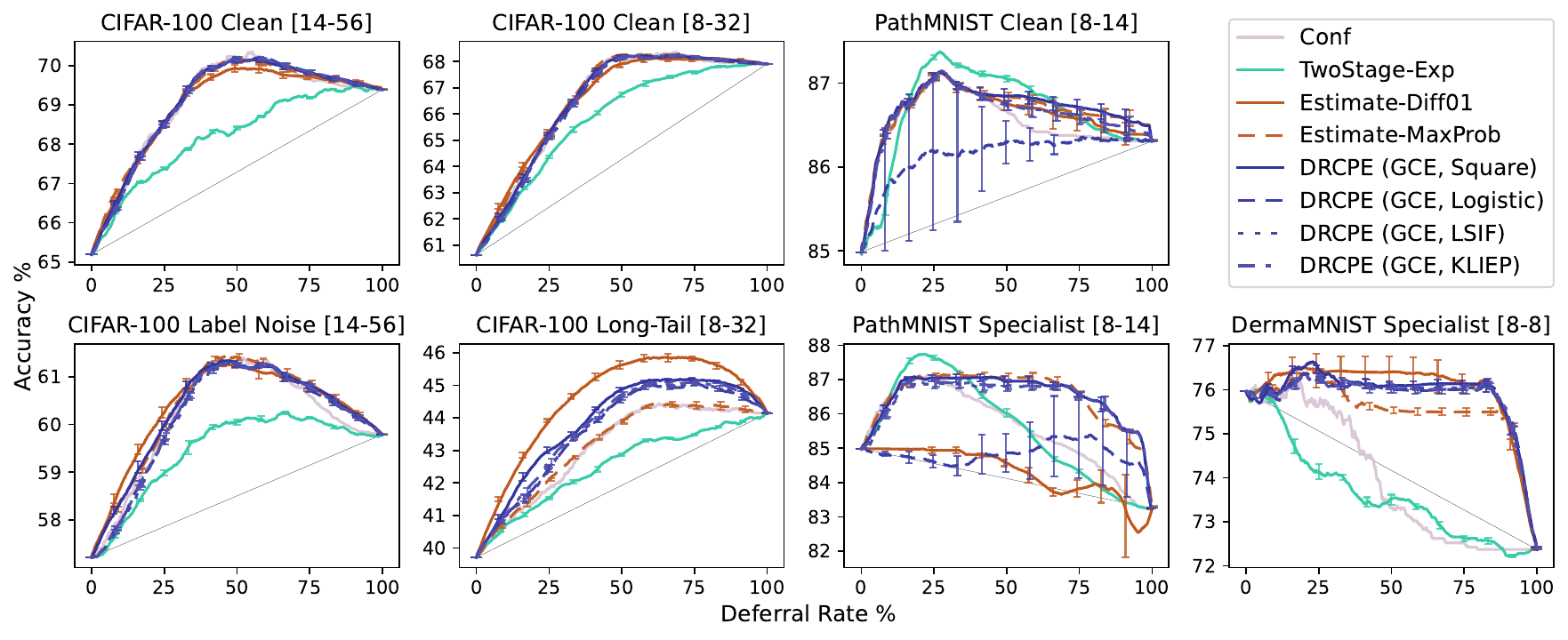}
  \caption{Accuracy-deferral trade-off plots over different DR partial losses (with fixed $\gamma = 0.5$).}%
  \label{app:partials-summary}
\end{figure}

\end{document}